\pdfoutput=1
\documentclass[11pt]{article}
\usepackage[margin=1in]{geometry}
\usepackage{amsmath,amssymb,mathtools,amsthm}
\usepackage{bm}
\usepackage{enumitem}
\usepackage{hyperref}
\usepackage{xcolor}
\usepackage{caption}       
\usepackage{microtype}
\usepackage{booktabs}
\usepackage{float}
\usepackage{algorithm}
\usepackage{algorithmic}
\usepackage{placeins}
\usepackage{graphicx}
\usepackage{wrapfig}

\usepackage[
  backend=biber,
  style=numeric,
  maxbibnames=999,  
  maxcitenames=2    
]{biblatex}
\theoremstyle{remark}
\newtheorem{remark}{Remark}
\usepackage{authblk}

\title{\textbf{Predicting When Random Low-Dimensional Reparameterizations Train Neural Networks}}

\author[1,2]{Andrew Cheng\textsuperscript{\ensuremath{\dagger}}}
\author[3]{Ali Eslamian\textsuperscript{\ensuremath{\dagger}}}
\author[4]{Jie Cheng}
\author[5]{Mehdi Zargham}
\author[3,6]{Qiang Cheng\textsuperscript{*}}

\affil[1]{Department of Computer Science,
Tsinghua University, Beijing, China}

\affil[2]{Department of Computer Science,
The University of Manchester, Manchester, UK}

\affil[3]{Department of Computer Science,
University of Kentucky, Lexington, KY, USA}

\affil[4]{Department of Computer Science,
Miami University, Oxford, OH, USA}

\affil[5]{Department of Computer Science,
University of Dayton, Dayton, OH, USA}

\affil[6]{Institute for Biomedical Informatics,
University of Kentucky, Lexington, KY, USA}

\date{}
\hypersetup{
  colorlinks=true,
  linkcolor=blue!60!black,
  citecolor=blue!60!black,
  urlcolor=blue!60!black
}

\newtheorem{theorem}{Theorem}
\newtheorem{lemma}{Lemma}
\newtheorem{definition}{Definition}

\newtheorem{corollary}[theorem]{Corollary}
\newtheorem{approximation}{Approximation}

\begin{document}
\maketitle

\vspace{-3.8em}
\begin{center}
\small
\textsuperscript{\ensuremath{\dagger}}These authors contributed equally to this work.\\
\textsuperscript{*}Corresponding author: Qiang Cheng,
\href{mailto:qiang.cheng@uky.edu}{\texttt{qiang.cheng@uky.edu}}
\end{center}

\begin{abstract}
Neural networks can often be trained or fine-tuned using
\emph{random low-dimensional reparameterization}: instead of optimizing all
model parameters directly, one optimizes a small latent vector that is mapped
into a full parameter update through a frozen random map. This procedure
restricts training to a randomly chosen low-dimensional search space and
raises a practical question: how large must this search space be to access
parameters that achieve low loss? 
Random low-dimensional training is known to exhibit a sharp accessibility
transition governed by the geometry of the low-loss region. We first express
this phenomenon in an equivalent conic form, in which the transition for
compact convex targets is centered at the statistical dimension of the polar
cone. Our main theoretical contribution is an operational quadratic analysis
that goes beyond this geometric characterization. We derive an
orientation-resolved master formula that predicts the random-slice residual
from both the curvature spectrum and the reference-to-solution displacement
profile, thereby making explicit how displacement orientation relative to
curvature affects the required latent dimension. The formula yields a new
self-consistent isotropic-orientation predictor and, in its conservative
radius-only specialization, recovers the earlier Gaussian-width quadratic
bound.
Building on this analysis, we introduce Random Mapping Networks (RaMaN), a
scalable framework that instantiates the predicted latent dimension $d$ using
structured Hadamard mappings or seed-regenerated Gaussian maps for a network
with $P$ parameters. These constructions eliminate the $O(dP)$ frozen-map
storage required by dense random generators and reduce optimizer-state memory
from $O(P)$ to $O(d)$. We further develop matrix-free curvature approximations
and sweep-free dimension-selection procedures, and design a benchmark that
measures accessibility transitions across tasks, architectures, and map
families while accounting for trainable parameters, frozen-map storage,
optimizer-state memory, checkpoint size, and runtime. Across controlled
quadratic and neural-curvature experiments, the orientation-resolved predictor
closely tracks measured transition locations and substantially improves over
orientation-agnostic approximations in settings where displacement direction
matters; end-to-end experiments further demonstrate sharp, protocol-dependent
training transitions across image and language models.
\end{abstract}

\section{Introduction}
\label{sec:intro}

Modern deep neural networks are typically trained by directly optimizing all
parameters of a high-dimensional weight vector
$\theta \in \mathbb{R}^P$, where $P$ may range from millions to billions or more.
Although this direct parameterization is highly expressive, it is also
expensive: optimizers must maintain gradients and auxiliary states for every
parameter, checkpoints scale with $P$, and unrestricted optimization may
overfit when the effective degrees of freedom required by the task are much
smaller than the ambient parameter dimension. This gap between the ambient
dimension of the model and the effective dimension of the optimization
problem motivates low-dimensional reparameterization.

In {\emph{low-dimensional reparameterization}}, the full parameter vector
$\theta \in \mathbb{R}^P$ is not optimized directly. Instead, it is generated
from a much smaller trainable latent vector $z \in \mathbb{R}^d$, with
$d \ll P$, through a fixed or partially fixed mapping
\[
    \theta = g_\omega(z),
\]
where $\omega$ denotes random, structured, or otherwise non-trainable
parameters of the mapping. Equivalently, one may write
\[
    \theta = \theta_{\mathrm{ref}} + \Delta\theta(z),
\]
where $\theta_{\mathrm{ref}}$ is a reference point, such as a random initialization, a pretrained model, or a pilot solution, and $\Delta\theta(z)$ is a constrained parameter displacement generated from the low-dimensional latent variable $z$.
Training then optimizes only $z$, while the target network
performs its standard forward computation using the generated weights
$\theta(z)$.

This idea has a simple geometric interpretation.
Direct training searches over the full parameter space $\mathbb{R}^P$,
whereas low-dimensional reparameterization restricts training to the image
\[
    \operatorname{Im}(g_\omega)
    =
    \{g_\omega(z): z \in \mathbb{R}^d\},
\]
where $\omega$ denotes the frozen random draw defining the map.
When $g_\omega$ is smooth, this image has at most $d$ local degrees of
freedom, because the Jacobian of $g_\omega$ at any
$z\in\mathbb{R}^d$ has rank at most $d$.
Such a restriction can be beneficial if the low-loss region is wide in most
parameter directions and constrained only along a relatively small number of
important directions. In this case, the low-dimensional image need not cover
the full ambient space $\mathbb{R}^P$; it only needs enough degrees of freedom
to satisfy these effective constraints. We refer to the corresponding number
of constrained directions as the \emph{effective codimension} of the
low-loss region. This viewpoint provides a geometric explanation for
empirical observations that neural networks can often be trained or
fine-tuned successfully in surprisingly small random subspaces
\cite{aghajanyan2021intrinsic,li2018measuring}.

However, the central challenge is not merely whether a low-dimensional
random search space can reach a low-loss region. Prior work has already
shown that random-subspace trainability exhibits a sharp geometric
transition as the latent dimension increases
\cite{li2018measuring,larsen2022degrees}. The practically important
question is whether this transition can be predicted from quantities that
are accessible without performing an exhaustive sweep over latent
dimensions. In particular, given a frozen map
$g_\omega:\mathbb{R}^d\to\mathbb{R}^P$ drawn from a random or structured
random family, we would like to determine a dimension $d$ for which
\[
\operatorname{Im}(g_\omega)\cap S_\varepsilon\neq\emptyset
\]
with high probability over $\omega$, where
\[
S_\varepsilon
=
\left\{
\theta:
\mathcal{L}(\theta)\le\mathcal{L}^\star+\varepsilon
\right\}
\]
is an $\varepsilon$-low-loss sublevel set. A related algorithmic question
is whether gradient-based optimization over the resulting latent variables
can exploit such an accessible low-loss region under realistic training
budgets and map constructions.

Recent approaches provide substantial evidence that optimizing a small
number of trainable variables can sometimes approach or match
full-parameter training, including intrinsic-dimension
training~\cite{li2018measuring}, low-dimensional
fine-tuning~\cite{aghajanyan2021intrinsic}, random-basis
parameterizations~\cite{nooralinejad2022pranc,koohpayegani2023nola}, and
Mapping Networks~\cite{sen2026mapping}. Mapping Networks, for example,
optimize a low-dimensional latent vector $z\in\mathbb{R}^d$ that modulates
a randomly initialized mapping network with frozen base weights to generate
the $P\gg d$ parameters of a target network. These methods establish the
practical feasibility of low-dimensional parameterizations, but they do
not by themselves provide a prospective rule for selecting the latent
dimension from the local geometry of the target loss.

Larsen et al.~\cite{larsen2022degrees} provided an important geometric
explanation for this phenomenon. They characterized random-subspace
trainability through the angular Gaussian width of a low-loss sublevel set
viewed from the initialization, relating the observed transition to the
geometry of the target region. This characterization, however, is difficult
to use prospectively: the Gaussian width of an unknown neural-network
low-loss region is generally not directly available before performing the
dimension sweeps one would like to avoid. Moreover, in the quadratic
setting, the resulting radius-only approximation depends on the Hessian
spectrum and the scalar distance from the reference point to a minimum, but
does not explicitly resolve how that displacement is oriented relative to
the curvature eigendirections. This leaves an operational gap between a
geometric characterization of the transition and a computable predictor
that can be used for latent-dimension selection.

We address this gap in two stages. First, for compact convex target sets,
we express the known random-slice accessibility transition in an equivalent
conic form, in which its location is characterized by the statistical
dimension of the polar cone generated by the shifted low-loss region. This
formulation provides a convenient effective-codimension interpretation and
a bridge to the local quadratic analysis. Our main theoretical contribution
is then an orientation-resolved quadratic predictor. For a
positive-semidefinite local curvature model, we derive a master formula that
predicts the random-slice residual using both the curvature spectrum and the
complete reference-to-solution displacement profile. Consequently, two
problems with the same Hessian spectrum and the same displacement norm can
have different predicted latent dimensions when their displacements are
oriented differently relative to the curvature eigenspaces. A new
self-consistent isotropic-orientation predictor follows by imposing an
equal-energy displacement model, while an orientation-uniform radius-only
specialization recovers the earlier quadratic bound of
Larsen et al.~\cite{larsen2022degrees}. The latter provides a conservative
choice when directional displacement information is unavailable.

This local quadratic setting is both analytically tractable and empirically
relevant: studies of deep-network Hessians commonly report a small number
of large outlier eigenvalues together with a bulk of near-zero eigenvalues,
corresponding to a few stiff directions and many nearly flat
directions~\cite{sagun2017empirical,ghorbani2019investigation,
papyan2020traces}. Because exact spectral information is unavailable at
modern network scale, we further develop matrix-free approximations based
on leading curvature directions and stochastic estimates of the unresolved
spectral mass. These approximations support practical latent-dimension
selection without requiring a full sweep over candidate dimensions.

Building on this analysis, we introduce \emph{Random Mapping Networks}
(RaMaN), a scalable framework that couples latent-dimension selection with
fixed random low-dimensional reparameterizations. RaMaN supports
orientation-resolved selection when an estimated displacement profile is
available, radius-based selection when only a displacement scale is known,
and nested adaptive expansion when neither estimate is sufficiently
reliable. The selected dimension is instantiated using scalable map
families, including structured Hadamard mappings~(SHM), motivated by fast
structured Hadamard constructions such as SRHT and
Fastfood~\cite{ailon2009fast,tropp2011improved,le2013fastfood}, and
seed-regenerated Gaussian maps. A global seed-regenerated Gaussian
construction realizes the uniformly random subspace assumed by
Theorem~\ref{thm:hessian_effective_rank}, while the default layer-wise
Gaussian construction corresponds to the independent layer-wise setting of
Corollary~\ref{cor:layerwise} under its product-structure assumptions.
SHM provides a structured, memory-efficient alternative whose empirical
behavior is evaluated separately. These constructions avoid storing a
dense $P\times d$ random mapping object and reduce optimizer-state memory
from $O(P)$ to $O(d)$.

Our goal is not to claim that all large neural networks can be trained
through only a few thousand latent variables. Rather, we ask when a
substantial reduction in trainable dimension is geometrically plausible,
whether the required dimension can be predicted from measurable local
curvature and displacement information, and how map structure, optimizer,
and training protocol affect the resulting transition. This perspective
turns low-dimensional reparameterization from a purely empirical
dimension-sweep procedure into a framework with explicit, falsifiable
predictions.

Our main contributions are summarized as follows:
\begin{itemize}
\item \textbf{Theory.}
Building on the Gaussian-width characterization of random-subspace training, we recast the accessibility transition in an equivalent conic/statistical-dimension form and derive a new orientation-resolved quadratic master formula. Unlike prior radius-only quadratic bounds, the master-formula predictor depends jointly on the curvature spectrum and the reference-to-solution displacement profile. Two useful specializations of this predictor follow: a new self-consistent isotropic-orientation predictor, and an orientation-uniform predictor that recovers the earlier radius-only bound as a special case.

\item \textbf{Method.}
We develop Random Mapping Networks (RaMaN), which couple
curvature- and displacement-informed latent-dimension selection with
scalable frozen random reparameterizations. RaMaN supports
orientation-resolved and radius-based sweep-free selection together with
nested adaptive dimension expansion that preserves the previously active
random search space. To instantiate the selected dimension without storing
a dense $P\times d$ map, we develop a seeded Structured Hadamard Mapping
(SHM) based on nested selection of Hadamard basis directions,
$\operatorname{crop}\,H D I_{\Omega}$, and a seed-regenerated Gaussian
construction whose active columns form prefixes of a fixed virtual Gaussian
matrix. These constructions preserve the fixed-slice interpretation
underlying the theory while providing scalable matrix-free training.

\item \textbf{Measurement.}
We design a benchmark for measuring the phase transition of random
low-dimensional training across tasks, architectures, and mapping
families. The benchmark evaluates the orientation-resolved
master-formula predictor and its isotropic-orientation and
orientation-uniform specializations, compares Gaussian and structured
random-map families and practical dimension-selection strategies, and
reports not only trainable parameters but also frozen-map storage,
optimizer-state memory, checkpoint size, and wall-clock cost.
\end{itemize}

\section{Related Work}
\label{sec:related_work}

\paragraph{Low-dimensional reparameterization and intrinsic dimension.}
A growing line of work suggests that the number of degrees of freedom required
to train or adapt a neural network can be much smaller than its ambient
parameter dimension. Li et al.~\cite{li2018measuring} introduced the notion of
the intrinsic dimension of an objective landscape by training networks within
randomly oriented linear subspaces and measuring the subspace dimension at
which successful solutions first appear. Their large-scale experiments also
used structured Fastfood projections~\cite{le2013fastfood} to avoid explicitly
storing dense random projection matrices, motivating the use of scalable
structured maps for random-subspace training. Li et al.\ provided important
empirical evidence for a sharp dimension-dependent transition but did not
derive a geometric theory predicting its location.

Larsen et al.~\cite{larsen2022degrees} subsequently provided such a geometric
explanation using Gordon's escape-through-a-mesh theorem. They defined the
\emph{local angular dimension} of a low-loss sublevel set as the squared
Gaussian width of its spherical projection about the initialization and
related the threshold training dimension to the complement of this quantity.
For quadratic wells, they further derived a radius-only bound based on the
Hessian spectrum and the distance from the initialization to a minimum. Our
work builds directly on this geometric perspective. For compact convex
targets, we express the same accessibility geometry through the statistical
dimension of the cone generated by the shifted low-loss set and its polar.
Our main distinction is operational: we derive an orientation-resolved
quadratic master formula that retains the complete
reference-to-solution displacement profile relative to the curvature
eigendirections. Its self-consistent isotropic-orientation specialization
provides a new predictor, whereas its conservative orientation-uniform
specialization recovers the radius-only quadratic bound of
Larsen et al.

Aghajanyan et al.~\cite{aghajanyan2021intrinsic} extended the
intrinsic-dimension viewpoint to language-model fine-tuning, showing that
pretrained models can often be adapted through very low-dimensional random
reparameterizations. VeRA~\cite{kopiczko2024vera} further reduces the
trainable parameter count in parameter-efficient fine-tuning by sharing
frozen random low-rank matrices across layers and learning only small scaling
vectors. These studies demonstrate the practical utility of low-dimensional
and frozen-random parameterizations, while the required latent dimension is
typically selected or assessed empirically rather than predicted from the
local curvature and displacement geometry of the target loss.

\paragraph{Random-basis and generator-based parameterizations.}
Gressmann et al.~\cite{gressmann2020improving} studied neural-network
optimization in low-dimensional random bases and introduced independent
random projections for different parts of a network together with
on-demand pseudorandom generation from shared seeds to reduce projection
storage. Their optimization strategy differs fundamentally from the frozen
random-slice setting considered here: they found that keeping a projection
fixed could impair optimization and instead redraw the random subspace
during training. RaMaN retains a fixed random map because its geometric question concerns whether a particular frozen random search space can access a prescribed low-loss region; seed regeneration is used as a scalable implementation mechanism.

Other methods parameterize neural weights through frozen random bases.
PRANC~\cite{nooralinejad2022pranc} represents a model as a linear combination
of randomly initialized frozen basis networks, training only the combination
coefficients. NOLA~\cite{koohpayegani2023nola} applies a related idea to
parameter-efficient fine-tuning by representing low-rank LoRA
updates~\cite{hu2022lora} as linear combinations of random basis matrices.
These approaches reduce trainable parameters and checkpoint size but do not
provide a prospective rule for selecting the dimension of a fixed random
search space from the local geometry of the target objective.

Mapping Networks~\cite{sen2026mapping} take a generator-based approach in
which a compact trainable latent representation is mapped into the much
higher-dimensional parameter space of a target network. This provides
further evidence that target-network parameters can be controlled through
far fewer trainable degrees of freedom. However, such a generator-based
construction does not by itself characterize the accessibility threshold of
a particular frozen random parameterization. Moreover, a dense linear random
map from $\mathbb{R}^d$ to $\mathbb{R}^P$, when used to realize such a
low-dimensional parameterization directly, requires $O(dP)$ frozen storage.
RaMaN therefore combines dimension prediction with matrix-free structured or
seed-regenerated map realizations, rather than treating random-map generation
itself as the principal contribution.

\paragraph{Conic geometry and random intersection theory.}
Our theoretical development builds on tools from high-dimensional convex
geometry and Gaussian process theory. Gordon's escape-through-a-mesh theorem
provides foundational probability bounds for when random subspaces avoid a
subset of the sphere~\cite{gordon1988milman}. Conic integral geometry sharpens
this picture for convex cones: random cone intersections undergo phase
transitions governed by statistical dimension
\cite{amelunxen2014living}. Related Gaussian-width and high-dimensional
probability tools are summarized by Vershynin~\cite{vershynin2018high} and
have played central roles in convex optimization, compressed sensing, and
high-dimensional statistics.

Larsen et al.~\cite{larsen2022degrees} brought this random-intersection
viewpoint directly to neural-network training through the Gaussian width of a
loss sublevel set viewed from the initialization. Our conic analysis should
therefore be viewed as a refinement and reformulation of this geometric
foundation rather than as the first random-intersection characterization of
low-dimensional training. For compact convex targets, the spherical set used
in the Gaussian-width analysis is the spherical section of the cone generated
by the shifted target, allowing the transition to be expressed equivalently
through the statistical dimension of that cone or its polar. This
effective-codimension formulation provides the geometric foundation for our
main extension: an orientation-resolved quadratic residual analysis that
turns the otherwise difficult-to-evaluate geometric threshold into an
operational predictor based on curvature and displacement information.
Together with its isotropic-orientation and orientation-uniform
specializations, this predictor motivates the sweep-free dimension-selection
procedures used by RaMaN.

\section{Methodology}
\label{sec:method}
\subsection{Problem Setup and Random Reparameterization}
\label{subsec:notation}

Let $\theta \in \mathbb{R}^P$ denote the parameter vector of the target
network, where $P$ is the total number of parameters, and let
$\mathcal{L}(\theta)$ denote the training objective. We consider
low-dimensional reparameterizations of the form
\[
\theta = g_\omega(z),
\qquad
z \in \mathbb{R}^d,
\quad
d \ll P,
\]
where $z$ is trainable and $\omega$ denotes random, structured, or otherwise
non-trainable parameters of the mapping. Equivalently, for fine-tuning or
residual parameterization, we write
\[
\theta(z)
=
\theta_{\mathrm{ref}} + \Delta\theta(z),
\]
where $\theta_{\mathrm{ref}}$ is a reference parameter vector, such as a
pretrained model, a random initialization, or a pilot solution. In the
geometric analysis below, we write
$\theta_0:=\theta_{\mathrm{ref}}$ for this reference point.

For a target tolerance $\varepsilon>0$, define the low-loss sublevel set
\[
S_\varepsilon
:=
\left\{
\theta:
\mathcal{L}(\theta)
\le
\mathcal{L}^\star+\varepsilon
\right\},
\]
where $\mathcal{L}^\star$ denotes the best achievable loss within the
parameter region under consideration. For a set
$\mathcal{A}\subseteq\mathbb{R}^P$, define its conic hull by
\[
\operatorname{cone}(\mathcal{A})
:=
\overline{
\left\{
t a:
t\ge 0,\;
a\in\mathcal{A}
\right\}
},
\]
where the closure ensures that the resulting cone is closed. Given the
reference point $\theta_0$, we write
\[
C_\varepsilon
:=
\operatorname{cone}(S_\varepsilon-\theta_0).
\]
Geometrically, $C_\varepsilon$ contains the rays from $\theta_0$ toward the
low-loss region. Its polar cone is
\[
C_\varepsilon^\circ
:=
\left\{
v\in\mathbb{R}^P:
\langle v,u\rangle\le0
\ \text{for all }u\in C_\varepsilon
\right\}.
\]

For a general neural-network objective, $S_\varepsilon$ and
$C_\varepsilon$ need not be convex. The conic phase-transition results below
therefore apply to localized compact convex targets. In particular, in the
local quadratic setting we use
\[
S
=
S_\varepsilon
\cap
\overline{B}(\theta_0,R_{\mathrm{loc}})
\]
and write
\[
C
=
\operatorname{cone}(S-\theta_0).
\]
For the compact convex targets used in our theorems, this generated cone is
already closed, so no additional closure is required
(Lemma~\ref{lem:cone_reduction}).

For a closed convex cone $C$, its statistical dimension is
\[
\delta(C)
:=
\mathbb{E}_{g\sim\mathcal{N}(0,I_P)}
\left[
\|\Pi_C(g)\|_2^2
\right],
\]
where $\Pi_C(g)$ denotes the Euclidean projection of $g$ onto $C$.
Statistical dimension plays the role of an effective dimension for a convex
cone. Its polar counterpart $\delta(C^\circ)$ will characterize the
effective number of constrained directions that a random low-dimensional
reparameterization must capture.

\begin{remark}[Relation to the local angular dimension of Larsen et al.]
\label{rem:larsen_angular_dimension}
For the localized compact convex target $S$, define its spherical view from
$\theta_0$ by
\[
\Sigma_{\theta_0}(S)
:=
\left\{
\frac{\theta-\theta_0}
{\|\theta-\theta_0\|_2}:
\theta\in S
\right\}.
\]
Since
$C=\operatorname{cone}(S-\theta_0)$,
\[
\Sigma_{\theta_0}(S)
=
C\cap\mathbb{S}^{P-1}.
\]
Larsen et al.~\cite{larsen2022degrees} characterize random-subspace
trainability through the squared Gaussian width of this spherical set,
which they term the local angular dimension. For a closed convex cone,
squared Gaussian width and statistical dimension satisfy
\[
w^2\!\left(C\cap\mathbb{S}^{P-1}\right)
\le
\delta(C)
\le
w^2\!\left(C\cap\mathbb{S}^{P-1}\right)+1
\]
\cite{amelunxen2014living}.
Thus, the statistical-dimension formulation used here provides an equivalent
conic refinement of the earlier Gaussian-width perspective: the corresponding
Gaussian-width and statistical-dimension transition centers differ by at
most one dimension. We use this conic formulation as a convenient geometric
foundation for the orientation-resolved quadratic analysis developed below.
\end{remark}

With this notation, we study two related questions. First, can the latent
dimension required for a frozen random search space to access a prescribed
low-loss region be predicted from measurable properties of the local loss
geometry, rather than identified through an exhaustive sweep over $d$?
Second, once such a low-loss region is accessible within the random
parameterization, how effectively can gradient-based optimization over $z$
reach it under practical training conditions?

\subsection{Critical Dimension and Hessian Predictors}
\label{subsec:main_theory}

We first state the geometric accessibility transition underlying random
low-dimensional reparameterization and then address the more operational
question of predicting its location from local curvature and displacement
information. Choosing $d$ too small makes access to a prescribed low-loss
region unlikely, whereas choosing it unnecessarily large weakens the
computational advantages of low-dimensional training.


\begin{figure}[!tbh]
\centering
\includegraphics[
width=\linewidth,
height=0.5\textheight,
keepaspectratio,
page=1
]{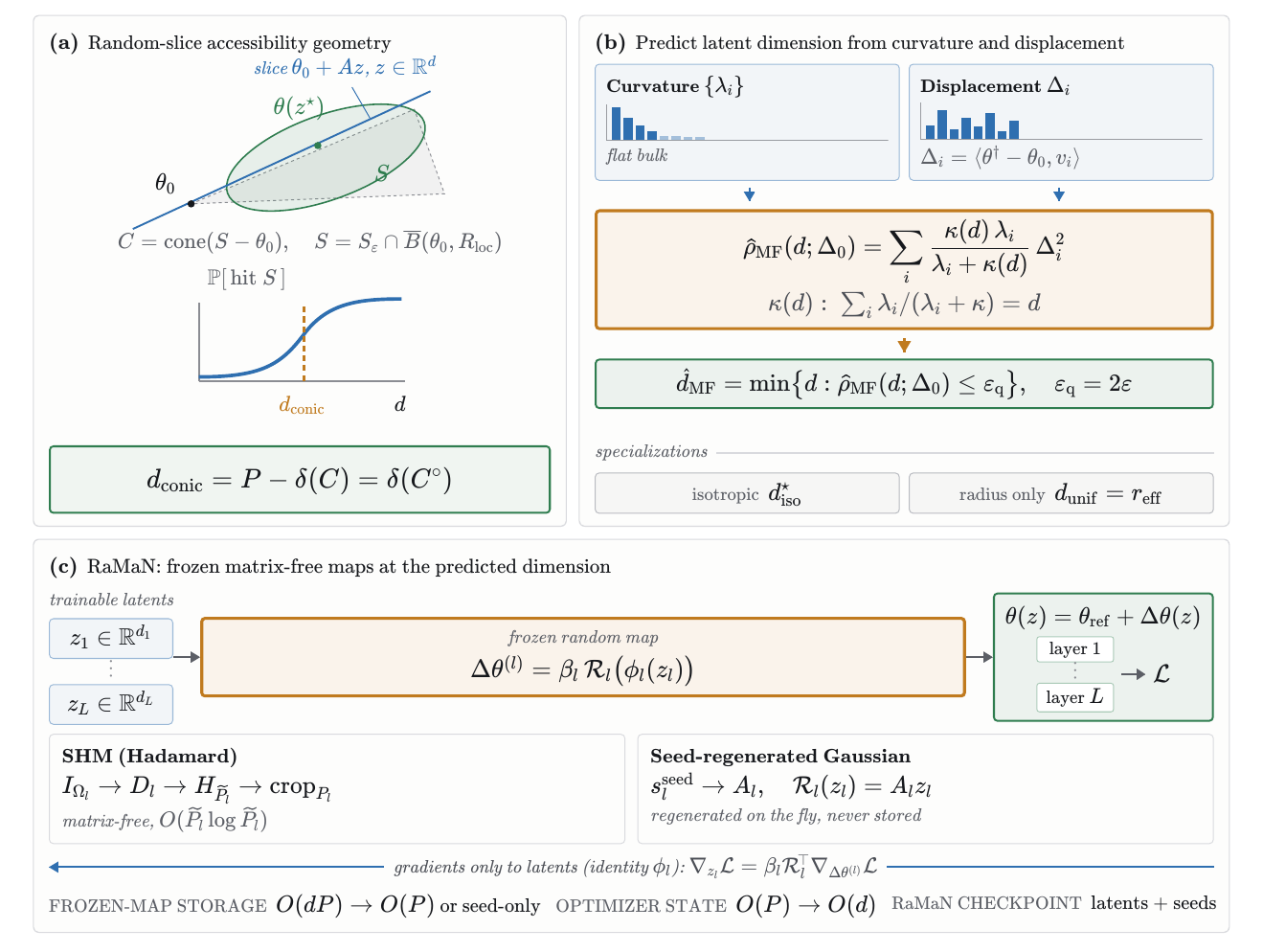}
\caption{\textbf{Overview of Random Mapping Networks (RaMaN).}
(a) Random low-dimensional training is formulated as an intersection problem:
a random affine slice from $\theta_0$ succeeds when it intersects the localized
target $S$, with $C=\operatorname{cone}(S-\theta_0)$ and the conic transition
centered at $d_{\mathrm{conic}}=\delta(C^\circ)$.
(b) In the local quadratic regime, the transition is estimated from the
curvature spectrum and reference-to-solution displacement using the
orientation-resolved master-formula predictor, with isotropic-orientation
and radius-only effective-rank specializations when less directional
information is available.
(c) RaMaN instantiates the predicted latent dimension with frozen
matrix-free maps, including Structured Hadamard Mapping (SHM) and
seed-regenerated Gaussian maps. Only the latent variables $\{z_l\}_{l=1}^{L}$ are trained; each generates a layer-wise update
$\Delta\theta^{(l)}=\beta_l\mathcal{R}_l(\phi_l(z_l))$, reducing dense-map
storage and optimizer-state memory.}
\label{fig:raman_arch}
\end{figure}

The conic formulation characterizes the accessibility transition through the
effective codimension of the localized low-loss region, quantified by the
statistical dimension of the corresponding polar cone. As discussed in
Remark~\ref{rem:larsen_angular_dimension}, this provides an equivalent
statistical-dimension refinement of the Gaussian-width perspective of
Larsen et al.~\cite{larsen2022degrees}. Because this geometric quantity is
generally intractable for neural-network losses, our main theoretical
development focuses on the local quadratic regime. In this setting, we
derive an orientation-resolved master formula that depends jointly on the
curvature spectrum and the reference-to-solution displacement profile. A
new self-consistent isotropic-orientation predictor follows as one
specialization, while the conservative orientation-uniform specialization
recovers the earlier radius-only quadratic bound. These results motivate the
practical three-mode dimension-selection procedure in
Algorithm~\ref{alg:hessian_dim_selection}. The general conic intersection
result for compact convex target sets is stated and proved in
Appendix~\ref{append:conicphase}.

The theory is stated for a positive-semidefinite Hessian at a local minimum.
Away from such a point, we use either the positive spectral part of the
Hessian or a positive-semidefinite curvature surrogate, such as the
Gauss--Newton or Fisher matrix. These substitutions are practical
approximations rather than consequences of
Theorem~\ref{thm:hessian_effective_rank}.

\begin{theorem}[Conic transition for a localized quadratic target]
\label{thm:hessian_effective_rank}
Let $\theta^\dagger$ be a local minimizer with local quadratic model
\[
\widetilde{\mathcal{L}}(\theta)
=
\mathcal{L}(\theta^\dagger)
+
\tfrac{1}{2}
(\theta-\theta^\dagger)^\top
H
(\theta-\theta^\dagger),
\qquad
H\succeq0,
\]
and, for $\varepsilon>0$ and localization radius
$R_{\mathrm{loc}}>0$, define
\[
S
=
\Bigl\{
\theta:
\tfrac{1}{2}
(\theta-\theta^\dagger)^\top
H
(\theta-\theta^\dagger)
\le \varepsilon
\Bigr\}
\cap
\overline{B}(\theta_0,R_{\mathrm{loc}}),
\]
assumed nonempty, with reference point $\theta_0\notin S$. Let
\[
C
=
\Bigl\{
t(\theta-\theta_0):
t\ge0,\;
\theta\in S
\Bigr\}.
\]
Let $E$ be a uniformly random $d$-dimensional subspace of
$\mathbb{R}^P$, and let $A\in\mathbb{R}^{P\times d}$ be any matrix with
$\operatorname{range}(A)=E$, so that the random slice is
\[
\theta_0+E
=
\bigl\{
\theta(z):=\theta_0+Az:
z\in\mathbb{R}^d
\bigr\},
\]
where $z\in\mathbb{R}^d$ is the trainable latent vector.
Equivalently, $A$ may be taken to have i.i.d.
$\mathcal{N}(0,1)$ entries, in which case $A$ has rank $d$ almost surely
and $E=\operatorname{range}(A)$ is uniformly distributed; the
intersection event depends only on $E$, not on the particular basis $A$.

The slice reaches $S$ if and only if
$E\cap C\neq\{0\}$. Moreover, for every $\eta\in(0,1)$, the intersection
probability is at least $1-\eta$ when
\[
d
\ge
P-\delta(C)+a_\eta\sqrt{P},
\]
and at most $\eta$ when
\[
d
\le
P-\delta(C)-a_\eta\sqrt{P},
\]
where
\[
a_\eta
=
\sqrt{8\log(4/\eta)}.
\]
Thus, the conic transition is centered at
\[
d_{\mathrm{conic}}
:=
P-\delta(C)
=
\delta(C^\circ),
\]
within an $O(\sqrt{P})$ transition window
(Theorem~\ref{thm:random_affine_slice_conic}, proved in
Appendix~\ref{append:conicphase}).
\end{theorem}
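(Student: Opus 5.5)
The argument splits into three parts: the geometric reduction to a cone intersection, the probabilistic transition bound, and the identity for the transition center.

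\textbf{Step 1: $S$ is compact and convex, and $C$ is closed.} The quadratic sublevel set $\{\theta:\tfrac12(\theta-\theta^\dagger)^\top H(\theta-\theta^\dagger)\le\varepsilon\}$ is closed and convex because $H\succeq0$, though it is unbounded when $H$ is singular. Intersecting with $\overline{B}(\theta_0,R_{\mathrm{loc}})$ makes $S$ compact and convex. It is nonempty by assumption and does not contain $\theta_0$, so $K:=S-\theta_0$ is a compact convex set not containing $0$. The set $C=\{tk:t\ge0,k\in K\}$ is then a convex cone; convexity follows by rescaling convex combinations. Closedness is the one point needing care, and this is Lemma~\ref{lem:cone_reduction}. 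Take $t_nk_n\to v$. Then $\|k_n\|\ge\operatorname{dist}(0,K)>0$, so the $t_n$ are bounded. Passing to convergent subsequences gives $v=tk\in C$. Consequently $\Pi_C$ and $\delta(C)$ are well defined, and $C$ is not all of $\mathbb{R}^P$ because $0\notin K$.

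\textbf{Step 2: the intersection equivalence.} Suppose $\theta_0+Az\in S$. Then $Az\in K\subset C$, and $Az\neq0$ since $\theta_0\notin S$. This gives a nonzero point of $E\cap C$. Conversely, suppose $0\neq v\in E\cap C$. Write $v=tk$ with $k\in K$; then $t>0$, so $k=v/t\in E$ and $\theta_0+k\in S\cap(\theta_0+E)$. Since $k\in E=\operatorname{range}(A)$, some $z$ satisfies $Az=k$. Rotational invariance of the Gaussian matrix, together with the almost-sure full rank of $A$, shows that $\operatorname{range}(A)$ is Haar-distributed on the Grassmannian. This justifies the remark that the event depends only on $E$.

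\textbf{Step 3: the probabilistic transition.} I would invoke the approximate kinematic formula of Amelunxen et al.~\cite{amelunxen2014living} for a closed convex cone $C$ and a uniformly random subspace $E$ of dimension $d$, treating $E$ as a cone with $\delta(E)=d$. That result gives:
\[
\delta(C)+d\le P-a_\eta\sqrt{P}\;\Rightarrow\;\Pr\{C\cap E\neq\{0\}\}\le\eta,
\]
\[
\delta(C)+d\ge P+a_\eta\sqrt{P}\;\Rightarrow\;\Pr\{C\cap E\neq\{0\}\}\ge1-\eta,
\]
with $a_\eta=\sqrt{8\log(4/\eta)}$. Rearranging produces exactly the two stated thresholds. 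This is presumably how Theorem~\ref{thm:random_affine_slice_conic} in the appendix is proved, and I would cite it directly.

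\textbf{Step 4: the center identity.} Moreau's decomposition gives $g=\Pi_C(g)+\Pi_{C^\circ}(g)$ with orthogonal summands, so $\|g\|^2=\|\Pi_C g\|^2+\|\Pi_{C^\circ}g\|^2$. Taking expectations yields $P=\delta(C)+\delta(C^\circ)$, hence $P-\delta(C)=\delta(C^\circ)$.

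\textbf{Main obstacle.} The delicate part is matching the hypotheses and constants of the kinematic formula. In particular, one must confirm that the formula applies to a subspace paired with a general closed convex cone, not only to pairs of cones in general position. One must also check that the constant $a_\eta$ is the stated one, or otherwise verify that the cited version yields it. The closedness argument in Step 1 is the only place where compactness and $\theta_0\notin S$ are essential, and I would state it carefully there.
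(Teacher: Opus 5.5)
Your argument follows the paper's proof step for step. The paper proves compactness and convexity of the localized $S$, the exact reduction $(\theta_0+E)\cap S\neq\emptyset \iff E\cap C\neq\{0\}$ of Lemma~\ref{lem:cone_reduction}, Theorem~I of Amelunxen et al.\ applied to the pair $(C,E)$ with $\delta(E)=d$, and Moreau complementarity for $P-\delta(C)=\delta(C^\circ)$. Your closedness argument, which bounds $t_n$ via $\operatorname{dist}(0,K)>0$ and extracts convergent subsequences, is an equivalent variant of the paper's; the paper instead uses compactness of the direction set $\{(\theta-\theta_0)/\|\theta-\theta_0\|_2:\theta\in S\}$.

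The one real gap is the point you list as the main obstacle, and your remark that $C\neq\mathbb{R}^P$ does not close it. The approximate kinematic formula is stated for two closed convex cones \emph{at least one of which is not a subspace}. Since $E$ is a subspace, you must show that $C$ is not a linear subspace of any dimension, including $\{0\}$. General position is not the issue; the Haar rotation handles that. The paper supplies this in Lemma~\ref{lem:cone_reduction}(i) by showing that $C$ contains no line. Suppose $u\neq0$ with $u,-u\in C$, and write $u=s(\theta_1-\theta_0)$ and $-u=t(\theta_2-\theta_0)$ with $\theta_1,\theta_2\in S$ and necessarily $s,t>0$. Adding gives $\theta_0=(s\theta_1+t\theta_2)/(s+t)\in S$ by convexity, contradicting $\theta_0\notin S$. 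Also $C\neq\{0\}$, because any $\theta\in S$ gives a nonzero $\theta-\theta_0\in C$. Hence $C$ is not a subspace, and the formula applies with the stated $a_\eta=\sqrt{8\log(4/\eta)}$. Alternatively, the two-subspace case is trivial: the intersection probability is then $0$ or $1$ according as $\delta(C)+d\le P$ or $>P$, which gives both implications directly. With the no-line argument this case never arises. Note that convexity of $S$ together with $\theta_0\notin S$ is therefore essential here as well, not only in the closedness step.
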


\paragraph{Implications of the theorem.}
Figure~\ref{fig:raman_arch} summarizes the progression from the geometric
phase-transition result to its quadratic predictors and scalable RaMaN
implementation. The identity
\[
d_{\mathrm{conic}}
=
\delta(C^\circ)
\]
gives the key geometric interpretation: the transition is governed by the
\emph{effective codimension} of the low-loss set as seen from $\theta_0$,
namely the effective number of constrained directions that the random slice
must satisfy, rather than directly by the ambient parameter dimension.
Small-dimensional accessibility is therefore predicted only in the fat-cone
regime, $\delta(C)\approx P$, where $S$ is wide in most parameter directions
and narrow along relatively few directions.

Whether this fat-cone regime holds depends on the localized geometry of the
low-loss set relative to the reference point $\theta_0$. In the quadratic
model, such a regime is favored by a Hessian spectrum with a small number of
large eigenvalues and a bulk of near-zero eigenvalues, although the actual
transition also depends on the displacement orientation, loss tolerance, and
localization radius.

Under this spectral structure, the quadratic surrogates introduced below can
predict a transition at a latent dimension substantially smaller than $P$.
The orientation-resolved master formula incorporates both the Hessian
spectrum and the displacement profile, while the orientation-uniform
predictor $r_{\mathrm{eff}}(\varepsilon,R)$ provides a conservative
spectrum-based mean-level estimate when only a displacement radius is
available. This qualitative spectral profile is widely reported in empirical
studies of deep-network Hessians
\cite{sagun2017empirical,ghorbani2019investigation,papyan2020traces},
and our benchmark tests whether the associated quadratic surrogates predict
the measured random-slice transitions.

Thus, the conic theorem identifies the geometric quantity controlling
random-slice accessibility, but it does not by itself provide a practical
way to evaluate that quantity for a neural-network loss. The quadratic
predictors developed next address this operational problem.

\begin{remark}[Two extreme geometries]
\label{rem:two_geometries}
Two examples delimit the regimes.

\textup{(i) Round target.}
If $S$ is a ball of radius $\rho$ centered at distance $D>\rho$ from
$\theta_0$, then $C$ is a circular cone with half-angle
\[
\alpha
=
\arcsin(\rho/D).
\]
Its statistical dimension is approximately
\cite{amelunxen2014living}
\[
\delta(C)
\approx
P\sin^2\alpha
=
P\frac{\rho^2}{D^2},
\]
so
\[
d_{\mathrm{conic}}
\approx
P\left(1-\frac{\rho^2}{D^2}\right).
\]
Thus, random slices are essentially ineffective for small, well-rounded
targets: for a fixed point at distance $D$ from $\theta_0$, a Haar-random
$d$-dimensional slice through $\theta_0$ has expected squared distance
$D^2(1-d/P)$ from that point, giving the typical scale
\[
D\sqrt{\frac{P-d}{P}}.
\]

\textup{(ii) Rank-$k$ quadratic residual.}
If
\[
H
=
\lambda(I_k\oplus0_{P-k}),
\]
then the unlocalized quadratic residual depends only on the $k$ stiff
directions. Under the isotropic displacement model, the
least-squares/master-formula transition can be computed exactly and is
controlled by $k$ rather than by $P$; see
Appendix~\ref{subsec:block_verification}. This example illustrates how a
small effective curvature dimension can produce a low-dimensional
random-slice residual transition. For the localized conic theorem, however,
$\delta(C^\circ)$ also depends on the reference point, tolerance, and
localization radius.

Empirical Hessian studies of deep networks often exhibit the qualitative
local-curvature structure underlying case~(ii), namely a small number of
stiff directions and many nearly flat directions. Our experiments test
whether this local spectral structure yields the small quadratic
random-slice transition predicted by the Hessian-based surrogates.
\end{remark}

The conic transition center $d_{\mathrm{conic}}=\delta(C^\circ)$ is geometric
but generally not directly computable from standard neural-network
diagnostics. We therefore seek an operational quadratic surrogate that
retains information discarded by radius-only approximations, in particular
the orientation of the reference-to-solution displacement relative to the
curvature eigenspaces.

\begin{definition}[Orientation-resolved master-formula predictor]
\label{def:predictors}
Let
\[
H
=
\sum_{i=1}^{P}
\lambda_i v_i v_i^\top,
\qquad
\lambda_i\ge0,
\]
be the eigendecomposition of the positive-semidefinite curvature operator,
and let
\[
\Delta_0
:=
\theta^\dagger-\theta_0,
\qquad
\Delta_i
:=
\langle\Delta_0,v_i\rangle.
\]
Thus, $\Delta_i$ is the component of the reference-to-solution displacement
along the $i$th curvature eigenvector. Let
\[
r
:=
\operatorname{rank}(H),
\qquad
\varepsilon_{\mathrm q}
:=
2\varepsilon,
\]
where $\varepsilon$ is the loss-excess tolerance in the quadratic model and
$\varepsilon_{\mathrm q}$ is the corresponding quadratic-form tolerance.

For each candidate dimension $d\in(0,r)$, let $\kappa(d)>0$ denote the
unique solution of
\begin{equation}
d
=
\operatorname{tr}\!\left[
H\bigl(H+\kappa(d)I\bigr)^{-1}
\right]
=
\sum_{i=1}^{P}
\frac{\lambda_i}{\lambda_i+\kappa(d)}.
\label{eq:kappa_d_main}
\end{equation}
The \emph{orientation-resolved master-formula predictor}, a
deterministic-equivalent mean-level approximation to the expected minimum
quadratic residual at latent dimension $d$, is
\begin{equation}
\widehat{\rho}_{\mathrm{MF}}(d;\Delta_0)
:=
\sum_{i=1}^{P}
\frac{
\kappa(d)\lambda_i
}{
\lambda_i+\kappa(d)
}
\Delta_i^2.
\label{eq:master_predictor_main}
\end{equation}
The corresponding predicted critical dimension is
\begin{equation}
    \widehat d_{\mathrm{MF}}
    :=
    \min
    \left\{
        d\in\{1,\ldots,r-1\}:
        \widehat{\rho}_{\mathrm{MF}}(d;\Delta_0)
        \le
        \varepsilon_{\mathrm q}
    \right\}.
    \label{eq:d_master_main}
\end{equation}
If the reference point already satisfies
$\Delta_0^\top H\Delta_0\le\varepsilon_{\mathrm q}$, we set
$\widehat d_{\mathrm{MF}}=0$. If the set in
\eqref{eq:d_master_main} is empty, we set
$\widehat d_{\mathrm{MF}}=r$.
\end{definition}

The predictor in Definition~\ref{def:predictors} is orientation resolved:
two displacements having the same Euclidean norm can yield substantially
different predicted dimensions if their energy is distributed differently
across the curvature eigenvectors. Directions with large $\lambda_i$ are
more costly because displacement along them contributes more strongly to the
quadratic residual and must therefore be represented more accurately by the
random search subspace.

The master formula is a mean-level predictor rather than an exact
high-probability guarantee. Its derivation, given in
Appendix~\ref{append:reff_derivation}, is exact up to the
deterministic-equivalent approximation for the expected Gaussian-span
projector. Accordingly, $\widehat d_{\mathrm{MF}}$ estimates the quadratic
random-slice transition but is distinct from the rigorous conic transition
center
\[
d_{\mathrm{conic}}
=
\delta(C^\circ)
\]
of Theorem~\ref{thm:hessian_effective_rank}.

The orientation-resolved predictor in
Definition~\ref{def:predictors} requires the complete displacement profile
$\{\Delta_i\}_{i=1}^{P}$. When this directional information is unavailable, the
master formula yields two useful spectrum-based specializations.

\begin{definition}[Specialized Hessian predictors]
\label{def:specialized_predictors}

Adopt the notation of Definition~\ref{def:predictors}. Let $R>0$ denote a
displacement-radius bound satisfying
\[
\|\Delta_0\|_2
\le
R,
\]
and let
\[
r
:=
\operatorname{rank}(H).
\]
When the displacement norm is known exactly, we take
$R=\|\Delta_0\|_2$.

For $d\in(0,r)$, let $\kappa(d)>0$ be determined by
\eqref{eq:kappa_d_main}. All expressions involving $\kappa(d)$ at $d=0$
and $d=r$ are interpreted by continuity, corresponding respectively to
$\kappa\to+\infty$ and $\kappa\to0$.

\textup{(i) Isotropic-orientation predictor.}
Under the isotropic-orientation model with displacement norm $R$,
\[
\Delta_i^2
=
\frac{R^2}{P},
\qquad
i=1,\ldots,P,
\]
the orientation-resolved master formula reduces to
\[
\widehat\rho_{\mathrm{iso}}(d)
=
\frac{R^2}{P}
\sum_{i=1}^{P}
\frac{
\kappa(d)\lambda_i
}{
\lambda_i+\kappa(d)
}
=
\frac{R^2}{P}\,\kappa(d)d.
\]
The \emph{isotropic-orientation predictor}
$d^\star_{\mathrm{iso}}$ is determined by
\begin{equation}
d^\star_{\mathrm{iso}}
=
\sum_{i=1}^{P}
\frac{\lambda_i}{\lambda_i+\kappa^\star},
\qquad
\kappa^\star d^\star_{\mathrm{iso}}
=
\frac{2\varepsilon P}{R^2}.
\label{eq:d_iso_main}
\end{equation}

\textup{(ii) Orientation-uniform predictor.}
For every displacement satisfying
$\|\Delta_0\|_2\le R$, the master formula obeys
\[
\begin{aligned}
\widehat\rho_{\mathrm{MF}}(d;\Delta_0)
&=
\sum_{i=1}^{P}
\frac{
\kappa(d)\lambda_i
}{
\lambda_i+\kappa(d)
}
\Delta_i^2
\\
&\le
\kappa(d)
\sum_{i=1}^{P}\Delta_i^2
\\
&\le
\kappa(d)R^2.
\end{aligned}
\]

Imposing the sufficient condition
$\kappa(d)R^2\le\varepsilon_{\mathrm q}$ gives the
\emph{orientation-uniform predictor}
\begin{equation}
\begin{aligned}
d_{\mathrm{unif}}(\varepsilon,R)
&:=
\sum_{i=1}^{P}
\frac{
\lambda_i
}{
\lambda_i+\varepsilon_{\mathrm q}/R^2
}
\\
&=
\sum_{i=1}^{P}
\frac{
\lambda_iR^2
}{
\lambda_iR^2+\varepsilon_{\mathrm q}
}.
\end{aligned}
\label{eq:d_unif_main}
\end{equation}

We equivalently denote this effective-rank predictor by
\begin{equation}
    r_{\mathrm{eff}}(\varepsilon,R)
    :=
    d_{\mathrm{unif}}(\varepsilon,R).
    \label{eq:reff_main}
\end{equation}
\end{definition}

\begin{remark}[Recovery of the quadratic radius bound of Larsen et al.]
Larsen et al.~\cite{larsen2022degrees} derive, for a positive-definite
quadratic well, the approximation
\[
d_{\mathrm{local}}(\varepsilon,R)
\gtrsim
\sum_i
\frac{r_i^2}{R^2+r_i^2},
\qquad
r_i^2
=
\frac{2\varepsilon}{\lambda_i}.
\]
The corresponding complementary threshold expression is
\[
    P
    -
    \sum_i
    \frac{r_i^2}{R^2+r_i^2}
    =
    \sum_i
    \frac{\lambda_iR^2}
         {\lambda_iR^2+2\varepsilon}
    =
    r_{\mathrm{eff}}(\varepsilon,R),
\]
where $\varepsilon_{\mathrm q}=2\varepsilon$ in
Definition~\ref{def:predictors}. Thus, the orientation-uniform
specialization of the master formula recovers the earlier radius-only
quadratic expression of Larsen et al., while the orientation-resolved
predictor developed here extends it by retaining the full displacement
profile $\{\Delta_i\}_{i=1}^{P}$.
\end{remark}

The three predictors correspond to different levels of displacement
information. The master-formula predictor
$\widehat d_{\mathrm{MF}}$ uses the complete orientation profile
$\{\Delta_i\}_{i=1}^{P}$. The isotropic predictor $d^\star_{\mathrm{iso}}$ replaces
that profile with an equal-energy orientation model, whereas the
orientation-uniform predictor
$d_{\mathrm{unif}}=r_{\mathrm{eff}}$ removes the orientation dependence
through a uniform upper bound. Consequently, the isotropic-orientation
predictor is sharper when its equal-energy assumption is a reasonable
approximation, whereas the orientation-uniform predictor applies when only
a displacement-radius estimate is available.

The proof of Theorem~\ref{thm:hessian_effective_rank} applies the conic
kinematic formula to the cone
$C=\operatorname{cone}(S-\theta_0)$ associated with the localized
quadratic low-loss set $S$. The corresponding general result for compact
convex target sets is stated and proved in
Appendix~\ref{append:conicphase}. Because direct computation of
$\delta(C^\circ)$ is generally intractable for neural-network losses,
Definitions~\ref{def:predictors} and
\ref{def:specialized_predictors} provide operational quadratic-model
surrogates for the corresponding accessibility transition.

These results motivate the three-mode dimension-selection procedure
developed next. When a reliable displacement profile is available, we use
the orientation-resolved master-formula predictor; when only a displacement
radius is available, we use the orientation-uniform predictor
$d_{\mathrm{unif}}=r_{\mathrm{eff}}$; and when neither estimate is
reliable, we use nested adaptive latent-dimension expansion. The
isotropic-orientation predictor $d^\star_{\mathrm{iso}}$ is retained
primarily as a sharper diagnostic predictor when its equal-energy
orientation model is appropriate.

\subsection{Latent Dimension Selection}
\label{subsec:hessian_effective_rank_predictor}

For practical curvature estimation, all eigenvalues, eigenvectors, and
trace estimates are computed from the same positive-semidefinite curvature
operator. This operator is the Hessian when it is positive semidefinite,
its positive spectral part when explicitly estimated, or a standard
positive-semidefinite surrogate such as the generalized Gauss--Newton or
Fisher matrix. For notational simplicity, we continue to denote this
operator by $H$. The resulting quantities are practical surrogates for the
quadratic predictors of Definition~\ref{def:predictors}; when $H$ is
replaced by a positive-semidefinite surrogate, they should not be interpreted
as direct estimators of the rigorous conic quantity
$d_{\mathrm{conic}}=\delta(C^\circ)$.

Define the unresolved spectral mass by
\[
\widehat T_{\mathrm{tail}}
:=
\left[
\widehat{\operatorname{tr}}\,H
-
\sum_{i=1}^{k}\lambda_i
\right]_+,
\]
where $[x]_+:=\max\{x,0\}$. We model the $P-k$ uncomputed eigenvalues by
their common average
\begin{equation}
\widehat\lambda_{\mathrm{tail}}
:=
\frac{
\widehat T_{\mathrm{tail}}
}{
P-k
},
\qquad
k<P.
\label{eq:equal_tail_eigenvalue}
\end{equation}

When $k=P$, all tail terms below are defined to be zero. This
\emph{equal-tail spectral representation} is exact when the unresolved
eigenvalues are equal. Within the deterministic-equivalent master formula,
it also avoids the artificial saturation that can arise when all unresolved
spectral contributions are linearized at zero curvature.

\paragraph{Mode 1: orientation-resolved selection.}
Suppose that a pilot, related-task, or diagnostic solution
$\widehat\theta$ provides the displacement estimate
\[
\widehat\Delta_0
:=
\widehat\theta-\theta_{\mathrm{ref}}.
\]

Its estimated components in the leading curvature eigenspace are
\[
\widehat\Delta_i
:=
\langle \widehat\Delta_0,v_i\rangle,
\qquad
i=1,\ldots,k,
\]
and the remaining displacement energy is
\begin{equation}
\widehat R_{\mathrm{tail}}^2
:=
\left[
\|\widehat\Delta_0\|_2^2
-
\sum_{i=1}^{k}\widehat\Delta_i^2
\right]_+.
\label{eq:tail_displacement_energy}
\end{equation}

For each candidate dimension $d$, let $\widehat\kappa(d)>0$ solve
\begin{equation}
d
=
\sum_{i=1}^{k}
\frac{\lambda_i}
{\lambda_i+\widehat\kappa(d)}
+
(P-k)
\frac{\widehat\lambda_{\mathrm{tail}}}
{\widehat\lambda_{\mathrm{tail}}+\widehat\kappa(d)}.
\label{eq:practical_kappa_d}
\end{equation}

The corresponding practical master formula is
\begin{equation}
\begin{aligned}
\widehat\rho_{\mathrm{MF}}^{(k)}(d;\widehat\Delta_0)
&:=
\sum_{i=1}^{k}
\frac{
\widehat\kappa(d)\lambda_i
}{
\lambda_i+\widehat\kappa(d)
}
\widehat\Delta_i^2
\\
&\quad+
\frac{
\widehat\kappa(d)\widehat\lambda_{\mathrm{tail}}
}{
\widehat\lambda_{\mathrm{tail}}+\widehat\kappa(d)
}
\widehat R_{\mathrm{tail}}^2.
\end{aligned}
\label{eq:practical_master_formula}
\end{equation}

Define the estimated rank of the equal-tail curvature representation by
\begin{equation}
\widehat r
:=
\#\{i\le k:\lambda_i>0\}
+
(P-k)\,
\mathbf{1}\{\widehat\lambda_{\mathrm{tail}}>0\}.
\label{eq:estimated_rank}
\end{equation}

For interior dimensions
$d\in\{1,\ldots,\widehat r-1\}$,
$\widehat\kappa(d)>0$ is determined by
\eqref{eq:practical_kappa_d}. At the two endpoints, we define the
practical residual by continuity:
\[
\widehat\rho_{\mathrm{MF}}^{(k)}
(0;\widehat\Delta_0)
:=
\sum_{i=1}^{k}
\lambda_i\widehat\Delta_i^2
+
\widehat\lambda_{\mathrm{tail}}
\widehat R_{\mathrm{tail}}^2,
\qquad
\widehat\rho_{\mathrm{MF}}^{(k)}
(\widehat r;\widehat\Delta_0)
:=
0.
\]

The raw orientation-resolved estimate is then
\begin{equation}
\widetilde d_{\mathrm{MF}}
:=
\min
\left\{
d\in\{0,\ldots,\widehat r\}:
\widehat\rho_{\mathrm{MF}}^{(k)}
(d;\widehat\Delta_0)
\le
\varepsilon_{\mathrm q}
\right\}.
\label{eq:practical_d_master}
\end{equation}

Since
$\widehat\rho_{\mathrm{MF}}^{(k)}(d;\widehat\Delta_0)$ is nonincreasing
in $d$, the smallest qualifying interior dimension can be found
efficiently by integer bisection.

\paragraph{Mode 2: orientation-uniform radius selection.}
If the full displacement profile is unavailable but a displacement-radius
estimate or upper bound $R$ is available, we use the equal-tail
approximation to the orientation-uniform effective-rank predictor:
\begin{equation}
\begin{aligned}
\widehat r_{\mathrm{eff}}^{\mathrm{eq}}
(\varepsilon,R)
&:=
\sum_{i=1}^{k}
\frac{
\lambda_i R^2
}{
\lambda_i R^2+\varepsilon_{\mathrm q}
}
\\
&\quad+
(P-k)
\frac{
\widehat\lambda_{\mathrm{tail}}R^2
}{
\widehat\lambda_{\mathrm{tail}}R^2
+\varepsilon_{\mathrm q}
}.
\end{aligned}
\label{eq:reff_equal_tail}
\end{equation}

When $R$ is a genuine upper bound on
$\|\theta^\dagger-\theta_{\mathrm{ref}}\|_2$, this retains the
orientation-uniform interpretation of
Definition~\ref{def:specialized_predictors}. When $R$ is estimated
empirically, the resulting quantity should instead be regarded as a
radius-calibrated practical predictor. The equal-tail representation of the
uncomputed spectrum remains an additional approximation.

The compact three-mode selection procedure is summarized in
Algorithm~\ref{alg:hessian_dim_selection}. The orientation-resolved and
radius modes use matrix-free curvature estimates of the leading spectrum
and unresolved spectral mass, while the adaptive mode uses nested
latent-dimension expansion that preserves previously active random
directions and latent coordinates.

\begin{algorithm}[t]
\caption{Three-mode latent-dimension selection for RaMaN: compact version}
\label{alg:hessian_dim_selection}
\begin{algorithmic}[1]
\REQUIRE Reference parameters $\theta_{\mathrm{ref}}$, parameter dimension
$P$, loss tolerance $\varepsilon$, multiplicative safety factor
$\gamma\ge1$, additive calibration margin $b\ge0$, optional curvature
operator $H$, optional displacement estimate $\widehat\Delta_0$, optional
radius estimate $R$, and adaptive-expansion controls.

\STATE Set $\varepsilon_{\mathrm q}\leftarrow2\varepsilon$.

\IF{a reliable curvature operator $H$ and displacement estimate
$\widehat\Delta_0$ are available}
\STATE Estimate the leading curvature eigenpairs and unresolved
spectral tail.
\STATE Compute the orientation-resolved estimate
$\widetilde d_{\mathrm{MF}}$ using
\eqref{eq:tail_displacement_energy}--\eqref{eq:practical_d_master}.
\STATE Set
\[
d
\leftarrow
\min\left\{
P,
\left\lceil
\gamma\widetilde d_{\mathrm{MF}}+b
\right\rceil
\right\},
\qquad
\mathrm{mode}\leftarrow\mathrm{MF}.
\]

\ELSIF{a reliable curvature operator $H$ and radius estimate $R$ are
available}
\STATE Estimate the leading curvature eigenvalues and unresolved
spectral tail.
\STATE Compute
$\widehat r_{\mathrm{eff}}^{\mathrm{eq}}(\varepsilon,R)$ using
\eqref{eq:reff_equal_tail}.
\STATE Set
\[
d
\leftarrow
\min\left\{
P,
\left\lceil
\gamma
\widehat r_{\mathrm{eff}}^{\mathrm{eq}}
(\varepsilon,R)
+b
\right\rceil
\right\},
\qquad
\mathrm{mode}\leftarrow\mathrm{Radius}.
\]

\ELSE
\STATE Apply nested adaptive expansion to obtain $d$ and an
adaptive success/failure flag.
\STATE Set $\mathrm{mode}\leftarrow\mathrm{Adaptive}$.
\ENDIF

\RETURN Selected latent dimension $d$, selection mode, and, for the
adaptive mode, the success/failure flag.
\end{algorithmic}
\end{algorithm}

Algorithm~\ref{alg:hessian_dim_selection} separates prediction from
fallback. The orientation-resolved mode is preferred when a pilot or
related-task displacement provides a reliable estimate of the displacement
orientation relative to the curvature eigenspaces. The radius mode discards
this directional information and is therefore more conservative when $R$
is a valid displacement bound. The adaptive mode makes no quadratic-model
prediction and instead identifies a sufficient dimension through successive
training runs.

The multiplicative factor $\gamma$ and additive margin $b$ convert the raw
mean-level predictor into a practical operating dimension. They are
calibration parameters rather than consequences of the conic theorem or
high-probability guarantees for the master formula. The conic theorem
provides a rigorous transition window centered at
$d_{\mathrm{conic}}=\delta(C^\circ)$, but it does not control the
deterministic-equivalent approximation error, the spectral-tail
approximation, or the accuracy of a pilot displacement estimate.

Nestedness is essential in the adaptive mode. When the dimension is
expanded from $d$ to $d_{\mathrm{new}}$, all previously active random
directions and latent coordinates must remain unchanged. For a
seed-regenerated Gaussian map, this is achieved by treating the active
matrix as a column prefix of a fixed virtual Gaussian matrix generated
from the same seed. For SHM, the coordinate sets are likewise generated as
prefixes of one fixed seeded ordering:
\[
\Omega(d)
=
(\omega_1,\ldots,\omega_d),
\qquad
\Omega(d_{\mathrm{new}})
=
(\omega_1,\ldots,\omega_{d_{\mathrm{new}}}),
\qquad
d<d_{\mathrm{new}},
\]
where $\Omega(d)$ is a prefix of $\Omega(d_{\mathrm{new}})$, while the
sign operator and Hadamard transform are retained unchanged. These
constructions ensure
\[
\mathcal{R}_{d_{\mathrm{new}}}([z_d;0])
=
\mathcal{R}_d(z_d).
\]
Resampling the Gaussian columns or SHM coordinate ordering after an
expansion would change the underlying random subspace rather than enlarge
the existing one. For the current $\mathrm{HDI}$ SHM implementation, the
sign pattern is also kept fixed to preserve exact parameterization
continuity; its active signs can be absorbed into the latent coordinates
and therefore do not change the reachable SHM subspace.

For AdamW dimension sweeps and adaptive expansion, we initialize the
learning-rate schedule across latent dimensions using
\[
\alpha(d)
=
\alpha(d_0)\sqrt{\frac{d_0}{d}}.
\]
This scaling is motivated by the dimension dependence of the induced
parameter-space update under the Gaussian map. For AdamW, however,
adaptive moment normalization modifies the exact step scaling, so this rule
should be regarded as an optimization heuristic rather than as part of the
geometric dimension-selection theory.

\begin{remark}[Conservative versus equal-tail correction]
\label{rem:tail_correction}
The equal-tail correction in \eqref{eq:reff_equal_tail} avoids the
saturation that can occur when the unresolved spectral contribution is
linearized. If a conservative upper estimate is required, it may be
replaced by
\[
\widehat r_{\mathrm{tail}}^{\mathrm{upper}}
:=
\min\left\{
P-k,
\frac{R^2}{\varepsilon_{\mathrm q}}
\widehat T_{\mathrm{tail}}
\right\}.
\]
This follows from
\[
\frac{\lambda R^2}
{\lambda R^2+\varepsilon_{\mathrm q}}
\le
\frac{\lambda R^2}{\varepsilon_{\mathrm q}}.
\]
The upper correction is conservative but may substantially overestimate
the required dimension. The equal-tail correction is typically less
conservative, but its accuracy depends on how well the average unresolved
eigenvalue represents the spectral tail.
\end{remark}

\FloatBarrier
\subsection{Random Mapping Networks}
\label{subsec:raman}

As summarized in Figure~\ref{fig:raman_arch}, RaMaN instantiates the
selected latent dimensions using frozen matrix-free random maps. Given a
reference parameter vector $\theta_{\mathrm{ref}}$, it trains only the
low-dimensional latent vectors $\{z_l\}_{l=1}^{L}$.

\subsubsection{Scalable Random Map Instantiations}
\label{subsubsec:raman}

A dense random map from $\mathbb{R}^d$ to $\mathbb{R}^P$ requires
storing a frozen $P\times d$ matrix, which is prohibitive for modern
networks. RaMaN avoids this persistent storage bottleneck using either
seed-regenerated Gaussian maps or Structured Hadamard Mapping (SHM).
In the idealized i.i.d.\ Gaussian construction, a global
seed-regenerated Gaussian map realizes the uniformly random subspace
assumed in Theorem~\ref{thm:hessian_effective_rank}, while the default
layer-wise construction corresponds to the independent layer-wise setting
of Corollary~\ref{cor:layerwise} under its product-structure assumptions.
SHM provides a structured, memory-efficient alternative whose behavior is
evaluated empirically rather than identified with the Haar-random model of
the theorem.

We therefore consider two default instantiations: SHM RaMaN, which uses a
seeded random selection of Hadamard basis directions implemented through a
fast Walsh--Hadamard transform, and seed-regenerated Gaussian RaMaN, which
regenerates Gaussian projection blocks on the fly from fixed random seeds.

We now specify the layer-wise RaMaN parameterization. Let the target
network have parameter vector
\[
\theta
=
\bigl[
\theta^{(1)},\ldots,\theta^{(L)}
\bigr],
\qquad
\theta^{(l)}\in\mathbb{R}^{P_l},
\qquad
P=\sum_{l=1}^{L}P_l.
\]
RaMaN parameterizes the target weights as
\[
\theta(z)
=
\theta_{\mathrm{ref}}
+
\Delta\theta(z),
\]
where $\theta_{\mathrm{ref}}$ is either a random initialization for
training from scratch or a pretrained parameter vector for fine-tuning.
The update $\Delta\theta(z)$ is generated from low-dimensional trainable
latent variables rather than learned directly in $\mathbb{R}^P$.

In the layer-wise form used by default, each layer receives its own latent
vector
\[
z_l\in\mathbb{R}^{d_l},
\qquad
d_l\ll P_l,
\]
and the corresponding parameter update is
\[
\Delta\theta^{(l)}
=
\beta_l\mathcal{R}_l\bigl(\phi_l(z_l)\bigr),
\qquad
l=1,\ldots,L.
\]
Here $\mathcal{R}_l:\mathbb{R}^{d_l}\to\mathbb{R}^{P_l}$ is a frozen
random map, $\beta_l>0$ is a layer-wise update scale, and $\phi_l$ is an
optional low-dimensional nonlinear modulation. Unless otherwise stated,
we set $\beta_l=1$ for all layers. In the simplest and default
version,
\[
\phi_l(z_l)=z_l,
\]
yielding a linear random reparameterization. When nonlinear modulation is
used, we take
\[
\phi_l(z_l)
=
z_l+\alpha_l\psi_l(z_l),
\]
where $\psi_l$ is a small frozen nonlinear map satisfying
\[
\psi_l(0)=0,
\qquad
D\psi_l(0)=0.
\]
Consequently,
$D\phi_l(0)=I$, so the local Jacobian at the origin is determined by the
random map $\mathcal{R}_l$, while the nonlinear term acts as a controlled
higher-order perturbation. The parameter $\alpha_l$ controls the strength
of this modulation. In the nonlinear SHM ablation used in our experiments,
$\psi_l(z)=\tanh(z)-z$ and $\alpha_l=0.5$, giving
\[
\phi_l(z)
=
0.5z+0.5\tanh(z).
\]

\paragraph{Layer-wise latent-dimension selection.}
The latent dimensions $\{d_l\}_{l=1}^{L}$ may be selected by applying the
curvature-based modes of Algorithm~\ref{alg:hessian_dim_selection} either
to individual layer blocks or to the full parameter space. Let
\[
H_l
=
\sum_i
\lambda_i^{(l)}
v_i^{(l)}
\bigl(v_i^{(l)}\bigr)^\top
\]
denote the $l$th diagonal block of the positive-semidefinite curvature
operator, where $H_l\in\mathbb{R}^{P_l\times P_l}$. Treating the layers
independently is a block-diagonal curvature approximation because
cross-layer curvature terms are omitted.

Let $\varepsilon_l\ge0$ be a layer-wise loss budget satisfying
\[
\sum_{l=1}^{L}\varepsilon_l
\le
\varepsilon
\]
under this block-diagonal quadratic approximation, and define
$\varepsilon_{\mathrm q,l}:=2\varepsilon_l$. The layer-wise budgets
$\{\varepsilon_l\}$ are user-specified subject to
$\sum_l\varepsilon_l\le\varepsilon$. In the absence of reliable blockwise
information, our default implementation instead selects a global $d$ and
uses the proportional allocation in
\eqref{eq:proportional_layer_allocation}.

When a reliable pilot, related-task, or diagnostic displacement is
available, define
\[
\widehat\Delta_0^{(l)}
:=
\widehat\theta^{(l)}
-
\theta_{\mathrm{ref}}^{(l)}.
\]
We then apply the orientation-resolved mode of
Algorithm~\ref{alg:hessian_dim_selection} to the $l$th curvature block and
loss budget. Let $\widetilde d_{\mathrm{MF}}^{(l)}$ denote the resulting
raw master-formula estimate. The calibrated layer-wise dimension is
\begin{equation}
d_l
=
\min\left\{
P_l,
\left\lceil
\gamma\widetilde d_{\mathrm{MF}}^{(l)}
+
b
\right\rceil
\right\}.
\label{eq:layerwise_mf_dimension}
\end{equation}

When the displacement orientation is unavailable but a layer-wise
displacement-radius estimate or upper bound $R_l$ is available, we instead
apply the orientation-uniform radius mode of
Algorithm~\ref{alg:hessian_dim_selection}. Let
$\widehat r_{\mathrm{eff},l}^{\mathrm{eq}}(\varepsilon_l,R_l)$ denote the
corresponding equal-tail estimate. Then
\begin{equation}
d_l
=
\min\left\{
P_l,
\left\lceil
\gamma
\widehat r_{\mathrm{eff},l}^{\mathrm{eq}}
(\varepsilon_l,R_l)
+
b
\right\rceil
\right\}.
\label{eq:layerwise_radius_dimension}
\end{equation}

Thus, \eqref{eq:layerwise_mf_dimension} uses the estimated displacement
orientation, whereas \eqref{eq:layerwise_radius_dimension} discards this
directional information and is conservative when $R_l$ is a valid upper
bound. When $R_l$ is estimated empirically rather than known as an upper
bound, the resulting quantity should be regarded as a radius-calibrated
practical predictor.

Under the product-structure assumptions of
Corollary~\ref{cor:layerwise}, a separate theorem-level simultaneous
intersection statement can be obtained by assigning failure probability
$\eta/L$ to each layer and applying a union bound. This probabilistic
statement concerns the conic threshold conditions of the corollary and
should not be interpreted as a guarantee for the calibrated
master-formula dimensions in
\eqref{eq:layerwise_mf_dimension} or
\eqref{eq:layerwise_radius_dimension}.

If reliable blockwise curvature information is unavailable, we first use
Algorithm~\ref{alg:hessian_dim_selection} to select a global dimension
$d$. For the default layer-wise RaMaN parameterization, we then allocate
that total dimension approximately in proportion to the layer sizes:
\begin{equation}
d_l
\approx
d\frac{P_l}{P},
\qquad
\sum_{l=1}^{L}d_l=d,
\label{eq:proportional_layer_allocation}
\end{equation}
with integer rounding adjusted to preserve the total dimension. This
proportional allocation is a practical heuristic and does not, by itself,
inherit the layer-wise probabilistic statement of
Corollary~\ref{cor:layerwise}.

When neither reliable displacement nor radius estimates are available,
we use the adaptive mode of
Algorithm~\ref{alg:hessian_dim_selection}. In the layer-wise
implementation, expansion must be nested within every affected layer:
previously active random directions and latent coordinates are preserved,
and newly added latent coordinates are initialized to zero.

For general neural-network losses, where cross-layer curvature and
target-set interactions need not vanish, both the block-diagonal predictor
and the proportional allocation in
\eqref{eq:proportional_layer_allocation} are practical approximations whose
accuracy is evaluated empirically.

\paragraph{Structured Hadamard Mapping RaMaN.}
The first scalable instantiation uses a Hadamard-based structured mapping
(SHM), motivated by fast structured projection methods based on
Walsh--Hadamard transforms, including SRHT
\cite{ailon2009fast,tropp2011improved} and Fastfood
\cite{le2013fastfood}. Li et al.~\cite{li2018measuring} also used Fastfood
to realize scalable random-subspace experiments. Our SHM construction is
not itself an SRHT; rather, it uses a seeded random selection of Hadamard
basis directions together with a fast Walsh--Hadamard implementation.

For each layer $l$, let $\widetilde P_l$ be the smallest power of two
satisfying
\[
\widetilde P_l\ge P_l.
\]
Let
$H_{\widetilde P_l}\in
\mathbb{R}^{\widetilde P_l\times\widetilde P_l}$
be the normalized Walsh--Hadamard matrix, and let $D_l$ be a fixed random
diagonal Rademacher sign matrix of the same size.

Using a fixed layer-wise seed, sample once an ordered sequence
\[
\Omega_l^{\max}
=
\bigl(
\omega_{l,1},\ldots,\omega_{l,d_{l,\max}}
\bigr)
\]
of distinct indices from
$\{1,\ldots,\widetilde P_l\}$, where
\[
d_{l,\max}\le P_l
\]
is the maximum latent dimension allowed for layer $l$. Equivalently,
$\Omega_l^{\max}$ may be taken as the first $d_{l,\max}$ entries of a
seeded uniformly random permutation of
$\{1,\ldots,\widetilde P_l\}$. At active dimension $d_l$, define
\[
\Omega_l(d_l)
:=
\bigl(
\omega_{l,1},\ldots,\omega_{l,d_l}
\bigr).
\]
Thus, each $\Omega_l(d_l)$ has the distribution of a uniformly sampled
ordered $d_l$-tuple of distinct indices, while dimensions are nested:
\[
d_l<d_{l,\mathrm{new}}
\quad\Longrightarrow\quad
\Omega_l(d_l)
\text{ is a prefix of }
\Omega_l(d_{l,\mathrm{new}}).
\]

Define the coordinate-injection operator
\[
I_{\Omega_l(d_l)}:
\mathbb{R}^{d_l}
\to
\mathbb{R}^{\widetilde P_l}
\]
by
\[
\left[
I_{\Omega_l(d_l)}u
\right]_j
=
\begin{cases}
u_a,
&
j=\omega_{l,a}
\text{ for some }a\in\{1,\ldots,d_l\},
\\
0,
&
j\notin
\{\omega_{l,1},\ldots,\omega_{l,d_l}\}.
\end{cases}
\]
Because the coordinate sequence itself is sampled uniformly, no separate
permutation operator is required.

The SHM random map is defined by
\[
\mathcal{R}_{l,d_l}^{\mathrm{SHM}}(u)
=
\sqrt{\frac{\widetilde P_l}{P_l}}\,
\operatorname{crop}_{P_l}
\left(
H_{\widetilde P_l}
D_l
I_{\Omega_l(d_l)}u
\right),
\qquad
u\in\mathbb{R}^{d_l}.
\]
The factor
$\sqrt{\widetilde P_l/P_l}$ normalizes the Euclidean norm of each cropped
Hadamard basis column: before this scaling, a normalized Hadamard column
restricted to the first $P_l$ coordinates has squared norm
$P_l/\widetilde P_l$.

Because $D_l$ is diagonal,
\[
D_l I_{\Omega_l(d_l)}
=
I_{\Omega_l(d_l)}
D_{\Omega_l(d_l)},
\]
where
\[
D_{\Omega_l(d_l)}
:=
I_{\Omega_l(d_l)}^{*}
D_l
I_{\Omega_l(d_l)}
\]
is an invertible $d_l\times d_l$ diagonal sign matrix. Therefore,
\[
\begin{aligned}
\operatorname{range}
\left(
\operatorname{crop}_{P_l}
H_{\widetilde P_l}
D_l
I_{\Omega_l(d_l)}
\right)
&=
\operatorname{range}
\left(
\operatorname{crop}_{P_l}
H_{\widetilde P_l}
I_{\Omega_l(d_l)}
\right).
\end{aligned}
\]
Thus, for the default identity modulation, the randomness of the reachable
SHM search subspace is determined by the seeded selection
$\Omega_l(d_l)$ of Hadamard basis directions. We retain $D_l$ for implementation convenience, not as an additional source of subspace randomization.

The sign operator $D_l$, the Hadamard transform
$H_{\widetilde P_l}$, and the ordered coordinate sequence
$\Omega_l^{\max}$ are fixed throughout training. Consequently, for
$d_l<d_{l,\mathrm{new}}$,
\begin{equation}
\mathcal{R}_{l,d_{l,\mathrm{new}}}^{\mathrm{SHM}}
\bigl([u;0]\bigr)
=
\mathcal{R}_{l,d_l}^{\mathrm{SHM}}(u).
\label{eq:shm_nestedness}
\end{equation}
This prefix construction is required by adaptive dimension expansion:
resampling $\Omega_l$ after an expansion would replace the existing
structured search subspace rather than enlarge it. The sign pattern is
also retained unchanged to preserve exact parameterization continuity,
although, as shown above, its active signs can be absorbed into the latent
coordinates and do not alter the reachable SHM subspace.

The cropping operation removes the padded coordinates. The map is
evaluated from right to left without materializing either $D_l$ or
$H_{\widetilde P_l}$. The operator
$I_{\Omega_l(d_l)}$ is implemented as a sparse coordinate scatter,
$D_l$ as elementwise multiplication by a fixed or seed-regenerated
Rademacher sign vector, and $H_{\widetilde P_l}$ using an in-place fast
Walsh--Hadamard transform. Consequently, the map requires
$O(\widetilde P_l\log\widetilde P_l)$ computation and
$O(\widetilde P_l)$ transform working memory, rather than persistent
storage of a dense $\widetilde P_l\times d_l$ random matrix.

The backward pass applies the adjoints of the forward operators in reverse
order. Given
\[
g_l
=
\nabla_{\Delta\theta^{(l)}}\mathcal{L}
\in\mathbb{R}^{P_l},
\]
we first zero-pad $g_l$ to length $\widetilde P_l$ and then compute
\[
\nabla_{u_l}\mathcal{L}
=
\beta_l
\sqrt{\frac{\widetilde P_l}{P_l}}\,
I_{\Omega_l(d_l)}^{*}
D_l
H_{\widetilde P_l}
\operatorname{pad}_{\widetilde P_l}(g_l)
\in\mathbb{R}^{d_l}.
\]
Here,
\[
I_{\Omega_l(d_l)}^{*}:
\mathbb{R}^{\widetilde P_l}
\to
\mathbb{R}^{d_l}
\]
denotes the adjoint of the coordinate-injection operator
$I_{\Omega_l(d_l)}$. It gathers the coordinates indexed by
$\Omega_l(d_l)$; specifically, for any
$x\in\mathbb{R}^{\widetilde P_l}$,
\[
\left[
I_{\Omega_l(d_l)}^{*}x
\right]_a
=
x_{\omega_{l,a}},
\qquad
a=1,\ldots,d_l.
\]

In the real Euclidean setting,
$\operatorname{pad}_{\widetilde P_l}$ is the adjoint of
$\operatorname{crop}_{P_l}$, while
$D_l^{*}=D_l$ and
$H_{\widetilde P_l}^{*}=H_{\widetilde P_l}$ because $D_l$ is a real
diagonal sign operator and $H_{\widetilde P_l}$ is the normalized
symmetric Hadamard transform. Thus, both forward and backward SHM
operations use fast transforms rather than dense matrix multiplication.

\paragraph{Seed-regenerated Gaussian RaMaN.}
The second instantiation uses a Gaussian random map that is never stored.
For each layer, define
\[
\mathcal{R}_l^{\mathrm{SG}}(u)
=
A_lu,
\qquad
A_l\in\mathbb{R}^{P_l\times d_l},
\]
where
\[
(A_l)_{ij}
=
\frac{1}{\sqrt{P_l}}\xi_{ij},
\qquad
\xi_{ij}\sim\mathcal{N}(0,1).
\]
The normalization does not affect the Gaussian column space but controls
the scale of the induced parameter update.

Instead of materializing $A_l$, its entries are regenerated on demand
using a counter-based pseudorandom number generator with fixed layer seed
$s_l^{\mathrm{seed}}$. The counter indexing is chosen so that increasing
$d_l$ exposes a column prefix of one fixed virtual Gaussian matrix, thereby
preserving nestedness under adaptive dimension expansion.

For memory-efficient evaluation, the rows of $A_l$ are partitioned into
$B_l$ blocks:
\[
A_l
=
\begin{bmatrix}
A_l^{(1)}
\\
A_l^{(2)}
\\
\vdots
\\
A_l^{(B_l)}
\end{bmatrix},
\qquad
A_l^{(b)}\in\mathbb{R}^{m_b\times d_l},
\qquad
\sum_{b=1}^{B_l}m_b=P_l.
\]
During the forward pass, each row block $A_l^{(b)}$ is regenerated to
compute the corresponding output block $A_l^{(b)}u$. During the backward
pass, the output gradient is partitioned conformably as
\[
g_l
=
\begin{bmatrix}
g_l^{(1)}
\\
g_l^{(2)}
\\
\vdots
\\
g_l^{(B_l)}
\end{bmatrix},
\qquad
g_l^{(b)}\in\mathbb{R}^{m_b},
\]
and the same row blocks are regenerated to accumulate
\[
A_l^\top g_l
=
\sum_{b=1}^{B_l}
\left(A_l^{(b)}\right)^\top g_l^{(b)}.
\]

Consequently, for
$u_l=\phi_l(z_l)$ and
$\Delta\theta^{(l)}=\beta_l A_lu_l$, the latent gradient is
\[
\nabla_{z_l}\mathcal{L}
=
\beta_l
D\phi_l(z_l)^\top
A_l^\top g_l,
\qquad
g_l
:=
\nabla_{\Delta\theta^{(l)}}\mathcal{L}.
\]
For the identity modulation $\phi_l(z_l)=z_l$, this reduces to
\[
\nabla_{z_l}\mathcal{L}
=
\beta_l A_l^\top g_l.
\]

Thus, in the idealized random-map model, seed-regenerated Gaussian RaMaN
realizes the same i.i.d.\ Gaussian projection as a materialized dense map.
In implementation, the projection requires no persistent
$P_l\times d_l$ matrix: the persistent map state is the random seed, while
temporary block storage is controlled by the chosen block size.

\subsection{Training and Complexity}

\paragraph{Training objective.}
RaMaN trains only the latent variables
\[
    z
    =
    \{z_l\}_{l=1}^L .
\]
The target network is used in the standard forward pass with parameters
$\theta(z)$. Given the training dataset $\mathcal{D}$, the latent variables
are optimized according to
\[
    \min_{z}
    \mathcal{L}\bigl(f_{\theta(z)};\mathcal{D}\bigr),
\]
where $\mathcal{L}$ denotes the task-specific empirical loss evaluated on
$\mathcal{D}$.
Gradients are propagated through the frozen random maps $\mathcal{R}_l$:
\[
    \nabla_{z_l}\mathcal{L}
    =
    \beta_l D\phi_l(z_l)^\top
    \mathcal{R}_l^\top
    \nabla_{\Delta\theta^{(l)}}\mathcal{L}.
\]
When $\phi_l$ is the identity, this reduces to
\[
    \nabla_{z_l}\mathcal{L}
    =
    \beta_l
    \mathcal{R}_l^\top
    \nabla_{\Delta\theta^{(l)}}\mathcal{L}.
\]
Thus optimizer states are maintained only for the latent vectors
$\{z_l\}$ rather than for all $P$ target parameters.

\begin{remark}[Optimization on the slice]
In the quadratic model of Theorem~\ref{thm:hessian_effective_rank}, with
the linear slice $\theta(z)=\theta_0+Az$, optimization over $z$ reduces to
the convex least-squares problem
\[
    \min_z
    \frac{1}{2}
    \left\|
    H^{1/2}(Az-\Delta_0)
    \right\|_2^2,
    \qquad
    \Delta_0:=\theta^\dagger-\theta_0.
\]
Thus, whenever the slice intersects $S_\varepsilon$, the global
least-squares optimum attains loss within the prescribed tolerance. The
convergence rate of gradient descent is governed by the spectrum of
$A^\top H A\in\mathbb{R}^{d\times d}$.

For isotropic Gaussian $A$, one has
\[
    \mathbb{E}[A^\top H A]
    =
    \frac{\operatorname{tr}(H)}{P}I_d,
\]
suggesting that random reparameterization can also improve conditioning when
the projected curvature concentrates around its expectation. A quantitative
analysis of this effect is left to future work.
\end{remark}

\paragraph{Memory and computational complexity.}
Dense Mapping-Network implementations can require $O(dP)$ frozen mapping
parameters in addition to the target-network parameters. This becomes
prohibitive for large models because $P$ may range from millions to
billions. RaMaN removes this bottleneck. With SHM maps, the random
projection is represented by signs, coordinate subsets, and
Hadamard transforms, requiring $O(P)$ memory or only random seeds when the
random objects are regenerated. With seed-regenerated Gaussian maps, the
projection has the same distribution as a dense Gaussian map but requires
$O(d)$ trainable memory plus $O(1)$ seed storage. The optimizer-state memory
is reduced from $O(P)$ to $O(d)$, or from $O(P_l)$ to $O(d_l)$ per layer.

Seed regeneration removes the $O(P_l d_l)$ storage cost but does not remove
the corresponding dense arithmetic: each forward or backward application of
the Gaussian map requires $O(P_l d_l)$ operations, in addition to random-number
regeneration overhead. By contrast, the SHM requires
$O(\widetilde P_l\log\widetilde P_l)$ operations. Thus, the two RaMaN
instantiations provide different memory--computation tradeoffs: the Gaussian
variant exactly preserves the relevant random-subspace distribution in the
settings described above, whereas the SHM variant offers substantially
faster structured transforms.

Importantly, RaMaN does not claim to reduce the number of parameters used
by the target network during inference unless combined with orthogonal
compression techniques such as pruning, quantization, or low-rank
decomposition. Its primary advantages are reduced trainable dimension,
reduced optimizer-state memory, reduced checkpoint size, and removal of
the dense $P\times d$ frozen-mapping memory wall.

\subsection{RaMaN Algorithm}
\label{sec:raman_compact}

A compact version of the RaMaN training algorithm is shown in
Algorithm~\ref{alg:raman_compact}.

\begin{algorithm}[!thp]
\caption{RaMaN training: compact version}
\label{alg:raman_compact}
\begin{algorithmic}[1]
\REQUIRE Reference parameters $\theta_{\mathrm{ref}}$, loss $\mathcal{L}$,
training data $\mathcal{D}$, latent dimensions $\{d_l\}_{l=1}^L$, map type
$\mathrm{type}\in\{\mathrm{SHM},\mathrm{SeedGaussian}\}$.
\STATE For each layer $l$, initialize the trainable latent vector
$z_l=0\in\mathbb{R}^{d_l}$ and a frozen random map
$\mathcal{R}_l:\mathbb{R}^{d_l}\to\mathbb{R}^{P_l}$.
\REPEAT
    \STATE Generate layer-wise updates
    \[
        \Delta\theta^{(l)}
        =
        \beta_l \mathcal{R}_l(\phi_l(z_l)),
        \qquad l=1,\ldots,L .
    \]
    \STATE Set
    \[
        \theta^{(l)}
        =
        \theta_{\mathrm{ref}}^{(l)}
        +
        \Delta\theta^{(l)} .
    \]
    \STATE Evaluate $\mathcal{L}(f_{\theta(z)};\mathcal{D})$ and update only
    the latent variables $\{z_l\}_{l=1}^L$ by backpropagating through the
    frozen maps $\{\mathcal{R}_l\}_{l=1}^L$.
\UNTIL{convergence}
\RETURN Trained latents $\{z_l\}_{l=1}^{L}$ and the random seeds defining
the frozen maps; generate $\theta(z)$ on demand when needed.
\end{algorithmic}
\end{algorithm}


\section{Experimental Results}
\label{sec:experiments}

We evaluate the geometric theory, its quadratic predictors, and the resulting
Random Mapping Network (RaMaN) training procedure through twelve experiments.
Experiments~1--3 examine the quadratic random-slice model: the existence and
location of the phase transition, its dependence on displacement orientation,
and its transfer to curvature operators obtained from trained neural networks.
Experiment~4 tests whether end-to-end RaMaN optimization exhibits a comparable
transition on MLP and CNN models. Experiments~5--6 evaluate the frozen-map
families and the three-mode latent-dimension selection procedure.
Experiments~7--9 extend the end-to-end study to a vision transformer, a
pretrained language model, and a deep residual network. Experiments~10--11
examine the sensitivity of the measured training midpoint to the optimizer and
the loss-excess tolerance. Finally, Experiment~12 applies the quadratic-probe
pipeline to a ViT-scale curvature surrogate. Unless stated otherwise, $d$
denotes the global latent dimension; for a layer-wise parameterization, it
denotes the total latent dimension $d=\sum_{l=1}^{L}d_l$.

\subsection{Experimental Protocol}
\label{subsec:exp_protocol}

\paragraph{Quadratic hit criterion and empirical transition.}
For a positive-semidefinite curvature operator $H$, a
reference-to-solution displacement
$\Delta_0=\theta^\dagger-\theta_0$, and a random Gaussian map
$A\in\mathbb{R}^{P\times d}$, we evaluate the least-squares residual from
\eqref{eq:ls_reduction},
\[
    \rho(A)
    =
    \min_{z\in\mathbb{R}^{d}}
    (Az-\Delta_0)^\top H(Az-\Delta_0).
\]
A trial is successful when
$\rho(A)\le\varepsilon_{\mathrm q}$, where
$\varepsilon_{\mathrm q}=2\varepsilon$. For each candidate $d$, the
empirical success probability $\widehat p(d)$ is the fraction of random-map
trials satisfying this criterion. We denote by $d_p$ the interpolated
dimension at which $\widehat p(d)=p$; in particular, $d_{50}$ is the empirical
transition midpoint and $d_{90}-d_{10}$ is its empirical width.

\paragraph{Quadratic predictors.}
We compare the empirical transition with the three mean-level predictors
defined in Section~\ref{subsec:main_theory}: the orientation-resolved
master-formula predictor $\widehat d_{\mathrm{MF}}$, the
isotropic-orientation predictor $d^\star_{\mathrm{iso}}$, and the
orientation-uniform predictor
$d_{\mathrm{unif}}=r_{\mathrm{eff}}(\varepsilon,R)$. When only leading Ritz
eigenpairs and a stochastic trace estimate are available, we use the practical
estimates $\widetilde d_{\mathrm{MF}}$ and
$\widehat r_{\mathrm{eff}}^{\mathrm{eq}}$ from
Section~\ref{subsec:hessian_effective_rank_predictor}. 
These predictors are tested directly in Experiments~1--3. Experiment~12
extends the same calculation to a ViT-scale GGN--Ritz surrogate. Because its
measured midpoint and its practical predictors are computed from the same
Ritz-plus-tail approximation, Experiment~12 is an internal consistency test of
the surrogate rather than an independent validation of the true network
curvature or of the end-to-end training midpoint.

\paragraph{End-to-end success criterion.}
For RaMaN training, a run is counted as successful when its training loss
reaches the prescribed target $\mathcal{L}_{\mathrm{target}}$ used by the
dimension-sweep protocol. The empirical training-success probability and
$d_{50}$ are computed across random seeds. For the layer-wise map families,
$d$ always denotes $\sum_l d_l$. 
For the AdamW sweeps, the learning-rate search is initialized using
\[
    \alpha(d)
    =
    \alpha(d_0)\sqrt{\frac{d_0}{d}},
\]
which approximately normalizes the induced parameter-space step across
latent dimensions. This scaling is used as an initialization for per-$d$
tuning rather than as an optimizer-independent rule; Experiment~10 treats
SGD separately.

\paragraph{Role of the full-parameter reference.}
Experiments~1--3 are designed to evaluate the quadratic transition
predictors and do not constitute comparisons of RaMaN generalization
against full-parameter training. In the end-to-end experiments, the
full-parameter model primarily defines the training-loss target and provides
a descriptive accuracy reference. Unless a protocol-matched control is
explicitly stated, differences in test accuracy should therefore not be
interpreted as compute-matched generalization comparisons. The corresponding
$d_{50}$ values characterize the training transition under the specified
reference and optimization protocol.

\paragraph{Datasets, models, and roles.}
Table~\ref{tab:exp_models} summarizes the neural-network experiments. The
smallest MNIST model admits a complete eigendecomposition of the positive
spectral part of the Hessian. Larger curvature probes use the generalized
Gauss--Newton (GGN) operator, leading Ritz eigenpairs, and a stochastic trace
estimate. 
The ViT experiments include both end-to-end training sweeps and a separate
quadratic probe based on a GGN--Ritz surrogate. The pretrained language-model
experiment is an end-to-end fine-tuning study only. For \texttt{bert-tiny},
the embedding module is held fixed; consequently, we distinguish the full
parameter count $P$ from the reparameterized count $P_{\mathrm{rep}}$.

\begin{table}[t]

\caption{Neural-network settings used in the experiments. ``Exact $H_+$''
denotes a complete eigendecomposition of the positive spectral part of the
Hessian; ``GGN--Ritz'' denotes a matrix-free GGN approximation based on
leading Ritz eigenpairs and a stochastic trace estimate.}

\label{tab:exp_models}

\centering

\footnotesize

\setlength{\tabcolsep}{4pt}

\renewcommand{\arraystretch}{1.08}

\begin{tabular}{@{}llllr@{}}

\toprule

Model & Dataset & Input/setting & Experimental role & Parameter count \\

\midrule

TinyMLP & MNIST & $14\times14$ & Exact $H_+$; training & $P=6{,}634$ \\

SmallMLP & MNIST & $28\times28$ & GGN--Ritz probe & $P=109{,}386$ \\

SmallCNN & CIFAR-10 & $32\times32$ & GGN--Ritz; training & $P=19{,}466$ \\

ResNet-18 & CIFAR-10/100 & $32\times32$
& Training transition; generalization controls
& $P\approx11.2\times10^6$ \\

ViT-Tiny & CIFAR-10 & patch size 4
& Training transition; GGN--Ritz probe & $P=1{,}806{,}538$ \\

ViT-Tiny & CIFAR-100 & patch size 4
& Training transition; GGN--Ritz probe & $P=1{,}823{,}908$ \\

\texttt{bert-tiny} & SST-2 & Pretrained
& Fine-tuning transition
& $P=4{,}386{,}178$ \\
& & & & $P_{\mathrm{rep}}=413{,}314$ \\

\bottomrule

\end{tabular}

\end{table}

\paragraph{Data subsets and evaluation splits.}
Unless otherwise stated, all from-scratch image-classification experiments
use fixed subsets of 10,000 training examples, with the same subset used
for the full-parameter and RaMaN conditions. This protocol applies to the
MNIST and CIFAR experiments in Experiments~3--7, 9--12, except that
Experiments~10 and 11 reuse the saved runs from Experiment~4 and
Experiment~12 reuses the ViT-Tiny data configuration from Experiment~7.
For Experiment~8, we use 8,000 of the 67,349 SST-2 training examples and
evaluate on all 872 validation examples. Test-set sizes, data augmentation,
and the data subsets used to estimate Hessian or GGN curvature are specified
below for the applicable experiments.

\paragraph{Transition localization and reporting.}
An initial dimension grid brackets each transition, after which a denser local
grid estimates $d_{10}$, $d_{50}$, and $d_{90}$. Unless otherwise stated,
reported uncertainty bands and error bars summarize variation across the
evaluated random seeds. 

\subsection{Controlled Quadratic Phase Transitions}
\label{subsec:exp_synthetic}

\paragraph{Experiment 1: transition location.}
We construct five synthetic positive-semidefinite quadratic instances whose
spectra span low-rank, rapidly decaying, outlier-plus-bulk, and slowly
decaying regimes. In every case, we use an equal-energy displacement,
$\Delta_i^2=R^2/P$.
Figure~\ref{fig:transition} shows the empirical success curves, and
Table~\ref{tab:synthetic} compares their midpoints with
$d^\star_{\mathrm{iso}}$ and the orientation-uniform predictor
$d_{\mathrm{unif}}$.

\begin{figure}[t]
\centering
\includegraphics[width=\linewidth]{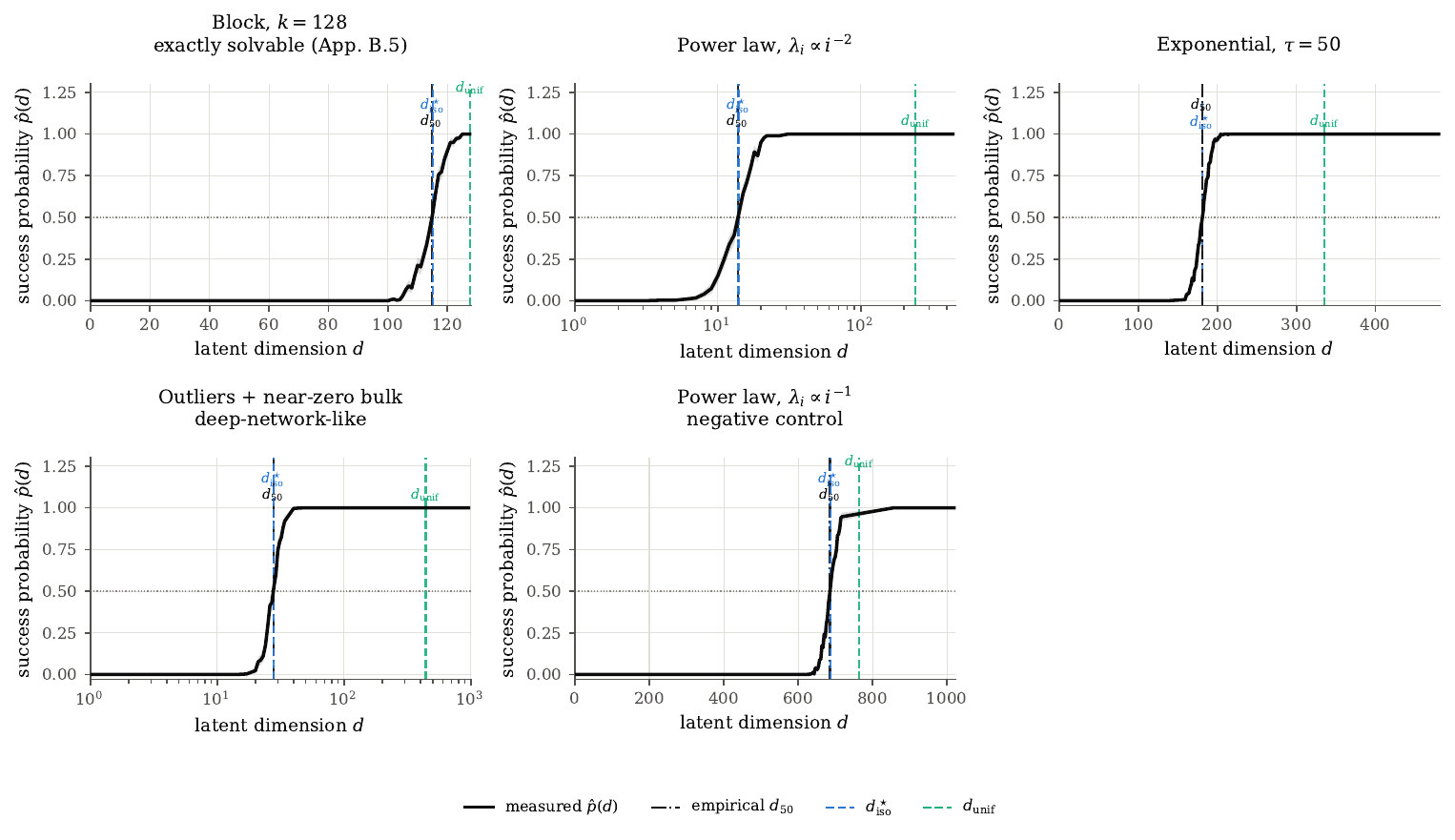}
\caption{\textbf{Controlled quadratic phase transitions.}
Empirical success probability $\widehat p(d)$ as a function of latent
dimension for five curvature spectra, using 300 independent random-slice
trials per evaluated dimension. The first four cases use $P=4{,}096$,
whereas the slowly decaying power-law negative control,
$\lambda_i\propto i^{-1}$, uses $P=1{,}024$.
Under the equal-energy displacement used here,
$d^\star_{\mathrm{iso}}$ closely matches the empirical midpoint $d_{50}$,
whereas $d_{\mathrm{unif}}$ is conservative to a spectrum-dependent
degree.}
\label{fig:transition}
\end{figure}

\begin{table}[t]
\caption{Empirical transition midpoints for five controlled synthetic
curvature spectra under the equal-energy displacement model. Here,
$r_{\mathrm{num}}
=\#\{i:\lambda_i>10^{-14}\lambda_{\max}\}$
is the numerical rank used by the residual solver, and
$d_{\mathrm{unif}}=r_{\mathrm{eff}}(\varepsilon,R)$.
The first four cases use $P=4{,}096$, whereas the slowly decaying
power-law control uses $P=1{,}024$.}
\label{tab:synthetic}
\centering
\footnotesize
\setlength{\tabcolsep}{4pt}
\renewcommand{\arraystretch}{1.08}
\begin{tabular}{@{}lrrrrrr@{}}
\toprule
Spectrum & $P$ & $r_{\mathrm{num}}$ & $d_{50}$ & $d^\star_{\mathrm{iso}}$
& $d_{\mathrm{unif}}$ & $d^\star_{\mathrm{iso}}/d_{50}$ \\
\midrule
Block, $k=128$                        & 4,096 & 128   & 114.8 & 115.2 & 127.6 & 1.004 \\
Power law, $\lambda_i\propto i^{-2}$ & 4,096 & 4,096 & 13.8  & 13.9  & 241.3 & 1.007 \\
Exponential, $\tau=50$                & 4,096 & 1,612 & 180.9 & 180.7 & 335.5 & 0.999 \\
Outliers with near-zero bulk          & 4,096 & 4,096 & 27.7  & 28.0  & 439.9 & 1.010 \\
Power law, $\lambda_i\propto i^{-1}$ & 1,024 & 1,024 & 685.8 & 686.8 & 763.6 & 1.002 \\
\bottomrule
\end{tabular}
\end{table}

Across the five spectra, whose empirical midpoints span almost a
$50\times$ range, the absolute relative error of
$d^\star_{\mathrm{iso}}$ is at most $1.1\%$ and has median $0.36\%$. The
orientation-uniform predictor does not underestimate $d_{50}$ in these tests,
but its ratio to $d_{50}$ ranges from approximately $1.1$ to $17.5$.

\paragraph{Transition width and localization diagnostic.}
To separate transition width from transition location, we repeat the
block-spectrum experiment at three ambient dimensions while holding the stiff
rank $k=128$ and the tolerance fraction fixed.

\begin{figure}[t]
\centering
\includegraphics[width=\linewidth]{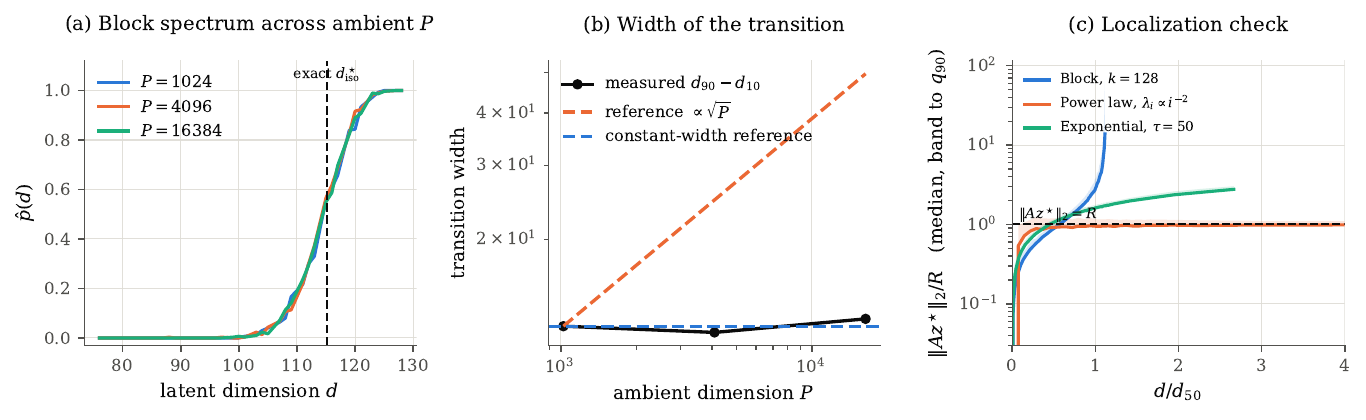}
\caption{\textbf{Scaling and localization diagnostics.}
Left: empirical success curves for the block spectrum at three ambient
dimensions. Middle: the measured width $d_{90}-d_{10}$ remains approximately
constant over the tested $16\times$ range of $P$; the plotted
$\sqrt P$ curve is a normalized reference rather than a prediction that the
measured width must follow. Right: norm of the unconstrained least-squares
minimizer relative to $\|\Delta_0\|_2$. The smooth spectra remain below
approximately three over the plotted range, whereas the block spectrum grows
much larger near its hard rank boundary.}
\label{fig:scaling}
\end{figure}

\begin{table}[t]
\caption{Transition width for the block spectrum as the ambient dimension
increases. The approximately constant observed width is narrower than the
general $O(\sqrt P)$ envelope in
Theorem~\ref{thm:hessian_effective_rank} and is compatible with the refined
fat-cone window in Remark~\ref{rem:sharp_window}.}
\label{tab:scaling}
\centering
\small
\setlength{\tabcolsep}{7pt}
\begin{tabular}{@{}rrrrr@{}}
\toprule
$P$ & $k$ & $d_{50}$ & $d^\star_{\mathrm{iso}}$
& $d_{90}-d_{10}$ \\
\midrule
1,024  & 128 & 114.5 & 115.2 & 12.4 \\
4,096  & 128 & 114.3 & 115.2 & 12.0 \\
16,384 & 128 & 114.4 & 115.2 & 12.9 \\
\bottomrule
\end{tabular}
\end{table}

The right panel of Figure~\ref{fig:scaling} is a diagnostic rather than a
validation of the localization assumption in
Theorem~\ref{thm:hessian_effective_rank}. In particular, the unconstrained
least-squares minimizer can lie well outside the ball of radius
$\|\Delta_0\|_2$, especially for a spectrum with a hard rank boundary.
Consequently, Experiments~1--3 test the quadratic residual predictors; a
direct test of the localized conic theorem would additionally need to enforce
or independently calibrate the search-radius constraint.


\subsection{Orientation-Resolved Prediction}
\label{subsec:exp_orientation}

\paragraph{Experiment 2: dependence on displacement orientation.}
We next vary the displacement direction while keeping its Euclidean norm
fixed. The interpolation begins at a baseline random direction constrained to
be orthogonal to the leading curvature eigenvector and ends at complete
alignment with that stiff direction. The baseline endpoint is therefore not
the equal-energy model used to define $d^\star_{\mathrm{iso}}$.
For each spectrum, the tolerance is calibrated once using the equal-energy
reference displacement and then held fixed throughout the orientation sweep,
so the curvature spectrum, displacement norm, and target sublevel set remain
unchanged as $\alpha$ varies. Figure~\ref{fig:orientation} shows how the
empirical transition midpoint and the corresponding quadratic predictors
change across the orientation sweep.

\begin{figure}[t]
\centering
\includegraphics[width=\linewidth]{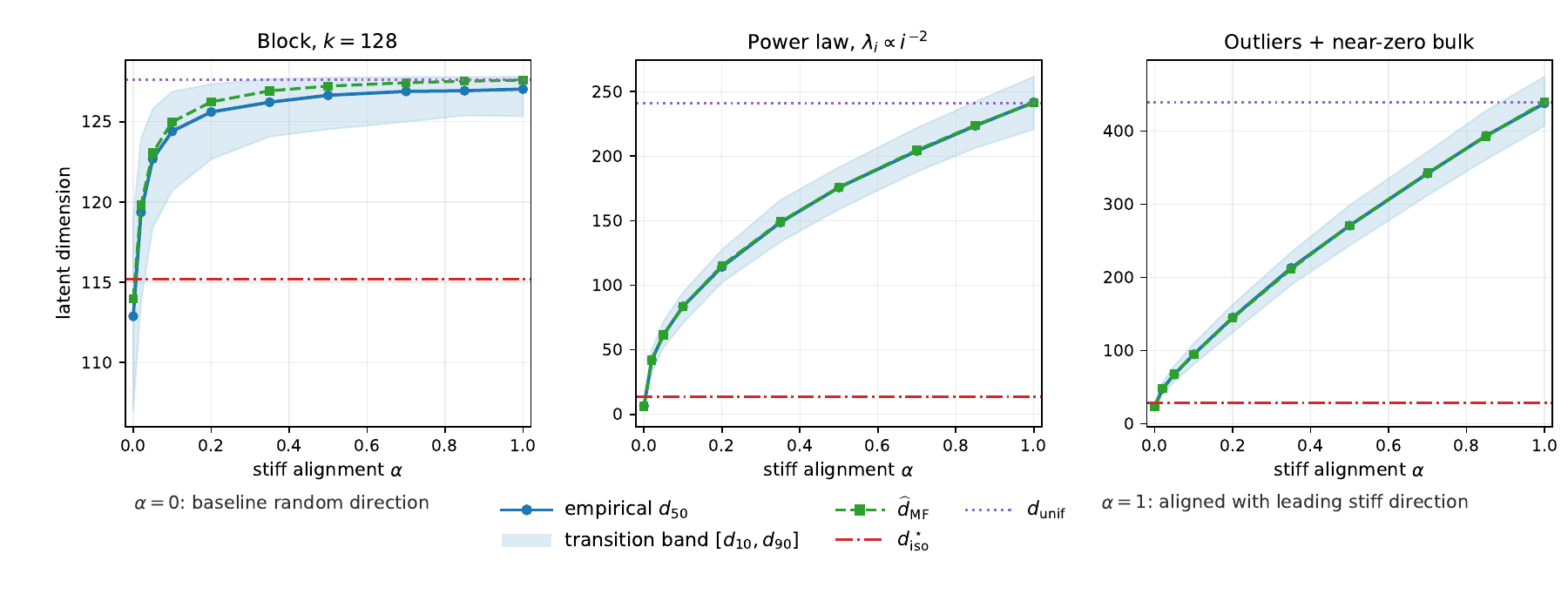}
\caption{\textbf{Orientation dependence of the transition.}
Empirical midpoint $d_{50}$ as the displacement changes from a baseline
random direction orthogonal to the leading curvature eigenvector
($\alpha=0$) to complete alignment with that stiff direction
($\alpha=1$). The shaded transition band spans
$[d_{10},d_{90}]$, and the orientation-resolved master-formula prediction
$\widehat d_{\mathrm{MF}}$ tracks the empirical midpoint throughout the
sweep. Horizontal references show $d^\star_{\mathrm{iso}}$ and
$d_{\mathrm{unif}}$; the former corresponds to the idealized equal-energy
model $\Delta_i^2=R^2/P$ and therefore need not coincide with the
$\alpha=0$ endpoint.}
\label{fig:orientation}
\end{figure}

\begin{table}[t]
\caption{Accuracy of the orientation-resolved master-formula predictor over
30 orientation--spectrum combinations.}
\label{tab:orientation}
\centering
\small
\begin{tabular}{@{}lr@{}}
\toprule
Quantity & Value \\
\midrule
Median absolute relative error & 0.45\% \\
Maximum absolute relative error & 5.36\% \\
Largest transition range induced by orientation & $37\times$ \\
\bottomrule
\end{tabular}
\end{table}

As shown in Figure~\ref{fig:orientation}, the measured transition changes
substantially as the displacement becomes increasingly aligned with the stiff
curvature direction, even though the curvature spectrum and
$R=\|\Delta_0\|_2$ remain fixed. Across the three spectra, orientation changes
the transition midpoint by as much as a factor of $37$.
Table~\ref{tab:orientation} summarizes the prediction accuracy over all
30 orientation--spectrum combinations: $\widehat d_{\mathrm{MF}}$ has a
median absolute relative error of $0.45\%$ and a maximum absolute relative
error of $5.36\%$. These results support retaining the full displacement
profile when it can be estimated reliably and explain why a spectrum-only
specialization can be either sharp or conservative depending on displacement
orientation.


\subsection{Quadratic Probes of Neural-Network Curvature}
\label{subsec:exp_neural_probe}

\paragraph{Experiment 3: transfer to trained-network curvature.}
We apply the same quadratic hit test to curvature operators obtained from
trained neural networks. The reference point is either the random
initialization or a short pilot solution, and the diagnostic displacement
points from that reference to the final full-parameter solution obtained
under the prescribed training schedule. To quantify anisotropy, let
$\mathcal I_{1\%}$ contain the largest-curvature one percent of
eigen-directions and define
\[
    m_{\mathrm{stiff}}
    :=
    \frac{
        \sum_{i\in\mathcal I_{1\%}}\lambda_i\Delta_i^2
    }{
        \sum_{i=1}^{P}\lambda_i\Delta_i^2
    }.
\]
Values near one indicate that most of the curvature-weighted displacement is
concentrated in a small set of stiff directions, whereas values near zero
indicate that the displacement places little curvature-weighted mass in those
directions and is instead distributed primarily over less-curved directions.
Intermediate values reflect progressively mixed orientation between these two
regimes.
Figure~\ref{fig:neuralprobe} summarizes the representative empirical success
curves together with cross-case predictor comparisons, and
Table~\ref{tab:neural_probe} reports the corresponding numerical transition
midpoints and anisotropy scores.

\begin{figure}[t]
\centering
\includegraphics[width=\linewidth]{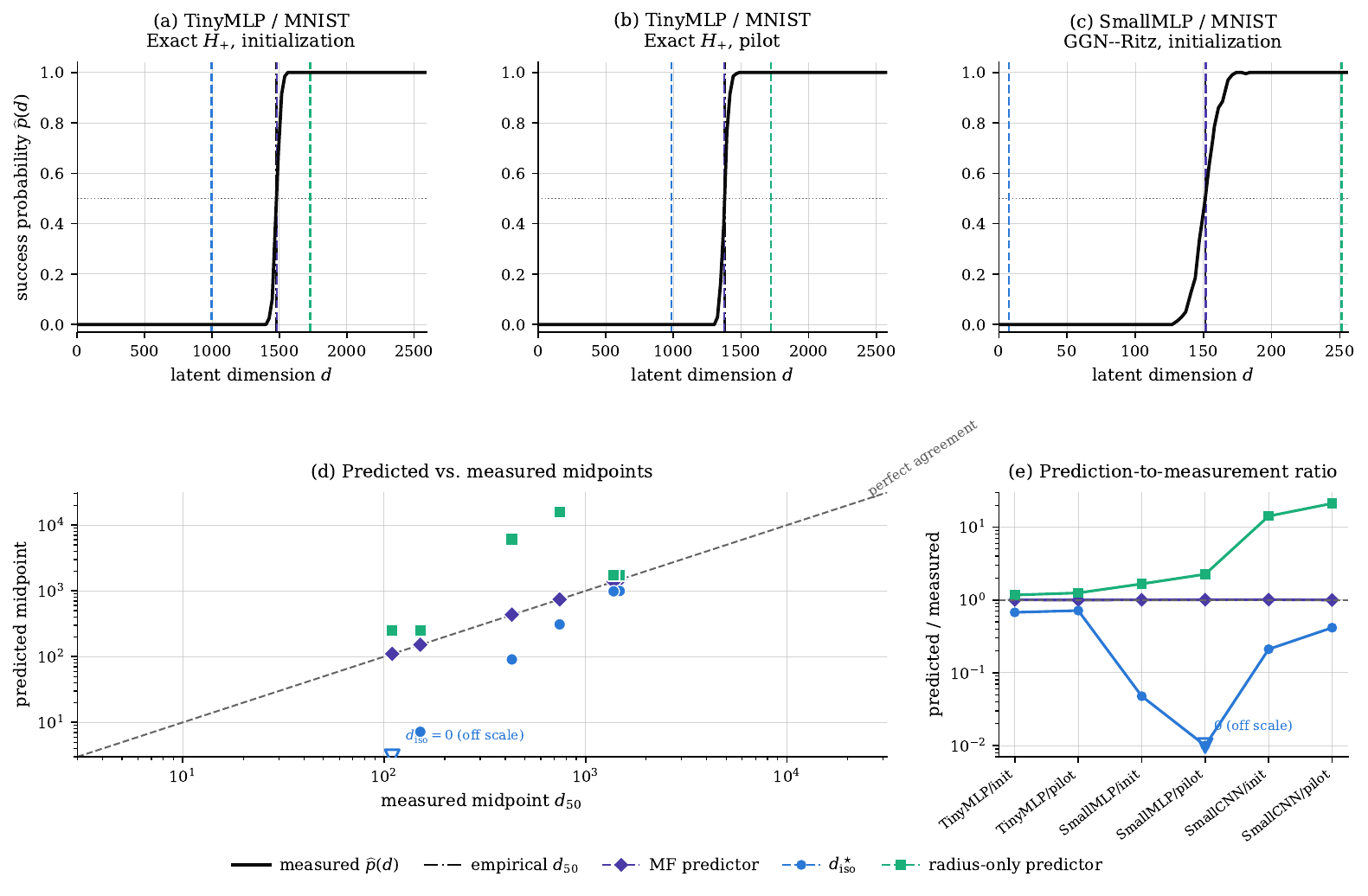}
\caption{\textbf{Quadratic probes of neural-network curvature.}
Top: representative empirical success curves for exact-Hessian and
GGN--Ritz curvature probes. Bottom left: predicted versus measured
transition midpoints across the six model/reference combinations.
Bottom right: prediction-to-measurement ratios, where unity denotes
perfect agreement. The orientation-resolved MF prediction remains close
to the measured midpoint, whereas the equal-energy specialization
$d^\star_{\mathrm{iso}}$ can substantially underestimate the transition
when the displacement is concentrated in stiff directions. The
radius-only predictor is conservative, particularly for the GGN--Ritz
cases.}
\label{fig:neuralprobe}
\end{figure}

\begin{table}[t]
\caption{Quadratic phase-transition probes on trained neural networks.
``MF prediction'' denotes $\widehat d_{\mathrm{MF}}$ when the full spectrum
is available and $\widetilde d_{\mathrm{MF}}$ for the matrix-free GGN cases.}
\label{tab:neural_probe}
\centering
\scriptsize
\resizebox{\linewidth}{!}{%
\begin{tabular}{@{}llrrrrrr@{}}
\toprule
Model / reference & Curvature & $P$ & $R$ & $d_{50}$ & MF prediction
& $d^\star_{\mathrm{iso}}$ & $m_{\mathrm{stiff}}$ \\
\midrule
TinyMLP / initialization & Exact $H_+$ & 6,634 & 9.540 & 1,478.3 & 1,480.4 & 998.1 & 0.105 \\
TinyMLP / pilot          & Exact $H_+$ & 6,634 & 3.300 & 1,378.6 & 1,374.9 & 985.1 & 0.305 \\
SmallMLP / initialization & GGN--Ritz & 109,386 & 16.040 & 151.3 & 151.6 & 7.2 & 1.000 \\
SmallMLP / pilot          & GGN--Ritz & 109,386 & 6.073 & 109.7 & 110.4 & 0.0 & 1.000 \\
SmallCNN / initialization & GGN--Ritz & 19,466 & 9.487 & 431.0 & 434.9 & 90.7 & 0.943 \\
SmallCNN / pilot          & GGN--Ritz & 19,466 & 4.862 & 743.9 & 742.1 & 309.3 & 0.854 \\
\bottomrule
\end{tabular}%
}
\end{table}

As shown by the representative success curves in
Figure~\ref{fig:neuralprobe}, the empirical transition remains sharply
defined after replacing synthetic quadratic spectra by curvature operators
extracted from trained neural networks. The lower panels of
Figure~\ref{fig:neuralprobe} further show that the orientation-resolved MF
prediction closely follows the measured midpoint across all six verified
cases, while $d^\star_{\mathrm{iso}}$ can substantially underestimate the
required dimension and the radius-only predictor is typically conservative.

Table~\ref{tab:neural_probe} quantifies these trends. The MF prediction
differs from the measured midpoint by at most $0.91\%$ across the six
verified cases. By contrast, $d^\star_{\mathrm{iso}}$ underestimates the
required dimension by as much as $21.0\times$ and returns zero in one case.
The observed range $m_{\mathrm{stiff}}\in[0.105,1]$ spans markedly different
orientation regimes, from displacements with relatively little
curvature-weighted mass in the stiffest directions to cases in which that
mass is almost entirely concentrated there. Thus, the agreement of the
orientation-resolved predictor is not confined to a narrow class of
displacement geometries, while the failures of the equal-energy
specialization become more pronounced when the displacement is strongly
aligned with stiff directions.

\paragraph{Predictor-only scale test.}
The same matrix-free pipeline is applied to ResNet-18. The model has
$P=11{,}173{,}962$ parameters. Empirical verification is not feasible on the
available 20~GB GPU because the diagnostic least-squares calculation scales
as $O(rd^2)$ time and $O(rd)$ memory per random trial. From a random
initialization, the reported predictor values are
$d^\star_{\mathrm{iso}}=8.6$ and
$\widetilde d_{\mathrm{MF}}=437$. We treat this run as a scale diagnostic,
not as a verified transition measurement.


\subsection{End-to-End RaMaN Training on MLP and CNN Models}
\label{subsec:exp_raman_training}

\paragraph{Experiment 4: optimization transition.}
We next replace exact quadratic minimization by gradient-based RaMaN
training. We compare a global seed-regenerated Gaussian map, which realizes
the uniformly random subspace used by the theory, with the default
layer-wise seed-regenerated Gaussian and Structured Hadamard Mapping (SHM)
parameterizations. Figure~\ref{fig:ramantrans} shows the resulting
training-success transitions and test-accuracy curves, while
Table~\ref{tab:raman_transition} summarizes the corresponding empirical
transition midpoints.

\begin{figure}[t]
\centering
\includegraphics[width=0.92\linewidth]{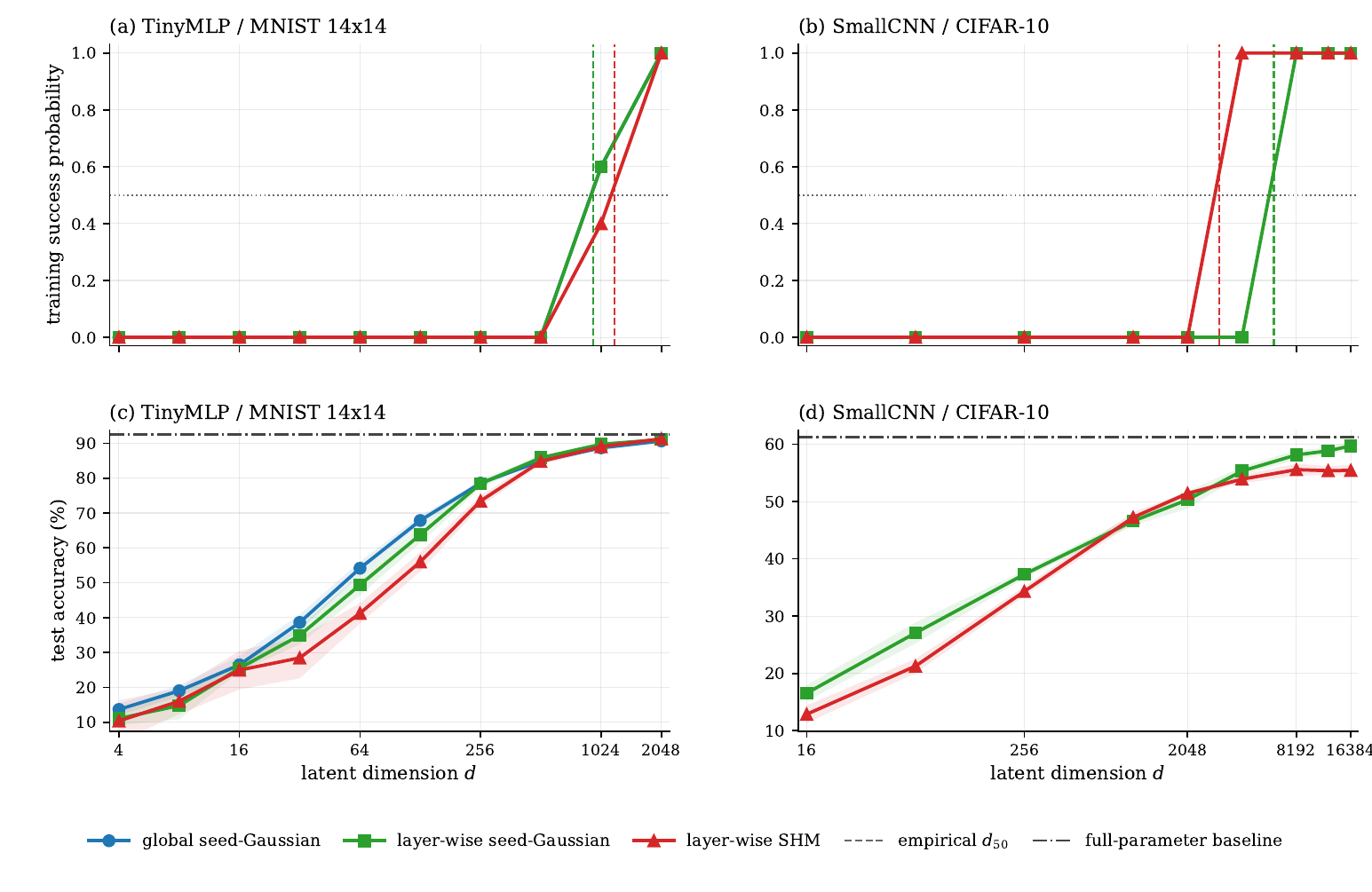}
\caption{\textbf{End-to-end RaMaN transition on MLP and CNN models.}
Top: training-success probability as a function of latent dimension.
Bottom: mean test accuracy, with the dash-dotted horizontal line denoting
the corresponding full-parameter baseline. Vertical dashed lines indicate
the empirical midpoint $d_{50}$ for each random-map construction. Both
models exhibit a pronounced optimization transition, and test accuracy
improves substantially once the transition region is crossed. For
layer-wise maps, $d=\sum_l d_l$.}
\label{fig:ramantrans}
\end{figure}

\begin{table}[t]
\caption{Empirical training-transition midpoint and predictive performance
by model and random-map construction. ``Best RaMaN accuracy'' denotes the
highest mean test accuracy observed over the evaluated latent dimensions.
The global Gaussian construction is omitted for SmallCNN because its direct
per-step cost scales as $O(dP)$.}
\label{tab:raman_transition}
\centering
\small
\setlength{\tabcolsep}{5pt}
\resizebox{\linewidth}{!}{%
\begin{tabular}{@{}llrrr@{}}
\toprule
Model & Random-map construction & $d_{50}$
& Best RaMaN accuracy & Full-parameter accuracy \\
\midrule
TinyMLP ($P=6{,}634$)
& Global seed-regenerated Gaussian
& 939 & 90.67\% & 92.55\% \\

& Layer-wise seed-regenerated Gaussian
& 939 & 91.09\% & 92.55\% \\

& Layer-wise SHM
& 1,195 & 91.27\% & 92.55\% \\

\addlinespace
SmallCNN ($P=19{,}466$)
& Layer-wise seed-regenerated Gaussian
& 6,144 & 59.71\% & 61.25\% \\

& Layer-wise SHM
& 3,072 & 55.56\% & 61.25\% \\
\bottomrule
\end{tabular}%
}
\end{table}

As shown in Figure~\ref{fig:ramantrans}, all evaluated RaMaN constructions
display a clear optimization transition: below a model-dependent latent
dimension, successful training is rare, whereas beyond the transition the
training-success probability rises rapidly toward one. The lower panels of
Figure~\ref{fig:ramantrans} further show that test accuracy improves in
parallel with this transition and approaches the full-parameter baseline
more closely as $d$ increases. Because the RaMaN sweeps and the full-parameter reference use different
training budgets and tuning protocols, these accuracy curves are intended
as descriptive references rather than as controlled generalization
comparisons; the primary quantity measured in this experiment is the
training-success transition.

Table~\ref{tab:raman_transition} quantifies both the transition location and
the predictive performance achieved over the evaluated dimension range.
For TinyMLP, the global seed-regenerated Gaussian map and the layer-wise
seed-regenerated Gaussian map have the same empirical midpoint
($d_{50}=939$), while layer-wise SHM requires a somewhat larger latent
dimension ($d_{50}=1{,}195$).
All three constructions nevertheless attain best observed test accuracies
within approximately two percentage points of the $92.55\%$ full-parameter
baseline. At $d_{50}$, however, the transition criterion only indicates
that the empirical probability of reaching the prescribed training-loss
target is $0.5$; test accuracy has not yet saturated and continues to
improve as the latent dimension $d$ increases beyond the transition.
For SmallCNN, the ordering reverses: layer-wise SHM transitions earlier
($d_{50}=3{,}072$) than the layer-wise seed-regenerated Gaussian construction
($d_{50}=6{,}144$), although the latter achieves the higher best observed
test accuracy over the evaluated dimension range ($59.71\%$ versus
$55.56\%$). Thus, transition location and eventual predictive performance
need not induce the same ordering across map families.

\subsection{Ablation Studies}
\label{subsec:exp_ablation}

Experiments~5--6 examine two practical design choices of RaMaN: the
frozen-map family and the strategy for selecting or expanding the latent
dimension.

\subsubsection{Map-Family Accuracy and Cost}
\label{subsubsec:map_ablation}

\paragraph{Experiment 5: frozen-map implementation.}
All map families are trained at a fixed dimension above the largest measured
transition midpoint for the corresponding model: $d=1{,}912$ for TinyMLP and
$d=9{,}831$ for SmallCNN. We report accuracy together with measured frozen-map
storage, optimizer-state memory, checkpoint size, peak GPU memory, and
wall-clock time. The full-parameter rows are baselines measured under the
ablation protocol and therefore need not coincide exactly with the references
in Experiment~4. Figure~\ref{fig:mapablation} summarizes the resulting
accuracy--systems-cost tradeoffs across map families, while
Table~\ref{tab:map_ablation} reports the corresponding numerical measurements.
For the nonlinear SHM ablation, we use
\[
    \phi_l(z)
    =
    z+0.5\bigl(\tanh(z)-z\bigr)
    =
    0.5z+0.5\tanh(z).
\]
This coordinate-wise map is bijective and therefore leaves the reachable
SHM subspace unchanged; the ablation tests the effect of nonlinear latent
modulation on optimization rather than an enlargement of the representable
parameter set. The full-parameter row is included as a descriptive reference rather than
as a compute-matched accuracy baseline; the primary comparisons in this
ablation are among the random-map implementations at the same latent
dimension.

\begin{figure}[t]
\centering
\includegraphics[width=\linewidth]{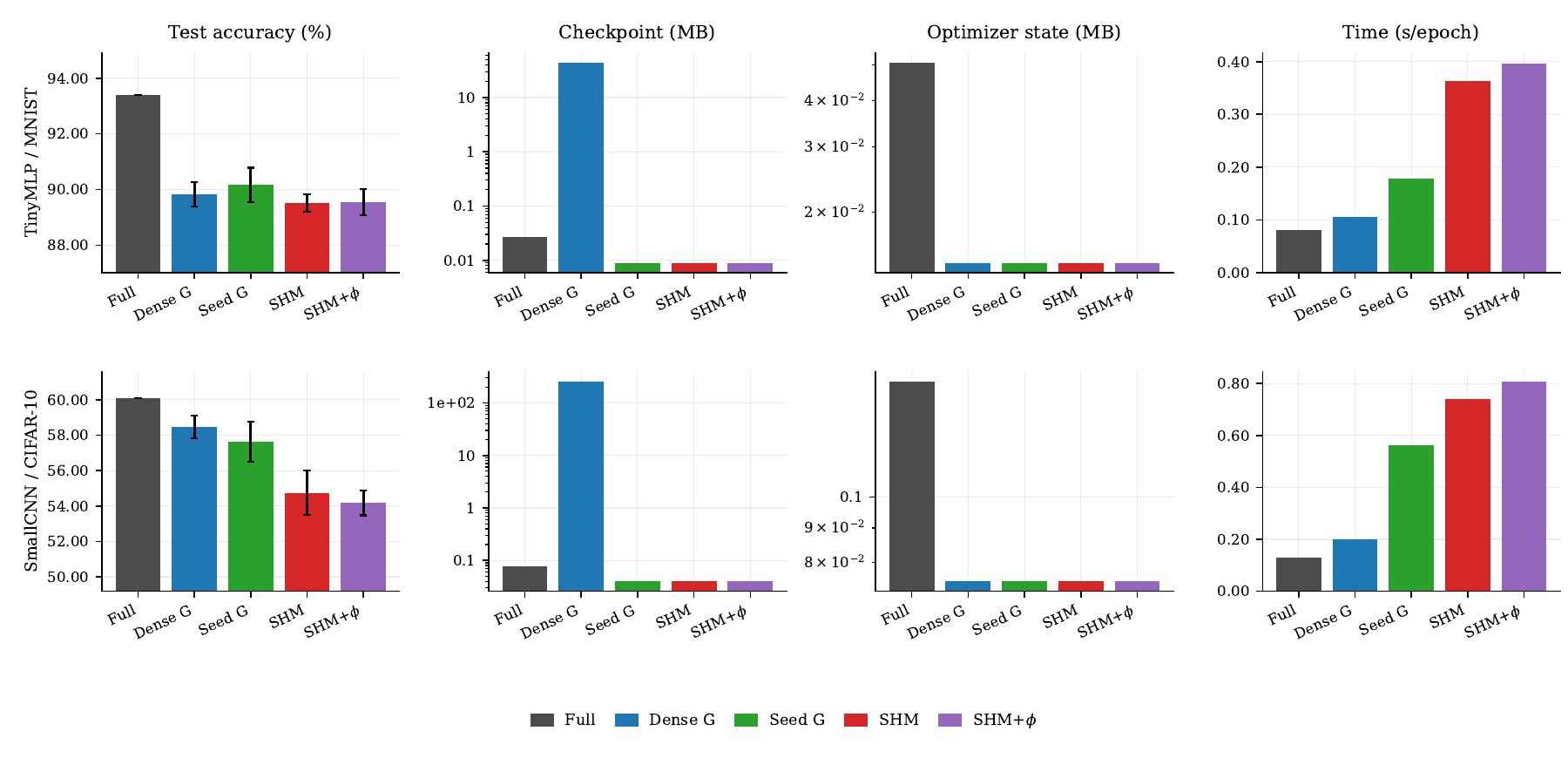}
\caption{\textbf{Map-family accuracy and systems cost.}
Test accuracy, checkpoint size, optimizer-state memory, and time per epoch for
full-parameter training and four random-map parameterizations. Matrix-free
maps eliminate dense frozen-map storage, while their accuracy and wall-clock
cost remain model dependent. Error bars denote one standard deviation over
the evaluated seeds.}
\label{fig:mapablation}
\end{figure}

\begin{table}[t]
\caption{Map-family ablation. Memory and checkpoint values are measured.
For random-map parameterizations, accuracy is reported as mean $\pm$ one
standard deviation over three seeds; each full-parameter baseline is a
single reference run and is therefore reported without an uncertainty
estimate. The full-parameter baseline directly optimizes all $P$ model
parameters without a random reparameterization map; all other rows optimize
only the latent $d$ variables.}
\label{tab:map_ablation}
\centering
\footnotesize
\setlength{\tabcolsep}{3.2pt}
\renewcommand{\arraystretch}{1.12}

\resizebox{\linewidth}{!}{%
\begin{tabular}{@{}llrrrrrrr@{}}
\toprule
Model &
Parameterization &
\shortstack{Trainable\\parameters} &
\shortstack{Frozen map\\(MB)} &
\shortstack{Optimizer\\state (MB)} &
\shortstack{Checkpoint\\(MB)} &
\shortstack{Peak GPU\\(MB)} &
\shortstack{Time\\(s/epoch)} &
\shortstack{Test accuracy\\(\%)} \\
\midrule

\shortstack[l]{TinyMLP\\MNIST}
& Full parameters
& 6,634 & 0.00 & 0.05 & 0.03 & 18 & 0.08
& $93.40$ \\

& Dense Gaussian
& 1,912 & 45.40 & 0.02 & 45.41 & 109 & 0.11
& $89.83\pm0.45$ \\

& Seed-regenerated Gaussian
& 1,912 & 0.00 & 0.02 & 0.01 & 137 & 0.18
& $90.17\pm0.62$ \\

& SHM
& 1,912 & 0.00 & 0.02 & 0.01 & 19 & 0.36
& $89.52\pm0.32$ \\

& SHM with nonlinear $\phi_l$
& 1,912 & 0.00 & 0.02 & 0.01 & 19 & 0.40
& $89.55\pm0.46$ \\

\addlinespace[3pt]

\shortstack[l]{SmallCNN\\CIFAR-10}
& Full parameters
& 19,466 & 0.00 & 0.16 & 0.08 & 47 & 0.13
& $60.10$ \\

& Dense Gaussian
& 9,831 & 267.69 & 0.08 & 267.73 & 405 & 0.20
& $58.48\pm0.65$ \\

& Seed-regenerated Gaussian
& 9,831 & 0.00 & 0.08 & 0.04 & 364 & 0.56
& $57.63\pm1.12$ \\

& SHM
& 9,831 & 0.00 & 0.08 & 0.04 & 91 & 0.74
& $54.73\pm1.25$ \\

& SHM with nonlinear $\phi_l$
& 9,831 & 0.00 & 0.08 & 0.04 & 91 & 0.81
& $54.18\pm0.71$ \\
\bottomrule
\end{tabular}%
}
\end{table}

Figure~\ref{fig:mapablation} illustrates the central implementation tradeoff:
matrix-free parameterizations remove the potentially dominant frozen-map and
checkpoint storage associated with a dense Gaussian map, but this storage
advantage does not necessarily imply lower runtime or identical predictive
performance. In particular, the accuracy and time costs of the different map
families remain model dependent.

Table~\ref{tab:map_ablation} quantifies these differences. Seed regeneration
differs from dense-Gaussian accuracy by only $0.34$ percentage points on
TinyMLP and $0.85$ points on SmallCNN while eliminating storage of the
$P\times d$ random map. Relative to the dense Gaussian implementation,
checkpoint size falls from $45.41$ to $0.01$~MB on TinyMLP and from
$267.73$ to $0.04$~MB on SmallCNN. SHM differs from the dense Gaussian
result by only $0.31$ points on TinyMLP but by $3.75$ points on SmallCNN,
again indicating model-dependent behavior. The optional nonlinear modulation yields no resolved accuracy improvement
over the corresponding SHM baseline at the evaluated seed counts. At these model sizes, however,
the storage reductions do not translate into lower wall-clock time because
map regeneration and per-layer transform overhead dominate.

\subsubsection{Sweep-Free Latent-Dimension Selection}
\label{subsubsec:dimension_selection_ablation}

\paragraph{Experiment 6: sweep-free dimension selection.}
We evaluate three strategies for obtaining a workable latent dimension
without performing a full independent sweep. The oracle strategy uses
curvature estimated at the final full-parameter solution together with the
diagnostic radius
$R=\|\theta^\dagger-\theta_{\mathrm{ref}}\|_2$, whereas the pilot strategy
uses the corresponding curvature and radius estimates from a short pilot
run. The adaptive strategy uses nested adaptive expansion and requires neither
quantity.
For the oracle and pilot strategies, we compare the conservative
linearized-tail correction of Remark~\ref{rem:tail_correction} with the
equal-tail approximation in \eqref{eq:reff_equal_tail}, which is used by
the practical default of Algorithm~\ref{alg:hessian_dim_selection}.

\begin{figure}[t]
\centering
\includegraphics[width=0.68\linewidth]{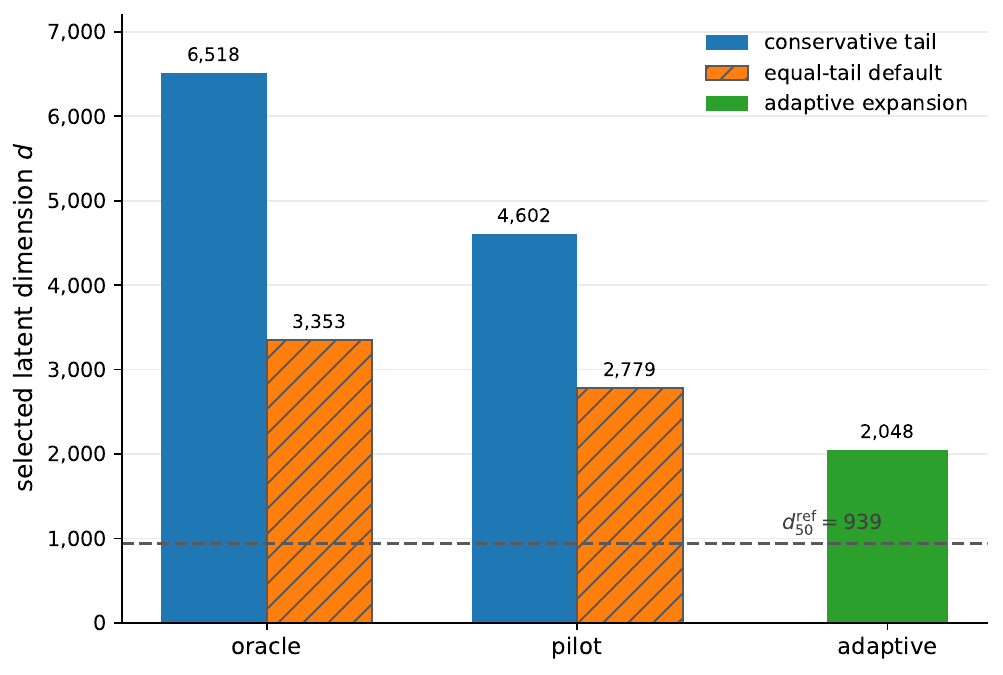}
\caption{\textbf{Sweep-free dimension selection on TinyMLP/MNIST.}
Selected latent dimensions for the conservative linear-tail rule, the
equal-tail default, and adaptive expansion. The horizontal dashed line marks
the cross-map Gaussian reference $d_{50}^{\mathrm{ref}}=939$ from
Experiment~4. All three selection experiments use global SHM maps; oracle
and pilot use curvature--radius estimates, whereas adaptive expansion
increases $d$ until the target-loss criterion is reached.}
\label{fig:dimselection}
\end{figure}

\begin{table}[t]
\caption{Sweep-free latent-dimension selection for TinyMLP/MNIST.
The oracle and pilot strategies report both the conservative linear-tail
selection and the equal-tail default. The reference
$d_{50}^{\mathrm{ref}}=939$ is the global seed-regenerated Gaussian midpoint
from Experiment~4, whereas the selection experiments use global SHM maps;
therefore, $d/d_{50}^{\mathrm{ref}}$ is a cross-map reference ratio rather
than a same-map overshoot factor. For oracle and pilot, verification is the
fraction of retraining seeds that reach the target loss. Adaptive expansion
is evaluated on its returned trajectory.}
\label{tab:dim_selection}
\centering
\small
\setlength{\tabcolsep}{4pt}
\resizebox{\linewidth}{!}{%
\begin{tabular}{@{}lrrrrr@{}}
\toprule
Strategy & $R$ & Conservative-tail $d$ & Default selected $d$
& $d/d_{50}^{\mathrm{ref}}$ & Verification \\
\midrule
Oracle radius      & 9.54 & 6,518 & 3,353 & 3.57 & 1.00 \\
Pilot radius       & 7.26 & 4,602 & 2,779 & 2.96 & 1.00 \\
Adaptive expansion & --   & --    & 2,048 & 2.18 & Reached \\
\bottomrule
\end{tabular}%
}
\end{table}

Figure~\ref{fig:dimselection} and
Table~\ref{tab:dim_selection} compare the three sweep-free strategies.
The conservative linear-tail correction remains highly conservative:
the oracle and pilot strategies select $d=6{,}518$ and $d=4{,}602$,
corresponding to approximately $98\%$ and $69\%$ of the full parameter
count $P=6{,}634$. Replacing this correction by the equal-tail default
reduces the selected dimensions to $3{,}353$ and $2{,}779$, respectively,
a reduction of approximately $40$--$49\%$ while retaining successful
retraining in all evaluated verification runs.

Adaptive expansion reaches the target at the smallest selected dimension,
$d=2{,}048$, without requiring either a displacement profile or an explicit
radius estimate. Its result, however, is obtained on the expansion trajectory
itself rather than from an independent retraining ensemble and should
therefore not be interpreted as a directly comparable success probability.

All three strategies therefore provide feasible alternatives to a full
dimension sweep, but with different tradeoffs. The equal-tail approximation
substantially reduces the conservatism of the linearized tail correction,
whereas adaptive expansion provides a curvature-free fallback when no
reliable curvature--radius estimate is available. Because the sweep
reference and the selection experiments use different random-map families,
the ratios in Table~\ref{tab:dim_selection} quantify relative scale only;
they do not measure same-map excess dimension above the true transition.

\subsection{Extensions to Larger and More Diverse Architectures}
\label{subsec:exp_extensions}

Experiments~7--9 test whether the sharp end-to-end training transition from
Experiment~4 persists for an attention-based model trained from scratch, a
pretrained language model adapted from a nonrandom reference point, and a deep
residual network.

\subsubsection{Vision Transformer Training from Scratch}
\label{subsubsec:exp_vit}

\paragraph{Experiment 7: ViT-Tiny on CIFAR-10 and CIFAR-100.}
We train a patch-based vision transformer with patch size 4, embedding
dimension 192, and depth 6. The model contains $P=1{,}806{,}538$ parameters
on CIFAR-10 and $P=1{,}823{,}908$ on CIFAR-100. Each sweep evaluates seven
latent dimensions, two layer-wise map families, and five random seeds.

\begin{figure}[t]
\centering
\includegraphics[width=0.68\linewidth]{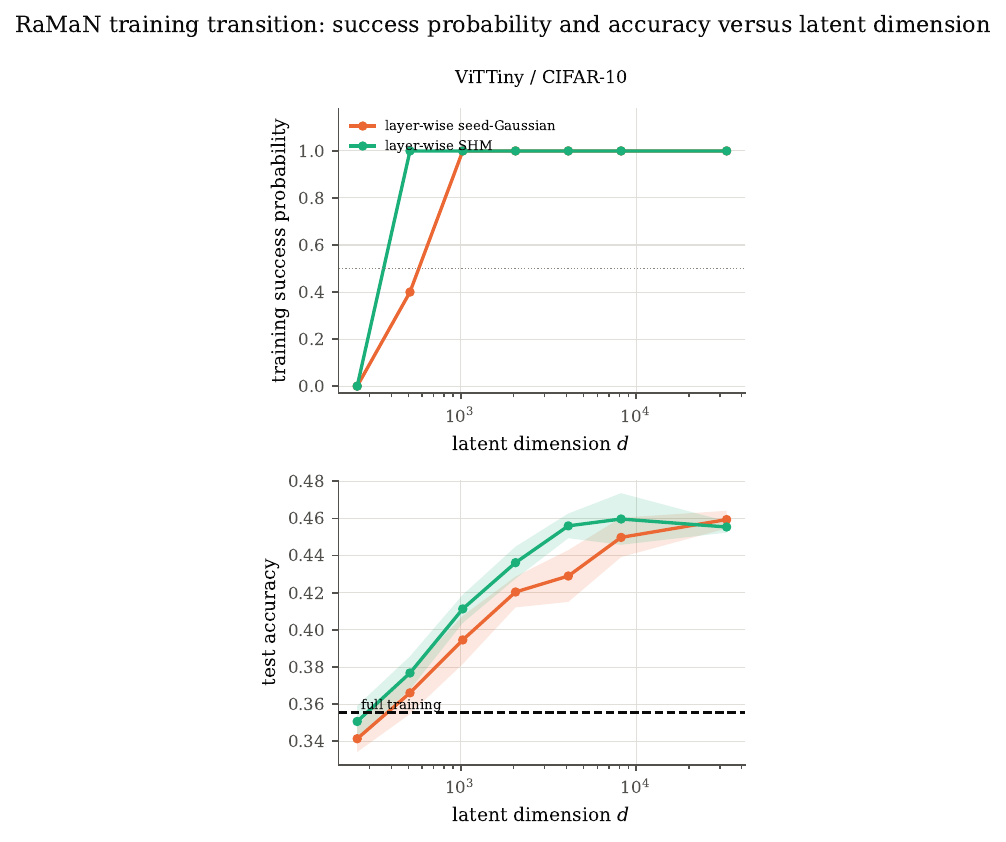}
\caption{\textbf{ViT-Tiny training transition on CIFAR-10.}
Empirical training-success probability (top) and test accuracy (bottom) as
functions of total latent dimension $d$ for layer-wise seed-regenerated
Gaussian and SHM maps. The dashed line is the full-parameter reference under
the reported baseline protocol.}
\label{fig:vit_cifar10}
\end{figure}

\begin{figure}[t]
\centering
\includegraphics[width=0.68\linewidth]{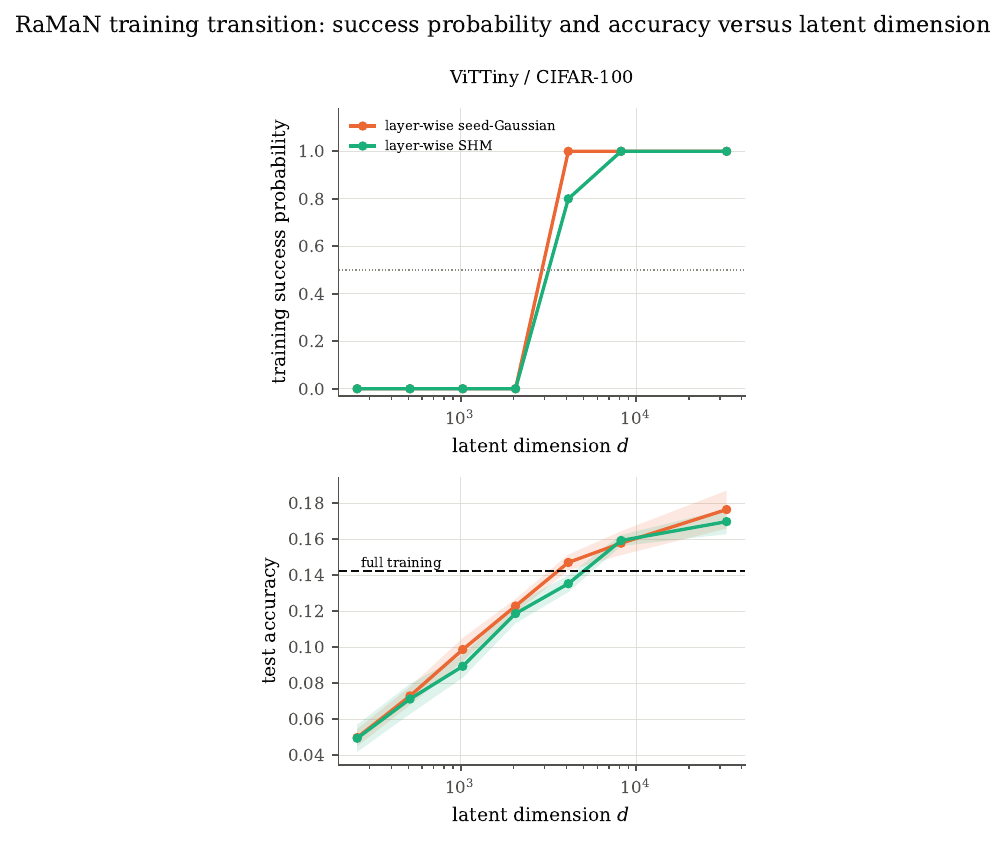}
\caption{\textbf{ViT-Tiny training transition on CIFAR-100.}
The same protocol as Figure~\ref{fig:vit_cifar10}. The transition occurs at a
larger latent dimension than on CIFAR-10 despite the nearly identical
parameter count, showing that the required dimension depends on the task and
training objective rather than on $P$ alone.}
\label{fig:vit_cifar100}
\end{figure}

\begin{table}[t]
\centering
\small
\setlength{\tabcolsep}{4pt}
\renewcommand{\arraystretch}{1.1}
\begin{tabular}{@{}llrrr@{}}
\toprule
Task &
Map construction &
$d_{50}$ &
$d_{50}/P$ &
\shortstack{Full-parameter\\accuracy (\%)} \\
\midrule

CIFAR-10
& Layer-wise seed-regenerated Gaussian
& 597
& $3.31\times10^{-4}$
& 35.55 \\

CIFAR-10
& Layer-wise SHM
& 384
& $2.13\times10^{-4}$
& 35.55 \\

\midrule

CIFAR-100
& Layer-wise seed-regenerated Gaussian
& 3,072
& $1.68\times10^{-3}$
& 14.25 \\

CIFAR-100
& Layer-wise SHM
& 3,328
& $1.82\times10^{-3}$
& 14.25 \\

\bottomrule
\end{tabular}
\caption{Empirical training-transition midpoints for ViT-Tiny.
The last column reports the full-parameter reference accuracy under the
baseline protocol used to define the target loss. The reported transition
values and reference accuracies are conditional on that protocol.}
\label{tab:vit_transition}
\end{table}

Both tasks exhibit a sharp transition, but the transition location changes
substantially with the dataset: the CIFAR-100 midpoint is approximately
$5.1\times$ higher than the CIFAR-10 midpoint for the Gaussian map and
$8.7\times$ higher for SHM. The map-family ordering also reverses across the
two tasks at fixed architecture. The relative midpoints are approximately
$2.1\times10^{-4}$--$1.8\times10^{-3}$ of $P$, much smaller than those
measured for TinyMLP and SmallCNN in Experiment~4.

The accuracy curves exceed the original full-parameter reference after the
transition, but this apparent advantage is sensitive to the reference
training protocol. Retraining the full-parameter model under a protocol
matched more closely to RaMaN raises the reference accuracy substantially
and removes the apparent RaMaN advantage. Because the original ViT sweeps
do not retain the per-seed losses needed to recompute the transition under
the stricter matched target, we therefore interpret Experiment~7 primarily
as evidence for the existence and approximate location of the training
transition under the reported protocol, rather than as an accuracy
comparison with full-parameter training.

\paragraph{Runtime at ViT scale.}
The reported production sweeps required 150.8--155.4 minutes for the
layer-wise seed-regenerated Gaussian map and 211.4--233.0 minutes for SHM on a
single V100-32GB GPU. At the very small ratios $d/P$ selected here, the fixed
per-layer Hadamard-transform overhead dominates, so SHM is slower despite its
more favorable scaling as $d$ approaches $P$.

\subsubsection{Pretrained Language-Model Fine-Tuning}
\label{subsubsec:exp_bert}

\paragraph{Experiment 8: \texttt{bert-tiny} on SST-2.}
We next set $\theta_{\mathrm{ref}}$ to a pretrained \texttt{bert-tiny}
checkpoint and fine-tune it on SST-2 through layer-wise random maps. The model
contains $P=4{,}386{,}178$ parameters. Because the word, position, and
token-type embedding tensors account for most of the model and are held fixed
at their pretrained values, only
$P_{\mathrm{rep}}=413{,}314$ parameters are reparameterized. Thus, the
relevant denominator for the reported latent fraction is $P_{\mathrm{rep}}$,
not the full model size.

\begin{figure}[t]
\centering
\includegraphics[width=0.68\linewidth]{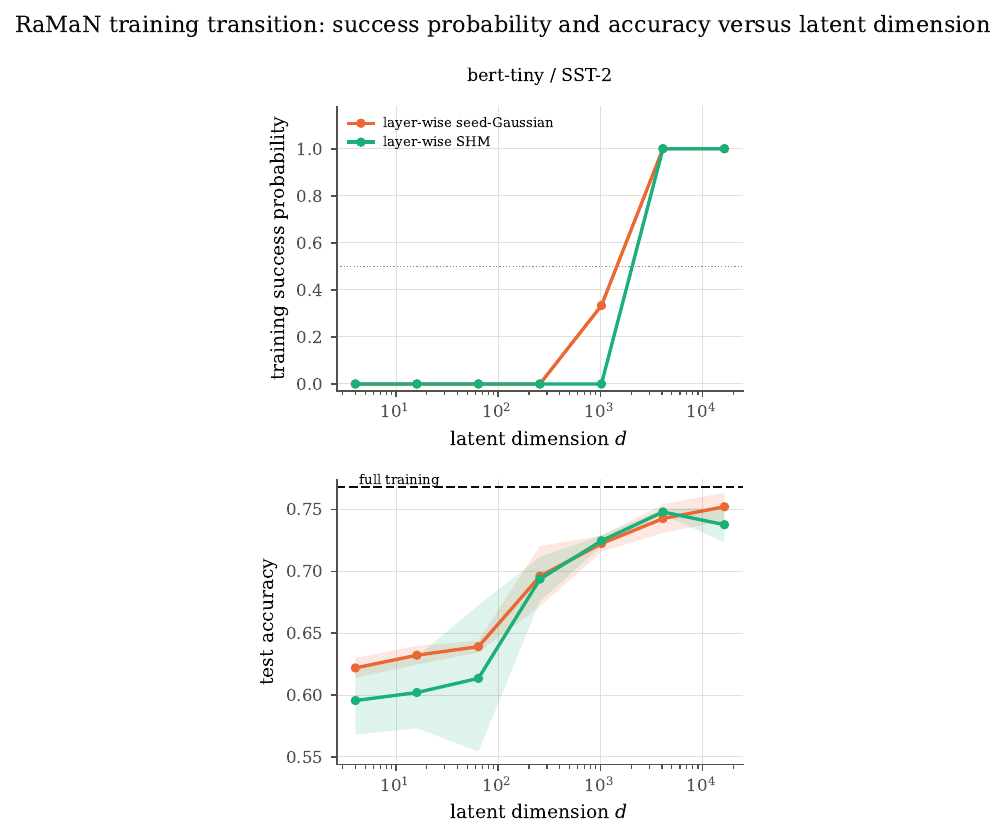}
\caption{\textbf{RaMaN fine-tuning transition for \texttt{bert-tiny} on
SST-2.}
Empirical training-success probability (top) and validation accuracy
(bottom) as functions of total latent dimension. Word, position, and token-type embeddings
remain fixed at their pretrained values. The dashed line is the full
fine-tuning reference.}
\label{fig:bert_sst2}
\end{figure}

\begin{table}[t]
\caption{Empirical fine-tuning midpoint for \texttt{bert-tiny}/SST-2.
The ratio is computed relative to the reparameterized budget
$P_{\mathrm{rep}}=413{,}314$. The full fine-tuning reference has validation
accuracy $76.83\%$.}
\label{tab:bert_transition}
\centering
\small
\setlength{\tabcolsep}{7pt}
\begin{tabular}{@{}lrrr@{}}
\toprule
Random-map construction & $d_{50}$ & $d_{50}/P_{\mathrm{rep}}$ & Full val.\ accuracy \\
\midrule
Layer-wise seed-regenerated Gaussian & 1,792 & 0.43\% & 76.83\% \\
Layer-wise SHM                       & 2,560 & 0.62\% & 76.83\% \\
\bottomrule
\end{tabular}
\end{table}

Both map families exhibit a sharp transition between the tested dimensions.
The empirical midpoint uses only $0.43$--$0.62\%$ of the reparameterized
budget, approximately $35$--$75\times$ smaller as a fraction than the
$14$--$32\%$ observed for TinyMLP and SmallCNN trained from scratch. This is
qualitatively consistent with prior evidence that pretrained models can be
adapted in very low-dimensional random subspaces
\cite{aghajanyan2021intrinsic,li2018measuring}. The comparison is not a
controlled magnitude comparison, however: the model, dataset, initialization,
training budget, frozen embedding tensors, and success criterion differ from
the earlier experiments. Because only one small pretrained language model and
one task are evaluated, these results should not be generalized to large
language models without additional experiments.

\subsubsection{Deep Residual Network Training}
\label{subsubsec:resnet}

\paragraph{Experiment 9: ResNet-18 on CIFAR-10 and CIFAR-100.}
We next examine whether the end-to-end training transition persists for a
substantially deeper architecture. The GroupNorm-based ResNet-18 contains
$P=11{,}173{,}962$ parameters on CIFAR-10 and $P=11{,}220{,}132$ on
CIFAR-100. Because seed-regenerated Gaussian maps are prohibitively expensive
for this architecture, we use the layer-wise SHM parameterization. 
Under the same transition-measurement protocol used in the preceding
experiments, both tasks exhibit a sharp training-success transition at latent
dimensions that are small fractions of the full parameter count. The measured
midpoint is approximately $6\times10^3$ on CIFAR-10 and
$d_{50}=10{,}752$ on CIFAR-100, corresponding to latent fractions of order
$10^{-4}$--$10^{-3}$ of $P$.

These numerical midpoints should be interpreted as protocol-dependent
end-to-end thresholds rather than intrinsic constants of the architecture or
dataset. In particular, retraining the full-parameter reference under an
optimization protocol matched to RaMaN substantially lowers the reference
training loss, and none of the latent dimensions in the original ResNet-18
grid reaches the resulting stricter target. Consequently, a
protocol-matched $d_{50}$ is not defined over the evaluated grid rather
than simply being shifted to a larger measured value.

The initially observed post-transition accuracy gap is not, by itself,
evidence of an unavoidable cost of random reparameterization. We therefore
repeated the CIFAR-10 comparison under a conventional training protocol
with standard data augmentation, a 5,000-example validation split, and
validation-selected learning rate and checkpoint. The epoch budget,
optimizer, weight decay, cosine schedule, training data, and seed count
were matched between RaMaN and full-parameter training. Because the
learning-rate grids used in the original comparison placed both conditions
at their respective grid boundaries, we additionally widened the
learning-rate search for both conditions.

\begin{table}[t]
\centering
\small
\caption{
Protocol-matched ResNet-18/CIFAR-10 generalization comparison with
$45{,}000$ training examples. Both RaMaN and full-parameter training use
the same augmentation, validation split, epoch budget, optimizer, weight
decay, and cosine schedule, with learning rate and checkpoint selected by
validation performance from widened learning-rate grids. The generalization
gap is training accuracy minus test accuracy. Results use three seeds.
}
\label{tab:resnet_tuned_generalization}
\setlength{\tabcolsep}{5pt}
\begin{tabular}{@{}lrrrr@{}}
\toprule
Method / $d$
& $d/P$
& Train acc.
& Test acc.
& Gen.\ gap \\
\midrule
Full parameters
& $100\%$
& $98.89\%$
& $89.95\%$
& $8.9$ \\
\midrule
RaMaN, $1{,}048{,}576$
& $9.4\%$
& $96.33\%$
& $90.33\%$
& $6.0$ \\
RaMaN, $1{,}572{,}864$
& $14.1\%$
& $99.22\%$
& $90.61\%$
& $8.6$ \\
RaMaN, $2{,}097{,}152$
& $18.8\%$
& $98.13\%$
& $90.36\%$
& $7.8$ \\
\bottomrule
\end{tabular}
\end{table}

Table~\ref{tab:resnet_tuned_generalization} reports the resulting
protocol-matched comparison at $n_{\mathrm{train}}=45{,}000$.
At $d=1{,}048{,}576$, corresponding to only $9.4\%$ of the full parameter
count, RaMaN reaches $90.33\%$ test accuracy, $0.38$ percentage points
above the $89.95\%$ full-parameter reference.
This improves the RaMaN result at the
same dimension by $1.64$ percentage points relative to the original
narrow-grid run. Notably, RaMaN attains this test accuracy despite lower
training accuracy ($96.33\%$ versus $98.89\%$), resulting in a smaller
generalization gap ($6.0$ versus $8.9$ percentage points). Thus, the
remaining accuracy deficit observed under the original learning-rate grid
is not intrinsic to the random reparameterization: under the matched tuning
protocol, the test-accuracy crossover occurs by $d/P=9.4\%$.
Because the resulting test-accuracy margin is small and the selected
full-parameter learning rate remains at the boundary of its search grid,
we interpret this result primarily as evidence of accuracy parity and
crossover rather than as a statistically resolved superiority claim.

\FloatBarrier

\subsection{Sensitivity Analyses}
\label{subsec:exp_sensitivity}

Experiments~10--11 examine two factors involved in measuring an end-to-end
training transition: the optimizer used to search the random slice and the
loss-excess tolerance used to define training success. These experiments test
the robustness of the observed transition and clarify which quantities must
accompany a reported training midpoint $d_{50}$.

\subsubsection{Optimizer Sensitivity}
\label{subsubsec:optimizer_ablation}

\paragraph{Experiment 10: AdamW versus SGD.}
The end-to-end training midpoint combines two effects: whether the random
slice contains a low-loss solution and whether the chosen optimizer finds it.
To assess the optimization contribution to the observed transition, we repeat
the layer-wise Gaussian and SHM sweeps using SGD with momentum $0.9$, while
holding the model, latent-dimension grid, random seeds, reference solution,
and success tolerance fixed. Because the
$1/\sqrt d$ learning-rate scaling used for AdamW is not an optimizer-independent
rule, SGD receives a wider per-$d$ learning-rate search.

\begin{figure}[t]
\centering
\includegraphics[width=0.68\linewidth]{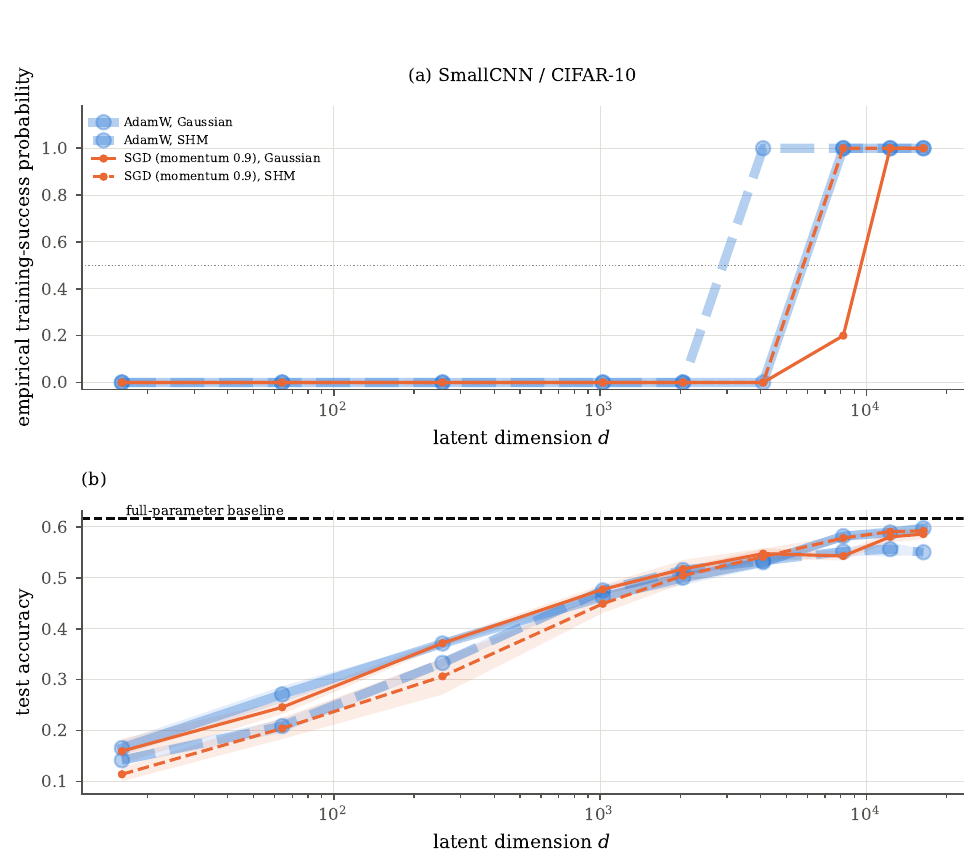}
\caption{\textbf{Optimizer sensitivity of the SmallCNN transition.}
(a) Empirical training-success probability and (b) test accuracy versus total
latent dimension under AdamW and SGD with momentum $0.9$, for layer-wise
seed-regenerated Gaussian and SHM maps. Both optimizers retain a sharp
training transition, but SGD shifts its location toward larger $d$ on
SmallCNN. TinyMLP results, for which no corresponding shift is resolved on
the evaluated grid, are summarized in Table~\ref{tab:optimizer_ablation}.}
\label{fig:optimizer_ablation}
\end{figure}

\begin{table}[t]
\caption{Empirical training midpoint under AdamW and SGD with momentum $0.9$
at the common success tolerance $\varepsilon=0.05$. The comparison is resolved
only to the spacing of the dimension grid.
$^\dagger$The SGD/Gaussian estimate for SmallCNN is potentially upper biased because
the selected learning rate reached the boundary of the search grid near the
crossing.}
\label{tab:optimizer_ablation}
\centering
\small
\setlength{\tabcolsep}{5pt}
\begin{tabular}{@{}llrrr@{}}
\toprule
Model & Layer-wise map & AdamW $d_{50}$ & SGD $d_{50}$ & SGD/AdamW \\
\midrule
TinyMLP & Seed-regenerated Gaussian & 768   & 768   & 1.00 \\
        & SHM                       & 1,195 & 1,195 & 1.00 \\
\addlinespace
SmallCNN & Seed-regenerated Gaussian & 6,144 & $9,728^{\dagger}$ & 1.58 \\
         & SHM                       & 3,072 & 6,144 & 2.00 \\
\bottomrule
\end{tabular}
\end{table}

Figure~\ref{fig:optimizer_ablation} shows that the sharp end-to-end
training transition persists under both optimizers on SmallCNN, but its
location shifts toward larger latent dimensions under SGD. The corresponding
test-accuracy curves remain broadly similar across optimizers, indicating
that the main effect of the optimizer change in this experiment is on the
dimension required to reach the prescribed training-loss criterion rather
than on a qualitatively different accuracy trend.

Table~\ref{tab:optimizer_ablation} quantifies the transition locations across
both models. The two optimizers agree within the resolution of the evaluated
grid on TinyMLP, whereas SGD requires approximately $1.6$--$2.0\times$ more
latent dimensions on SmallCNN. Thus, the existence of a sharp transition is
not specific to AdamW, but the empirical end-to-end midpoint is
optimizer dependent. End-to-end $d_{50}$ should therefore be reported
together with the optimizer and tuning protocol. This dependence does not
alter the optimizer-free least-squares transitions of
Experiments~1--3 and~12.  
For TinyMLP, the optimizer-ablation sweep uses the common comparison grid
rather than the denser transition-localization grid used in Experiment~4;
hence its AdamW midpoint is a grid-level estimate and need not coincide
exactly with the refined value reported there.

\subsubsection{Sensitivity to the Loss-Excess Tolerance}
\label{subsubsec:tolerance_sensitivity}

\paragraph{Experiment 11: varying $\varepsilon$.}
The end-to-end success event is defined by
\[
    \mathcal L(\theta_{\mathrm{RaMaN}})
    \le
    \mathcal L(\theta_{\mathrm{full}})+\varepsilon.
\]
The main experiments use $\varepsilon=0.05$. We evaluate alternative
tolerances to determine whether the observed sharp transition depends on
this particular choice of $\varepsilon$.

\begin{figure}[t]
\centering
\includegraphics[width= \linewidth]{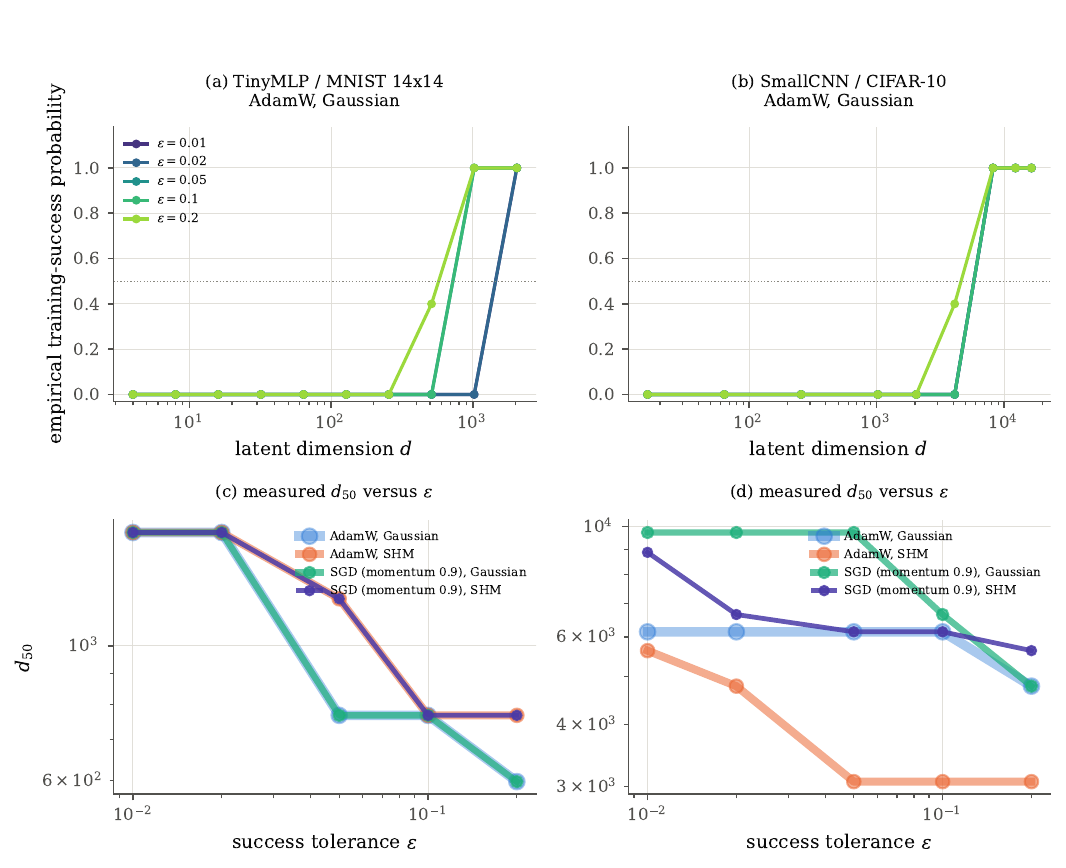}
\caption{\textbf{Sensitivity to the success tolerance.}
(a--b) Representative AdamW/Gaussian training-success curves for five values
of $\varepsilon$ on TinyMLP and SmallCNN.
(c--d) Empirical midpoint $d_{50}$ versus $\varepsilon$ for both optimizers
and both layer-wise map families. Relaxing the tolerance shifts the transition
toward smaller $d$, while the representative success curves remain sharply
localized across the tested range.}
\label{fig:tolerance_sensitivity}
\end{figure}

\begin{table}[t]
\caption{AdamW transition midpoint as the success tolerance varies over a
$20\times$ range, using the common comparison-grid sweeps from
Experiment~10. All values are recomputed from the same saved per-seed
training losses without retraining; ``Spread'' denotes the ratio of the
largest to the smallest midpoint within each row.}
\label{tab:tolerance_sensitivity}
\centering
\small
\setlength{\tabcolsep}{4pt}
\begin{tabular}{@{}llrrrrrr@{}}
\toprule
 & & \multicolumn{5}{c}{$d_{50}$ at tolerance $\varepsilon$} & \\
\cmidrule(lr){3-7}
Model & Layer-wise map & 0.01 & 0.02 & 0.05 & 0.10 & 0.20 & Spread \\
\midrule
TinyMLP & Seed-regenerated Gaussian & 1,536 & 1,536 & 768   & 768   & 597   & $2.6\times$ \\
        & SHM                       & 1,536 & 1,536 & 1,195 & 768   & 768   & $2.0\times$ \\
\addlinespace
SmallCNN & Seed-regenerated Gaussian & 6,144 & 6,144 & 6,144 & 6,144 & 4,779 & $1.3\times$ \\
         & SHM                       & 5,632 & 4,779 & 3,072 & 3,072 & 3,072 & $1.8\times$ \\
\bottomrule
\end{tabular}
\end{table}

Figure~\ref{fig:tolerance_sensitivity}(a--b) shows representative
training-success curves as the loss-excess tolerance varies over
$\varepsilon\in\{0.01,0.02,0.05,0.10,0.20\}$. As expected, relaxing the
tolerance shifts the transition toward smaller latent dimensions because
every run that succeeds at a stricter tolerance also succeeds at a looser
one. More importantly, the representative curves remain sharply localized
throughout this $20\times$ tolerance range rather than becoming diffuse as
the success criterion changes.

Figure~\ref{fig:tolerance_sensitivity}(c--d) extends this comparison to both
optimizers and both map families, while
Table~\ref{tab:tolerance_sensitivity} reports the corresponding AdamW
midpoints numerically. Across the tested tolerance range, the AdamW midpoint
changes by factors of $1.3$--$2.6$, depending on the model and map family.
Thus, the existence of a sharp end-to-end transition is robust to the
particular tolerance used to define success, although its numerical location
is necessarily tolerance dependent. Empirical midpoints should therefore be
compared only under a common $\varepsilon$.

This tolerance dependence should not be interpreted as uncertainty in a
midpoint measured at fixed $\varepsilon$. Likewise, predictor comparisons
performed at a common tolerance remain well defined. If the quadratic
predictors are compared across different tolerances, however, they must be
recomputed using the corresponding quadratic tolerance
$\varepsilon_{\mathrm q}=2\varepsilon$.

\FloatBarrier
\subsection{ViT-Scale Quadratic-Surrogate Diagnostic}
\label{subsec:exp_vit_probe}

\paragraph{Experiment 12: quadratic probes for ViT-Tiny.}
Experiments~7--9 directly measure end-to-end training transitions at
million-parameter scale but do not attach curvature-based predictors to those
transitions. We therefore apply the quadratic-probe pipeline of
Experiment~3 to the ViT-Tiny configurations used in Experiment~7. The probe
uses the GGN operator, $k=512$ leading Ritz eigenpairs, a stochastic trace
estimate for the unresolved spectral tail, and the exact least-squares hit
criterion applied to the resulting surrogate. The parameter counts match the
corresponding end-to-end models exactly.

\begin{table}[t]
\caption{Practical quadratic predictors and measured transition midpoint for
the ViT-scale GGN--Ritz surrogate. Here,
$\widetilde d_{\mathrm{MF}}$ is the practical orientation-resolved predictor,
$\widehat r_{\mathrm{eff}}^{\mathrm{eq}}$ is the practical equal-tail
orientation-uniform estimate, and ``resolved fraction'' is the fraction of
squared displacement represented on the $k=512$ computed Ritz directions.}
\label{tab:vit_probe}
\centering
\footnotesize
\setlength{\tabcolsep}{3.5pt}
\renewcommand{\arraystretch}{1.08}
\resizebox{\linewidth}{!}{%
\begin{tabular}{@{}llrrrrrrr@{}}
\toprule
Reference & Task & $P$ & $R$ & Surrogate $d_{50}$
& $\widetilde d_{\mathrm{MF}}$ & $d^\star_{\mathrm{iso}}$
& $\widehat r_{\mathrm{eff}}^{\mathrm{eq}}$ & Resolved fraction \\
\midrule
Initialization & CIFAR-10  & 1,806,538 & 46.61 & 122.9 & 122.8 & 0.0 & 5,807.7 & 0.4\% \\
Pilot         & CIFAR-10  & 1,806,538 & 27.93 & 157.5 & 158.1 & 0.0 &   813.5 & 6.0\% \\
Initialization & CIFAR-100 & 1,823,908 & 61.42 &  90.7 &  91.4 & 0.0 &   509.1 & 0.2\% \\
Pilot         & CIFAR-100 & 1,823,908 & 38.94 &  99.1 &  99.4 & 0.0 &   482.5 & 2.1\% \\
\bottomrule
\end{tabular}%
}
\end{table}

\begin{figure}[t]
\centering
\includegraphics[width=\linewidth]{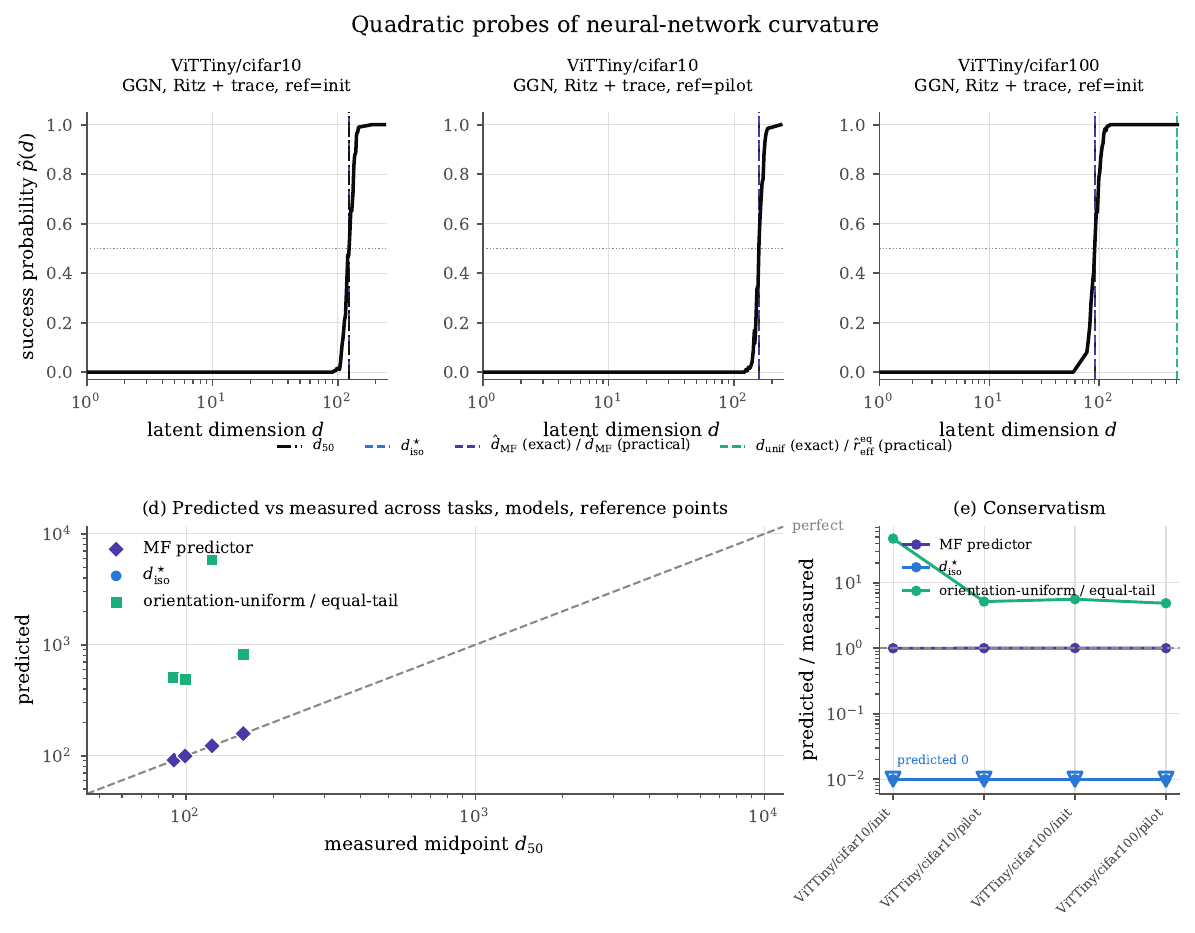}
\caption{\textbf{Quadratic-surrogate probes at ViT scale.}
The top row shows three representative success curves for random slices of
the GGN--Ritz surrogate. The bottom-left panel compares practical predictions
with the measured surrogate midpoint, and the bottom-right panel reports the
prediction-to-measurement ratio. The close agreement of
$\widetilde d_{\mathrm{MF}}$ with the diagonal is an internal consistency
result for the surrogate; it is not an independent validation of the true
network curvature or of the end-to-end transition in Experiment~7.}
\label{fig:vit_probe}
\end{figure}

Within the GGN--Ritz surrogate, the practical master-formula predictor tracks
the measured midpoint to within 0.04--0.79\% across the four cells. The
isotropic-orientation predictor evaluates to zero, whereas the practical
orientation-uniform estimate exceeds the measured midpoint by factors of
$4.9$--$47.3$. This ordering is consistent with Experiment~3 and indicates
that the equal-energy orientation model is inappropriate for these fitted
surrogates.

The result nevertheless requires a strict qualification. Only 0.2--6.0\% of
the squared displacement is resolved on the computed Ritz directions; the
remaining 94--99.8\% is represented by the tail model. Moreover, an
independent exact-Hessian-vector spot check used only three trials at each of
$\{0.5,1,2\}\times d_{50}$ and produced zero successes at the surrogate
midpoint in the tested cells. The current data therefore support the predictor
ordering and approximate scale of the surrogate transition, but they do not
establish sub-percent predictor accuracy for the true ViT curvature. More
exact probes concentrated near the surrogate midpoint are needed for that
claim.

\FloatBarrier

\subsection{Implementation and Reproducibility Considerations}
\label{subsec:exp_reproducibility}

Several controls materially affect the measured end-to-end transition. For
AdamW, holding the latent learning rate fixed across $d$ makes the induced
parameter-space step grow approximately as $\sqrt d$ and can produce a
nonmonotone apparent success curve; the AdamW sweeps therefore use the
reported dimension-normalized learning-rate initialization followed by
per-$d$ tuning. Experiment~10 shows that this scaling should not be transferred
unchanged to SGD, for which a broader per-$d$ search is used. A coarse uniform
$d$-grid can also mislocate a sharp transition, so the reported midpoints use a
two-stage grid or local interpolation around the crossing.

Experiments~10--11 further show that an end-to-end midpoint is conditional on
the optimizer and the loss-excess tolerance. These quantities should be
reported whenever $d_{50}$ is quoted. They do not affect the optimizer-free
least-squares transitions in the quadratic experiments, but a change in
$\varepsilon$ must be accompanied by the corresponding change
$\varepsilon_{\mathrm q}=2\varepsilon$ when a quadratic predictor is
recomputed.

Finally, adaptive map growth uses a fixed master seed and nested prefixes so
that
\[
    \mathcal R_{d_{\mathrm{new}}}([z_d;0])
    =
    \mathcal R_d(z_d),
\]
rather than resampling the previously trained random subspace. 
The original ViT-Tiny and ResNet-18 transition sweeps use the baseline
reference protocol specified for those experiments, so their full-parameter
accuracy comparisons should be interpreted conditionally on that protocol.
Experiment~9 additionally includes a conventionally trained, protocol-matched
ResNet-18 control to separate the training-transition measurement from
post-transition generalization. In the language-model experiment, the embedding
module is explicitly excluded from reparameterization, so the reported latent
fraction is relative to $P_{\mathrm{rep}}$ rather than the full model size.

\section{Discussion and Conclusion}
\label{sec:discussion}

\paragraph{Low-loss geometry, rather than ambient dimension, governs
accessibility.}
Consistent with the geometric perspective of prior random-subspace analyses, the ambient parameter count P alone does not determine accessibility. Our quadratic analysis further shows that, beyond the curvature spectrum, the orientation of the reference-to-solution displacement can materially alter the predicted transition. For a compact
convex low-loss region, the random-affine-slice analysis instead relates the
transition to its effective codimension, formalized through the statistical
dimension of the corresponding polar cone. In the local quadratic model,
this geometric quantity depends not only on the curvature spectrum but also
on the orientation of the reference-to-solution displacement relative to
that spectrum. The orientation-resolved master formula makes this dependence
explicit. Experiments~1--3 support this quadratic prediction across synthetic
spectra and neural-network curvature operators: the orientation-resolved
predictor closely tracks the measured quadratic transition, whereas
spectrum-only or orientation-uniform specializations can be substantially
less accurate or more conservative. These experiments validate the
quadratic residual predictions rather than the full localized conic theorem;
a direct empirical test of the latter would additionally require enforcing
or independently calibrating its localization constraint.

\paragraph{Geometric accessibility and end-to-end optimization are distinct.}
The end-to-end experiments show that the sharp transition predicted by the
geometric picture has an empirical counterpart under gradient-based
training. Pronounced training-success transitions appear across MLPs, CNNs,
vision transformers, ResNet-18, and pretrained language-model fine-tuning,
suggesting that the phenomenon is not restricted to a particular model
class or initialization regime. At the same time, the numerical transition
midpoint should not be interpreted as an intrinsic constant of the loss
landscape. Unlike the quadratic least-squares transition, an end-to-end
$d_{50}$ combines whether the random search space contains an acceptable
solution with whether the chosen optimizer can find one. Experiments~10--11
show explicitly that its location depends on the optimizer, tuning protocol,
and loss-excess tolerance, and the ResNet-18 controls further show sensitivity
to the reference-training protocol. Thus, empirical $d_{50}$ values are most
meaningful when accompanied by the map family, optimizer, success tolerance,
and reference protocol under which they were measured.

\paragraph{Training accessibility does not imply full-model generalization.}
A second distinction emerging from the experiments is between the dimension
needed to reach a prescribed training-loss target and the dimension needed
to approach the predictive performance of full-parameter training. It is
useful to denote these conceptually different scales by
$d_{\mathrm{train}}$ and $d_{\mathrm{gen}}$, respectively. The phase
transition studied in this work concerns primarily $d_{\mathrm{train}}$.
The ResNet-18 experiments show that substantially larger latent dimensions
may be required before test accuracy approaches the full-parameter
reference. At the same time, this generalization gap is not intrinsic to
random reparameterization: under a conventional protocol with widened
learning-rate tuning applied to both conditions, RaMaN at
$d=1{,}048{,}576$ ($9.4\%$ of $P$) slightly exceeds the corresponding
full-parameter test accuracy while exhibiting a smaller generalization gap.
Thus, the dimension needed for training accessibility can be much smaller
than the dimension needed for full-model predictive performance, but the
latter need not approach $P$.

\paragraph{Practical implications for RaMaN.}
The theory also provides a principled starting point for designing scalable
random reparameterizations. Dense Gaussian maps preserve the ideal random
subspace model but require an $O(dP)$ frozen mapping object. RaMaN replaces
this storage bottleneck with seed-regenerated Gaussian or structured
Hadamard maps while maintaining only $d$ trainable latent variables and
$O(d)$ optimizer state. The experiments show, however, that no map family is
uniformly preferable: Gaussian and SHM constructions can differ in transition location,
predictive behavior, and runtime, and their ordering
can reverse across models or datasets. Similarly, the sweep-free
dimension-selection experiments show that the conservative spectral-tail
correction can substantially overestimate the required dimension, while the
equal-tail approximation provides a less conservative practical default and
adaptive expansion provides a curvature-free alternative. Because these
selection experiments and the independent transition reference use different
map families, their numerical ratios should be interpreted as relative
scales rather than same-map optimality guarantees.

\paragraph{Limitations and future directions.}
Several limitations delimit the scope of the present results. First, the
exact random-affine-slice theory applies to uniformly random Gaussian
subspaces, whereas the layer-wise and SHM constructions used for scalable
training introduce additional structure. Extending the theory directly to
such product and structured random maps would reduce the current gap between
the clean geometric model and the practical implementation. Second, the
Hessian-based predictors rely on a local quadratic description of the
low-loss geometry. Exact curvature information is already expensive for
moderate networks, and the ViT-scale analysis necessarily relies on a
GGN--Ritz approximation and a modeled spectral tail. Its close predictor
agreement should therefore be viewed as an internal consistency result for
that surrogate rather than as sub-percent validation of the true
large-network curvature. Developing more reliable matrix-free curvature and
displacement estimators is important for practical one-shot dimension
selection at larger scale.

A further open problem is to characterize $d_{\mathrm{gen}}$, rather than
only the accessibility threshold $d_{\mathrm{train}}$. The present theory
explains when a random low-dimensional search space can intersect a low-loss
region, but it does not determine which point within that region gradient
optimization will select or how that choice affects generalization.
Understanding this second scale will likely require combining the geometry
developed here with optimization dynamics, implicit regularization, sample
size, and properties of the data distribution. The same distinction is
relevant to nonlinear random maps. In particular, a coordinate-wise
bijective transformation $\phi$ in a parameterization $A\phi(z)$ leaves the
reachable set equal to $\operatorname{col}(A)$ and therefore cannot enlarge
the representable parameter space. Genuinely state-dependent mappings can
bend the reachable manifold beyond a fixed linear subspace, but whether such
additional geometry improves generalization enough to offset its effects on
conditioning and optimization remains an open question. These directions,
together with evaluation on larger pretrained models and more diverse tasks,
provide natural extensions of the present framework.

\paragraph{Conclusion.}

This work develops a geometric and algorithmic framework for understanding
when neural networks can be trained through frozen random low-dimensional
reparameterizations. Building on the effective-codimension view of
random-subspace accessibility, our quadratic analysis identifies
displacement orientation as an additional determinant of the latent
dimension required for a random search space to access a low-loss region,
and the resulting quadratic predictors are supported across controlled
and neural-curvature experiments.
RaMaN translates this perspective into scalable matrix-free
random mappings that avoid the $O(dP)$ storage of dense random generators,
while end-to-end experiments show that sharp training transitions persist
across substantially different architectures and training regimes. At the
same time, the experiments establish an important boundary on the
interpretation of this phenomenon: the measured transition is conditional
on the optimization and success protocol, and crossing it does not by itself
guarantee full-model generalization. Taken together, these results provide a
principled account of when random low-dimensional training becomes feasible,
while separating that question from the distinct problems of optimization
efficiency and predictive generalization.

\printbibliography

@inproceedings{li2018measuring,
  title={Measuring the intrinsic dimension of objective landscapes},
  author={Li, Chunyuan and Farkhoor, Heerad and Liu, Rosanne and Yosinski, Jason},
  booktitle={International Conference on Learning Representations},
  year={2018}
}

@inproceedings{aghajanyan2021intrinsic,
  title     = {Intrinsic Dimensionality Explains the Effectiveness of Language Model Fine-Tuning},
  author    = {Aghajanyan, Armen and Gupta, Sonal and Zettlemoyer, Luke},
  booktitle = {Proceedings of the 59th Annual Meeting of the Association for Computational Linguistics and the 11th International Joint Conference on Natural Language Processing (Volume 1: Long Papers)},
  pages     = {7319--7328},
  year      = {2021},
  month     = aug,
  address   = {Online},
  publisher = {Association for Computational Linguistics},
  doi       = {10.18653/v1/2021.acl-long.568},
  url       = {https://aclanthology.org/2021.acl-long.568/}
}

@article{nooralinejad2022pranc,
  title        = {{PRANC}: Pseudo {RAndom} Networks for Compacting Deep Models},
  author = {Nooralinejad, Parsa and Abbasi, Ali and Koohpayegani, Soroush Abbasi and Pourahmadi Meibodi, Kossar and Khan, Rana Muhammad Shahroz and Kolouri, Soheil and Pirsiavash, Hamed},
  journal      = {arXiv preprint arXiv:2206.08464},
  year         = {2022},
  url          = {https://arxiv.org/abs/2206.08464}
}

@article{koohpayegani2023nola,
  title        = {{NOLA}: Compressing {LoRA} using Linear Combination of Random Basis},
  author       = {Koohpayegani, Soroush Abbasi and Navaneet, K. L. and Nooralinejad, Parsa and Kolouri, Soheil and Pirsiavash, Hamed},
  journal      = {arXiv preprint arXiv:2310.02556},
  year         = {2023},
  url          = {https://arxiv.org/abs/2310.02556}
}

@inproceedings{sen2026mapping,
  title        = {Mapping Networks},
  author       = {Sen, Lord and Mukherjee, Shyamapada},
  booktitle    = {Proceedings of the IEEE/CVF Conference on Computer Vision and Pattern Recognition},
  year         = {2026}
}

@article{amelunxen2014living,
  title        = {Living on the Edge: Phase Transitions in Convex Programs with Random Data},
  author       = {Amelunxen, Dennis and Lotz, Martin and McCoy, Michael B. and Tropp, Joel A.},
  journal      = {Information and Inference: A Journal of the IMA},
  volume       = {3},
  number       = {3},
  pages        = {224--294},
  year         = {2014},
  publisher    = {Oxford University Press},
  doi          = {10.1093/imaiai/iau005}
}

@incollection{gordon1988milman,
  title        = {On Milman's Inequality and Random Subspaces which Escape through a Mesh in {$\mathbb{R}^n$}},
  author       = {Gordon, Yehoram},
  booktitle    = {Geometric Aspects of Functional Analysis},
  series       = {Lecture Notes in Mathematics},
  volume       = {1317},
  pages        = {84--106},
  publisher    = {Springer},
  year         = {1988}
}

@book{vershynin2018high,
  title        = {High-Dimensional Probability: An Introduction with Applications in Data Science},
  author       = {Vershynin, Roman},
  series       = {Cambridge Series in Statistical and Probabilistic Mathematics},
  publisher    = {Cambridge University Press},
  year         = {2018},
  doi          = {10.1017/9781108231596}
}

@article{moreau1962decomposition,
  title={D{\'e}composition orthogonale d'un espace hilbertien selon deux c{\^o}nes mutuellement polaires},
  author={Moreau, Jean Jacques},
  journal={Comptes rendus hebdomadaires des s{\'e}ances de l'Acad{\'e}mie des sciences},
  volume={255},
  pages={238--240},
  year={1962}
}

@article{rubio2011spectral,
  title={Spectral convergence for a general class of random matrices},
  author={Rubio, Francisco and Mestre, Xavier},
  journal={Statistics \& Probability Letters},
  volume={81},
  number={5},
  pages={592--602},
  year={2011},
  publisher={Elsevier}
}

@article{ledoit2011eigenvectors,
  author  = {Ledoit, Olivier and P\'ech\'e, Sandrine},
  title   = {Eigenvectors of some large sample covariance matrix
             ensembles},
  journal = {Probability Theory and Related Fields},
  volume  = {151},
  pages   = {233--264},
  year    = {2011},
	doi = {10.1007/s00440-010-0298-3}
}

@inproceedings{derezinski2020surrogate,
  author    = {Derezi\'nski, Micha{\l} and Liang, Feynman and
               Mahoney, Michael W.},
  title     = {Exact expressions for double descent and implicit
               regularization via surrogate random design},
  booktitle = {Advances in Neural Information Processing Systems},
	volume    = {33},
	year      = {2020}
}

@inproceedings{hu2022lora,
  title     = {{LoRA}: Low-Rank Adaptation of Large Language Models},
  author    = {Hu, Edward J. and Shen, Yelong and Wallis, Phillip and
               Allen-Zhu, Zeyuan and Li, Yuanzhi and Wang, Shean and
               Wang, Lu and Chen, Weizhu},
  booktitle = {International Conference on Learning Representations},
  year      = {2022}
}

@inproceedings{le2013fastfood,
  title     = {Fastfood - Computing Hilbert Space Expansions in Loglinear Time},
  author    = {Le, Quoc and Sarlos, Tamas and Smola, Alexander},
  booktitle = {Proceedings of the 30th International Conference on Machine Learning},
  pages     = {244--252},
  year      = {2013},
  editor    = {Dasgupta, Sanjoy and McAllester, David},
  volume    = {28},
  number    = {3},
  series    = {Proceedings of Machine Learning Research},
  address   = {Atlanta, Georgia, USA},
  publisher = {PMLR},
  url       = {https://proceedings.mlr.press/v28/le13.html}
}

@article{ailon2009fast,
  title={The fast Johnson--Lindenstrauss transform and approximate nearest neighbors},
  author={Ailon, Nir and Chazelle, Bernard},
  journal={SIAM Journal on computing},
  volume={39},
  number={1},
  pages={302--322},
  year={2009},
  publisher={SIAM}
}

@article{tropp2011improved,
  title={Improved analysis of the subsampled randomized Hadamard transform},
  author={Tropp, Joel A},
  journal={Advances in Adaptive Data Analysis},
  volume={3},
  number={01n02},
  pages={115--126},
  year={2011},
  publisher={World Scientific}
}

@article{sagun2017empirical,
  title={Empirical analysis of the hessian of over-parametrized neural networks},
  author={Sagun, Levent and Evci, Utku and Guney, V Ugur and Dauphin, Yann and Bottou, Leon},
  journal={arXiv preprint arXiv:1706.04454},
  year={2017}
}

@inproceedings{ghorbani2019investigation,
  title={An investigation into neural net optimization via hessian eigenvalue density},
  author={Ghorbani, Behrooz and Krishnan, Shankar and Xiao, Ying},
  booktitle={International Conference on Machine Learning},
  pages={2232--2241},
  year={2019},
  organization={PMLR}
}

@article{papyan2020traces,
  title={Traces of class/cross-class structure pervade deep learning spectra},
  author={Papyan, Vardan},
  journal={Journal of Machine Learning Research},
  volume={21},
  number={252},
  pages={1--64},
  year={2020}
}

@inproceedings{larsen2022degrees,
  title     = {How Many Degrees of Freedom Do We Need to Train Deep Networks: A Loss Landscape Perspective},
  author    = {Larsen, Brett W. and Fort, Stanislav and Becker, Nic and Ganguli, Surya},
  booktitle = {International Conference on Learning Representations},
  year      = {2022}
}

@inproceedings{gressmann2020improving,
  title     = {Improving Neural Network Training in Low Dimensional Random Bases},
  author    = {Gressmann, Frithjof and Eaton-Rosen, Zach and Luschi, Carlo},
  booktitle = {Advances in Neural Information Processing Systems},
  volume    = {33},
  year      = {2020}
}

@inproceedings{kopiczko2024vera,
  title     = {{VeRA}: Vector-based Random Matrix Adaptation},
  author    = {Kopiczko, Dawid Jan and Blankevoort, Tijmen and Asano, Yuki M.},
  booktitle = {International Conference on Learning Representations},
  year      = {2024}
}

\appendix                   
\numberwithin{equation}{section}
\numberwithin{figure}{section}
\numberwithin{table}{section}

\section{General Conic Intersection Theorem}
\label{append:conicphase}

Throughout this appendix, $S\subset\mathbb{R}^P$ denotes a target set
that the random slice must reach, and $\theta_0\in\mathbb{R}^P$ denotes
the reference point. In the setting of the main text,
\[
S
=
S_\varepsilon
\cap
\overline{B}(\theta_0,R_{\mathrm{loc}}),
\]
where $R_{\mathrm{loc}}>0$ is the localization radius. The ball
localization serves two purposes. First, it makes $S$ compact whenever
$S_\varepsilon$ is closed, which is exactly what is needed for the cone
reduction below to be an exact equivalence
(Remark~\ref{rem:unbounded_pitfall} explains what can go wrong without
localization). Second, it restricts the geometric accessibility statement
to a bounded neighborhood of the reference point.

The localization radius $R_{\mathrm{loc}}$ should be distinguished from
the displacement-radius quantity
\[
R
=
\|\theta^\dagger-\theta_0\|_2
\]
used in the quadratic predictors
$r_{\mathrm{eff}}(\varepsilon,R)$ and
$d^\star_{\mathrm{iso}}$. The former specifies the bounded target region
for the rigorous conic theorem, whereas the latter measures the
reference-to-solution displacement entering the quadratic surrogate.

\begin{lemma}[Exact reduction of affine-slice intersection to a conic
intersection]
\label{lem:cone_reduction}
Let $S \subset \mathbb{R}^P$ be nonempty, compact, and convex, and let
$\theta_0 \notin S$. Define
\[
    C
    \;:=\;
    \bigl\{\, t(\theta - \theta_0) \;:\; t \ge 0,\ \theta \in S \,\bigr\}.
\]
Then:
\begin{enumerate}[label=(\roman*)]
    \item $C$ is a closed convex cone (no closure operation is needed),
    and $C$ contains no line; in particular, $C \neq \{0\}$ and $C$ is
    not a linear subspace.
    \item For every linear subspace $E \subseteq \mathbb{R}^P$,
    \[
        (\theta_0 + E) \cap S \neq \emptyset
        \qquad \Longleftrightarrow \qquad
        E \cap C \neq \{0\}.
    \]
\end{enumerate}
\end{lemma}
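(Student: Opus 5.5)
The proof works with the translated set $K:=S-\theta_0$. It is nonempty, compact and convex, and it does not contain the origin because $\theta_0\notin S$. With this notation $C=\{tk:t\ge0,\ k\in K\}=\bigcup_{t\ge0}tK$. We prove part (i) first and then part (ii).

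\emph{Convexity.} First, $C$ is closed under nonnegative scaling by construction. Take $t_1k_1,t_2k_2\in C$ with $t_1+t_2>0$. Their sum equals $(t_1+t_2)\bigl(\tfrac{t_1}{t_1+t_2}k_1+\tfrac{t_2}{t_1+t_2}k_2\bigr)$, and the convex combination in parentheses lies in $K$. So $C$ is a convex cone.

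\emph{Closedness.} This step uses compactness and $0\notin K$, and it is the one place where real care is needed. Let $m:=\min_{k\in K}\|k\|_2$ and $M:=\max_{k\in K}\|k\|_2$. Both are attained, and $0<m\le M<\infty$. Suppose $t_nk_n\to x$. Then $t_n\|k_n\|_2$ is bounded, so $t_n\le \sup_n t_n\|k_n\|_2/m<\infty$. Pass to a subsequence along which $t_n\to t\ge0$ and $k_n\to k\in K$, using compactness of $K$. Then $x=tk\in C$. No closure operation is needed.

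\emph{No lines.} Since $K\neq\emptyset$, $C\neq\{0\}$. Separate $0$ from the compact convex set $K$ strictly: there is $u$ with $\langle u,k\rangle\ge m'>0$ for all $k\in K$. For example, take $u$ to be the minimum-norm point of $K$, which gives $\langle u,k\rangle\ge\|u\|_2^2$. Then $\langle u,x\rangle>0$ for every nonzero $x\in C$. If both $x$ and $-x$ lay in $C$ with $x\neq0$, we would get $\langle u,x\rangle>0$ and $\langle u,-x\rangle>0$, which is impossible. Hence $C$ contains no line. A linear subspace other than $\{0\}$ contains lines, so $C$ is not a subspace.

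\emph{Part (ii).} Suppose first that $\theta\in(\theta_0+E)\cap S$. Then $\theta-\theta_0\in E\cap K\subseteq E\cap C$, and this vector is nonzero because $0\notin K$. Conversely, suppose $0\neq x\in E\cap C$. Write $x=tk$ with $k\in K$. We have $t>0$, since otherwise $x=0$. Then $k=x/t\in E$, because $E$ is a linear subspace. Therefore $\theta:=\theta_0+k$ lies in $S$ and in $\theta_0+E$.

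The only delicate point in the whole argument is the closedness step. It genuinely requires boundedness of $K$ and a positive gap $m$ between $K$ and the origin, which is why the localization making $S$ compact matters. Everything else is routine.
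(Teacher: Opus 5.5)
Your proof is correct and follows the paper's structure. Convexity uses the same convex-combination argument, and part (ii) is argued identically. Closedness again comes from compactness of $S$ together with $\operatorname{dist}(\theta_0,S)>0$: the paper normalizes to the compact direction set $D=\varphi(S)$ and reads off $t_k=\|v_k\|_2$, whereas you bound $t_n$ using $m=\min_{k\in K}\|k\|_2$ and extract a jointly convergent subsequence. The one real variation is the no-line step. You strictly separate $0$ from $K$ via its minimum-norm point, while the paper simply notes that $u,-u\in C$ would make $\theta_0=(s\theta_1+t\theta_2)/(s+t)$ a convex combination of points of $S$. The paper's route needs only convexity; yours also shows that $C\setminus\{0\}$ lies in an open half-space.
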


\begin{proof}
\emph{(i) Convexity.} Let $v_1 = t_1(\theta_1 - \theta_0)$ and
$v_2 = t_2(\theta_2 - \theta_0)$ with $t_1, t_2 \ge 0$ and
$\theta_1, \theta_2 \in S$, and let $\mu \in [0,1]$. Put
$s := \mu t_1 + (1-\mu) t_2$. If $s = 0$ then
$\mu v_1 + (1-\mu) v_2 = 0 \in C$. If $s > 0$ then
\[
    \mu v_1 + (1-\mu) v_2
    =
    s \,(\bar\theta - \theta_0),
    \qquad
    \bar\theta
    :=
    \frac{\mu t_1 \theta_1 + (1-\mu) t_2 \theta_2}{s} \in S
\]
by convexity of $S$, so the combination lies in $C$.

\emph{Closedness.} Since $S$ is compact and $\theta_0 \notin S$, we have
$\rho_0 := \operatorname{dist}(\theta_0, S) > 0$, so the direction map
$\varphi(\theta) := (\theta - \theta_0)/\|\theta - \theta_0\|_2$ is
continuous on $S$ and its image
$D := \varphi(S) \subset \mathbb{S}^{P-1}$ is compact. Note
$C = \{ t u : t \ge 0,\ u \in D \}$. Let $v_k = t_k u_k \to v$ with
$t_k \ge 0$, $u_k \in D$. If $v = 0$ then $v \in C$. If $v \neq 0$ then
$t_k = \|v_k\|_2 \to \|v\|_2 > 0$, and by compactness of $D$ a
subsequence $u_{k_j} \to u \in D$, whence $v = \|v\|_2\, u \in C$.

\emph{No line.} Suppose $u\neq 0$ with $u\in C$ and $-u\in C$. Then
\[
    u=s(\theta_1-\theta_0),
    \qquad
    -u=t(\theta_2-\theta_0)
\]
for some $\theta_1,\theta_2\in S$ and necessarily $s,t>0$. Adding these
two identities gives
\[
    s(\theta_1-\theta_0)+t(\theta_2-\theta_0)=0,
\]
and hence
\[
    \theta_0
    =
    \frac{s\theta_1+t\theta_2}{s+t}.
\]
Thus $\theta_0\in S$ by convexity of $S$, contradicting
$\theta_0\notin S$. Since $S\neq\emptyset$ and $\theta_0\notin S$, there
exists $\theta\in S$ with $\theta-\theta_0\neq 0$, and hence
$\theta-\theta_0\in C$. Thus $C\neq\{0\}$. Therefore, a cone containing no
line cannot be a nonzero linear subspace, and $C$ is not a linear subspace.

\emph{(ii).}
($\Rightarrow$) If $\theta \in (\theta_0 + E) \cap S$, then
$u := \theta - \theta_0 \in E$, $u = 1 \cdot (\theta - \theta_0) \in C$,
and $u \neq 0$ because $\theta_0 \notin S$.
($\Leftarrow$) If $0 \neq v \in E \cap C$, write
$v = t(\theta - \theta_0)$ with $\theta \in S$; since $v \neq 0$ we have
$t > 0$, hence $\theta = \theta_0 + v/t \in \theta_0 + E$ because $E$ is
a subspace. Thus $\theta \in (\theta_0 + E) \cap S$.
\end{proof}

\begin{remark}[Why compactness matters: a closure pitfall for unbounded
sublevel sets]
\label{rem:unbounded_pitfall}
If $S$ is closed convex but unbounded, the set
$\{t(\theta-\theta_0) : t \ge 0,\ \theta \in S\}$ need not be closed,
and taking its closure can destroy the equivalence in
Lemma~\ref{lem:cone_reduction}(ii). Example: in $\mathbb{R}^2$, let
$S = \{(x,y) : x > 0,\ y \ge 1/x\}$ (closed and convex) and
$\theta_0 = (0,0) \notin S$. The ray generated by $e_1 = (1,0)$ lies in
the \emph{closure} of the cone but not in the cone itself, and the line
$\theta_0 + \operatorname{span}\{e_1\}$ never meets $S$. This situation
is not exotic in our application: when the Hessian $H \succeq 0$ has
zero eigenvalues (flat directions), the quadratic sublevel set
$S_\varepsilon$ is an unbounded cylinder. Let 
$S
=
S_\varepsilon
\cap
\overline{B}(\theta_0,R_{\mathrm{loc}})$. 
The ball localization with radius $R_{\mathrm{loc}}$ removes this issue. 
\end{remark}

We recall the statistical dimension and the two properties used below.

\begin{definition}[Statistical dimension]
For a closed convex cone $C \subseteq \mathbb{R}^P$,
\[
    \delta(C)
    :=
    \mathbb{E}_{g \sim \mathcal{N}(0, I_P)}
    \bigl[\, \|\Pi_C(g)\|_2^2 \,\bigr],
\]
where $\Pi_C$ denotes Euclidean projection onto $C$. The polar cone is
$C^\circ := \{ v : \langle v, u\rangle \le 0 \ \forall u \in C\}$.
\end{definition}

\begin{lemma}[Two standard facts]
\label{lem:sd_facts}
\begin{enumerate}[label=(\alph*)]
    \item If $L$ is a linear subspace, $\delta(L) = \dim L$.
    \item (Moreau complementarity) For every closed convex cone $C$,
    \[
        \delta(C) + \delta(C^\circ) = P .
    \]
\end{enumerate}
\end{lemma}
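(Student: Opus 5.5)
For part (a), I would use that projection onto a linear subspace $L$ is the orthogonal projector $\Pi_L$, so $\|\Pi_L g\|_2^2 = g^\top \Pi_L g$. Taking expectations over $g\sim\mathcal{N}(0,I_P)$ gives
\[
\mathbb{E}\|\Pi_L g\|_2^2=\operatorname{tr}(\Pi_L)=\dim L .
\]
Equivalently, choose an orthonormal basis $u_1,\dots,u_k$ of $L$. Then $\|\Pi_L g\|_2^2=\sum_{j=1}^k\langle u_j,g\rangle^2$, and each term has mean one by rotational invariance of the Gaussian.

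For part (b), the plan is to invoke the Moreau decomposition~\cite{moreau1962decomposition}. For a closed convex cone $C$ and any $x\in\mathbb{R}^P$,
\[
x=\Pi_C(x)+\Pi_{C^\circ}(x),\qquad \langle \Pi_C(x),\Pi_{C^\circ}(x)\rangle=0 .
\]
Applying this to $x=g$, the Pythagorean identity gives $\|g\|_2^2=\|\Pi_C g\|_2^2+\|\Pi_{C^\circ}g\|_2^2$ pointwise. Taking expectations and using $\mathbb{E}\|g\|_2^2=P$ yields $\delta(C)+\delta(C^\circ)=P$. Integrability is not an issue, since both projections are $1$-Lipschitz and vanish at $0$, so each squared norm is bounded by $\|g\|_2^2$.

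The only step with real content is the Moreau decomposition itself, so I would sketch it for completeness. Set $p=\Pi_C(x)$. The variational inequality for projection onto a closed convex set gives $\langle x-p, c-p\rangle\le 0$ for all $c\in C$. Because $C$ is a cone, I can take $c=0$ and $c=2p$, which forces $\langle x-p,p\rangle=0$. Substituting back then gives $\langle x-p,c\rangle\le0$ for all $c\in C$, so $x-p\in C^\circ$. It remains to show that $x-p$ is the projection of $x$ onto $C^\circ$. For any $q\in C^\circ$,
\[
\langle x-(x-p),\,q-(x-p)\rangle=\langle p,q\rangle-\langle p,x-p\rangle=\langle p,q\rangle\le 0 ,
\]
and this variational characterization identifies $x-p=\Pi_{C^\circ}(x)$. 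I expect no genuine obstacle here; the closedness of $C$, which Lemma~\ref{lem:cone_reduction}(i) guarantees in our setting, is exactly what ensures that $\Pi_C$ is well defined.
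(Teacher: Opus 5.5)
Your proof is correct and follows the paper's argument: (a) via $\operatorname{tr}(\Pi_L)=\dim L$, and (b) via the Moreau decomposition, the Pythagorean identity, and $\mathbb{E}\|g\|_2^2=P$. The only difference is that you also derive the Moreau decomposition from the projection variational inequality, which you do correctly, whereas the paper simply cites it.
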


\begin{proof}
(a) $\Pi_L$ is the orthogonal projector onto $L$, so
$\mathbb{E}\|\Pi_L g\|_2^2 = \operatorname{tr}(\Pi_L) = \dim L$.
(b) By the Moreau decomposition
\cite{moreau1962decomposition}, every $g$ splits as
$g = \Pi_C(g) + \Pi_{C^\circ}(g)$ with
$\langle \Pi_C(g), \Pi_{C^\circ}(g)\rangle = 0$, hence
$\|g\|_2^2 = \|\Pi_C(g)\|_2^2 + \|\Pi_{C^\circ}(g)\|_2^2$. Taking
expectations over $g \sim \mathcal{N}(0, I_P)$ gives
$P = \delta(C) + \delta(C^\circ)$.
\end{proof}

\begin{theorem}[Random affine-slice intersection: conic phase
transition]
\label{thm:random_affine_slice_conic}
Let $S \subset \mathbb{R}^P$ be nonempty, compact, and convex, let
$\theta_0 \notin S$, and let $C$ be the closed convex cone of
Lemma~\ref{lem:cone_reduction}. Let $E \subset \mathbb{R}^P$ be a
uniformly random $d$-dimensional linear subspace; equivalently, $E$ is
the column space of a matrix $A \in \mathbb{R}^{P \times d}$ with
i.i.d.\ $\mathcal{N}(0,1)$ entries, which has rank $d$ almost surely.
Fix $\eta \in (0,1)$ and set
\[
    a_\eta := \sqrt{8 \log(4/\eta)}.
\]
Then
\begin{align}
    d + \delta(C) \;\ge\; P + a_\eta \sqrt{P}
    &\quad\Longrightarrow\quad
    \mathbb{P}\bigl[(\theta_0 + E) \cap S \neq \emptyset\bigr]
    \;\ge\; 1 - \eta,
    \label{eq:conic_hit}\\[2pt]
    d + \delta(C) \;\le\; P - a_\eta \sqrt{P}
    &\quad\Longrightarrow\quad
    \mathbb{P}\bigl[(\theta_0 + E) \cap S \neq \emptyset\bigr]
    \;\le\; \eta.
    \label{eq:conic_miss}
\end{align}
Consequently, the random-slice accessibility transition is centered at
\[
d_{\mathrm{conic}}
:=
P-\delta(C)
=
\delta(C^\circ),
\]
within an $O(\sqrt{P})$ transition window, where the last equality follows
from Lemma~\ref{lem:sd_facts}(b). Thus,
$d_{\mathrm{conic}}$ denotes the geometric transition center rather than
an exact finite-dimensional critical dimension.
\end{theorem}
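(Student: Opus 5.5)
The plan is to reduce the affine-slice event to a conic intersection and then apply the approximate kinematic formula of Amelunxen--Lotz--McCoy--Tropp \cite{amelunxen2014living}. First, by Lemma~\ref{lem:cone_reduction}(ii), for every realization of $E$,
\[
(\theta_0+E)\cap S\neq\emptyset
\quad\Longleftrightarrow\quad
E\cap C\neq\{0\}.
\]
It therefore suffices to bound $\mathbb{P}[E\cap C\neq\{0\}]$. By Lemma~\ref{lem:cone_reduction}(i), $C$ is a closed convex cone. Since $E$ is a uniformly random $d$-dimensional subspace, it has the same law as $QL_0$, where $L_0$ is a fixed $d$-dimensional subspace and $Q$ is Haar-distributed on $O(P)$. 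The Gaussian column space of $A$ has this law, because the law of $A$ is invariant under left multiplication by orthogonal matrices and $A$ has rank $d$ almost surely. Thus the event is exactly $C\cap QL_0\neq\{0\}$, which is the setting of the kinematic formula for two closed convex cones.

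Next, I will invoke the approximate kinematic formula (Theorem~I of \cite{amelunxen2014living}). For closed convex cones $C,K\subseteq\mathbb{R}^P$, one of which is not a subspace, and Haar-random $Q$, it states that
\[
\delta(C)+\delta(K)\le P-a_\eta\sqrt{P}\ \Longrightarrow\ \mathbb{P}[C\cap QK\neq\{0\}]\le\eta,
\]
\[
\delta(C)+\delta(K)\ge P+a_\eta\sqrt{P}\ \Longrightarrow\ \mathbb{P}[C\cap QK\neq\{0\}]\ge1-\eta,
\]
with exactly $a_\eta=\sqrt{8\log(4/\eta)}$. I will take $K=L_0$. Then Lemma~\ref{lem:sd_facts}(a) gives $\delta(K)=d$, and substituting this into the two implications yields \eqref{eq:conic_hit} and \eqref{eq:conic_miss} directly. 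The hypotheses need some care. The formula in that reference is stated either for general cones through the intrinsic-volume concentration, or with the proviso that at least one cone is not a subspace, which matters for the Crofton/kinematic representation in the degenerate case. Lemma~\ref{lem:cone_reduction}(i) supplies exactly this: $C$ contains no line, so it is not a subspace, and $C\neq\{0\}$. The endpoint cases $d=0$ and $d=P$ are trivial and consistent with both conditions. For $d=0$ the event is empty since $\theta_0\notin S$; for $d=P$ it is certain.

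Finally, the centering statement follows from Lemma~\ref{lem:sd_facts}(b). That result gives $P-\delta(C)=\delta(C^\circ)$, so the two thresholds read $d\ge\delta(C^\circ)\pm a_\eta\sqrt{P}$, which is a window of width $2a_\eta\sqrt P=O(\sqrt P)$ around $d_{\mathrm{conic}}$.

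The main obstacle is matching the exact form and hypotheses of the cited kinematic bound, namely the constant $a_\eta$ and the non-subspace condition, rather than reproving it. If a self-contained argument were needed instead, I would follow the route of \cite{amelunxen2014living}. The kinematic formula expresses $\mathbb{P}[C\cap QL_0\neq\{0\}]$ as $2$ times the sum of the odd-indexed tail of the intrinsic volumes of $C\times L_0$, from index $P+1$ upward. Since the intrinsic volumes of $L_0$ are a point mass at $d$, the intrinsic-volume distribution of this product is that of $C$ shifted by $d$. The intrinsic volumes of $C$ concentrate around $\delta(C)$ with tails at most $4\exp(-\lambda^2/8/(\cdots))$, and $a_\eta$ is calibrated so that $\lambda=a_\eta\sqrt P$ gives tail mass at most $\eta$. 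The step that needs the most care is the interlacing (half-tail) inequalities that handle the alternating odd-index sums.
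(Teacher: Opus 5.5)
Your proposal is correct and follows the paper's proof essentially step for step. You reduce via Lemma~\ref{lem:cone_reduction}(ii) to the event $E\cap C\neq\{0\}$, write $E=QL_0$ with Haar $Q$, apply Theorem~I of \cite{amelunxen2014living} to $(C,L_0)$ with $\delta(L_0)=d$ (the non-subspace hypothesis coming from Lemma~\ref{lem:cone_reduction}(i)), and read off the center $\delta(C^\circ)$ from Lemma~\ref{lem:sd_facts}(b); your endpoint remarks and intrinsic-volume sketch are harmless extras the paper does not need, since it simply cites the kinematic bound.
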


\begin{proof}
By Lemma~\ref{lem:cone_reduction}(ii), the event
$(\theta_0 + E)\cap S \neq \emptyset$ coincides with the event
$E \cap C \neq \{0\}$.

Write $E = Q E_0$, where $E_0$ is a fixed $d$-dimensional subspace and
$Q$ is a Haar-distributed random orthogonal matrix; this is the
definition of a uniformly random subspace, and it agrees with the
column-space description because the column space of an i.i.d.\ Gaussian
matrix is rotation invariant. A linear subspace is a closed convex cone
with $\delta(E_0) = d$ by Lemma~\ref{lem:sd_facts}(a).

By Lemma~\ref{lem:cone_reduction}(i), $C$ is a closed convex cone that
is \emph{not} a linear subspace. The approximate kinematic formula of
Amelunxen, Lotz, McCoy, and Tropp~\cite[Theorem~I]{amelunxen2014living}
applies to the pair $(C, E_0)$, one of which is not a subspace, and
states that for a Haar-random rotation $Q$:
\[
    \delta(C) + \delta(E_0) \le P - a_\eta\sqrt{P}
    \;\Longrightarrow\;
    \mathbb{P}\bigl[C \cap Q E_0 \neq \{0\}\bigr] \le \eta,
\]
\[
    \delta(C) + \delta(E_0) \ge P + a_\eta\sqrt{P}
    \;\Longrightarrow\;
    \mathbb{P}\bigl[C \cap Q E_0 \neq \{0\}\bigr] \ge 1 - \eta.
\]
Substituting $\delta(E_0) = d$ yields
\eqref{eq:conic_hit}--\eqref{eq:conic_miss}, and
Lemma~\ref{lem:sd_facts}(b) gives
$P - \delta(C) = \delta(C^\circ)$.
\end{proof}

\begin{remark}[Sharper window in the fat-cone regime]
\label{rem:sharp_window}
Theorem~\ref{thm:random_affine_slice_conic} uses the dimension-free
constant $a_\eta\sqrt{P}$ from \cite[Theorem~I]{amelunxen2014living}.
Because our random object is a subspace, the refined subspace form
\cite[Theorem~7.1]{amelunxen2014living} applies and gives a window
governed by $\omega(C):=\sqrt{\delta(C)\wedge\delta(C^\circ)}$ instead
of $\sqrt{P}$: for $\lambda \ge 0$, $d \ge \delta(C^\circ)+\lambda$
implies intersection with probability at least $1-p_C(\lambda)$, and
$d \le \delta(C^\circ)-\lambda$ implies a miss with probability at
least $1-p_C(\lambda)$, where
$p_C(\lambda) = 4\exp\bigl(-\tfrac{\lambda^2/8}{\omega^2(C)+\lambda}\bigr)$.
In the regime of interest,
$\delta(C^\circ)=d_{\mathrm{conic}}\ll P$, so
$\omega^2(C)=d_{\mathrm{conic}}$ and the transition width is
\[
O\!\left(
\sqrt{d_{\mathrm{conic}}\log(1/\eta)}
+
\log(1/\eta)
\right),
\]
which can be far narrower than $O(\sqrt{P})$ and is consistent with the
sharp empirical transitions reported by
Li et al.~\cite{li2018measuring}.
 Concretely,
$p_C(\lambda) \le \eta$ holds for
$\lambda = \max\{4\sqrt{\log(4/\eta)}\,\omega(C),\, 16\log(4/\eta)\}$.
\end{remark}

\begin{remark}[Nonconvex sublevel sets]
For nonconvex $S_\varepsilon$, the exact transition above does not
apply, but two one-sided statements survive. A \emph{miss} bound
follows from Gordon's escape-through-a-mesh theorem
\cite{gordon1988milman} applied to the spherical shadow
$\Sigma = \{(\theta-\theta_0)/\|\theta-\theta_0\|_2 : \theta \in
S_\varepsilon\}$: if $P - d$ exceeds a constant multiple of
$w(\Sigma)^2$, the slice misses with high probability. A \emph{hit}
bound follows by restricting to any compact convex subset of a single
basin (e.g., its localized quadratic ellipsoid) and applying
Theorem~\ref{thm:random_affine_slice_conic}; a union of basins can only
increase the intersection probability.
\end{remark}

\begin{corollary}[Layer-wise slices under product structure]
\label{cor:layerwise}
Suppose the localized target set contains a product
$S^{(1)}\times\cdots\times S^{(L)}$, where each
$S^{(l)}\subset\mathbb{R}^{P_l}$ is nonempty, compact, and convex with
$\theta_0^{(l)}\notin S^{(l)}$, and let $C_l$ be the cone of
Lemma~\ref{lem:cone_reduction} for $(S^{(l)},\theta_0^{(l)})$. Let
$E_1,\ldots,E_L$ be independent uniformly random subspaces of dimensions
$d_1,\ldots,d_L$. If
\[
    d_l \;\ge\; P_l-\delta(C_l)+a_{\eta/L}\sqrt{P_l},
    \qquad l=1,\ldots,L,
\]
then with probability at least $1-\eta$ every layer slice
$\theta_0^{(l)}+E_l$ intersects $S^{(l)}$ simultaneously, so the product
slice intersects the product target.
\end{corollary}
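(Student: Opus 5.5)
The plan is to apply Theorem~\ref{thm:random_affine_slice_conic} separately to each layer at confidence level $\eta/L$, and then combine the layers with a union bound. For each $l$, the pair $(S^{(l)},\theta_0^{(l)})$ satisfies the hypotheses of Lemma~\ref{lem:cone_reduction}: $S^{(l)}$ is nonempty, compact and convex, and $\theta_0^{(l)}\notin S^{(l)}$. Hence $C_l$ is a closed convex cone that is not a subspace. Since $E_l$ is a uniformly random $d_l$-dimensional subspace of $\mathbb{R}^{P_l}$, the hit implication \eqref{eq:conic_hit} applies in ambient dimension $P_l$ with $\eta$ replaced by $\eta/L$. The assumed inequality $d_l\ge P_l-\delta(C_l)+a_{\eta/L}\sqrt{P_l}$ is exactly $d_l+\delta(C_l)\ge P_l+a_{\eta/L}\sqrt{P_l}$. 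Therefore the failure event $F_l:=\{(\theta_0^{(l)}+E_l)\cap S^{(l)}=\emptyset\}$ has $\mathbb{P}[F_l]\le\eta/L$.

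Next, $\mathbb{P}[\bigcup_l F_l]\le\sum_l\mathbb{P}[F_l]\le\eta$. Independence of the $E_l$ is not needed for this step; it is only part of the model. It would give the sharper product bound $\prod_l(1-\eta/L)\ge 1-\eta$, but the union bound already suffices. So with probability at least $1-\eta$ we can choose, for every $l$, a point $\theta^{(l)}\in(\theta_0^{(l)}+E_l)\cap S^{(l)}$.

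Finally, I will check the product claim. The layer-wise slice is $\theta_0+(E_1\oplus\cdots\oplus E_L)=\prod_l(\theta_0^{(l)}+E_l)$, so the concatenated point $(\theta^{(1)},\ldots,\theta^{(L)})$ lies in this slice. It also lies in $S^{(1)}\times\cdots\times S^{(L)}$, and hence in the localized target set, which contains this product by hypothesis.

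There is no serious obstacle. The only care needed is bookkeeping: Theorem~\ref{thm:random_affine_slice_conic} is invoked in each block's own ambient dimension $P_l$ with the constant $a_{\eta/L}$, and the slice $\theta_0+\bigoplus_l E_l$ must be identified with the product of the affine blocks. If $d_l$ exceeds $P_l$, then $E_l=\mathbb{R}^{P_l}$ and the hit is trivial, so that case is consistent as well.
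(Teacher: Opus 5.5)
Your proposal is correct and is the same argument as the paper's proof: apply Theorem~\ref{thm:random_affine_slice_conic} to each layer, in its own ambient dimension $P_l$, at failure level $\eta/L$, then take a union bound. Your extra bookkeeping spells out what the paper's one-line proof leaves implicit, namely identifying the layer-wise slice with the product of the affine blocks and noting that independence is not needed for the union bound.
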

\begin{proof}
Apply Theorem~\ref{thm:random_affine_slice_conic} to each layer with
failure probability $\eta/L$ and take a union bound.
\end{proof}


\section{Heuristic Derivation of the Master Formula and Its
Specializations}
\label{append:reff_derivation}

This section derives the orientation-resolved master formula and its
isotropic-orientation and orientation-uniform specializations, and
clarifies their precise status. The derivation is exact up to a single,
clearly labeled random-matrix approximation
(Approximation~\ref{approx:det_equiv}); we verify this approximation
against an exactly solvable model in
Section~\ref{subsec:block_verification}. The resulting hierarchy contains
three predictors:
\begin{itemize}
    \item the \textbf{orientation-resolved master-formula predictor},
    which uses the complete displacement profile
    $\{\Delta_i\}_{i=1}^{P}$ relative to the curvature eigenvectors and is
    the primary quadratic-model predictor used by Mode~1 of
    Algorithm~\ref{alg:hessian_dim_selection};

    \item the \textbf{isotropic-orientation predictor}
    $d^\star_{\mathrm{iso}}$, obtained by imposing the equal-energy model
    $\Delta_i^2=R^2/P$. It is generally sharper than the
    orientation-uniform predictor and is used primarily as a diagnostic
    prediction when the isotropic-orientation model is appropriate; and

    \item the \textbf{orientation-uniform predictor}
    \[
        d_{\mathrm{unif}}(\varepsilon,R)
        =
        r_{\mathrm{eff}}(\varepsilon,R),
    \]
    obtained by removing the displacement-orientation dependence through
    a uniform upper bound. It requires only the displacement radius $R$
    and is used by Mode~2 of
    Algorithm~\ref{alg:hessian_dim_selection}.
\end{itemize}
The master-formula predictor is the most informative when a reliable
displacement profile is available. The isotropic predictor provides a
sharper falsifiable diagnostic under its equal-energy assumption, whereas
$d_{\mathrm{unif}}=r_{\mathrm{eff}}$ is the conservative radius-only
choice when directional information is unavailable. All three remain
mean-level quadratic-model predictors rather than high-probability
consequences of the conic theorem.

\subsection{Exact reduction to a least-squares residual}

Work in the quadratic model of
Theorem~\ref{thm:hessian_effective_rank}: $H \succeq 0$ with
eigendecomposition $H = \sum_{i=1}^P \lambda_i v_i v_i^\top$, minimizer
$\theta^\dagger$, reference point $\theta_0$, displacement
\[
    \Delta_0 := \theta^\dagger - \theta_0,
    \qquad
    \Delta_i := \langle \Delta_0, v_i\rangle,
    \qquad
    \|\Delta_0\|_2 \le R .
\]
Consider the random slice $\theta(z) = \theta_0 + A z$ with
$A \in \mathbb{R}^{P \times d}$ having i.i.d.\ $\mathcal{N}(0, 1/P)$
entries (the scaling is immaterial: it affects neither the column space
nor the intersection event). Since
\[
    2\bigl(\widetilde{\mathcal{L}}(\theta(z)) -
    \mathcal{L}(\theta^\dagger)\bigr)
    =
    (Az - \Delta_0)^\top H (Az - \Delta_0)
    =
    \bigl\| M z - b \bigr\|_2^2,
    \qquad
    M := H^{1/2} A,
    \quad
    b := H^{1/2}\Delta_0 ,
\]
minimizing over $z$ gives the exact identity
\begin{equation}
    \min_{z\in\mathbb{R}^d}
    2\bigl(\widetilde{\mathcal{L}}(\theta(z)) -
    \mathcal{L}(\theta^\dagger)\bigr)
    \;=\;
    \bigl\| (I - P_W)\, b \bigr\|_2^2
    \;=:\;
    \rho(A),
    \qquad
    W := \operatorname{col}(M) = H^{1/2} \operatorname{col}(A),
    \label{eq:ls_reduction}
\end{equation}
where $P_W$ is the orthogonal projector onto $W$. 
Let
\[
    \widetilde S_\varepsilon
    :=
    \left\{
    \theta:
    \frac12(\theta-\theta^\dagger)^\top
    H(\theta-\theta^\dagger)
    \le \varepsilon
    \right\}
\]
denote the unlocalized quadratic sublevel set. Then
\begin{equation}
    \left\{
    (\theta_0+\operatorname{range}(A))
    \cap
    \widetilde S_\varepsilon
    \neq \emptyset
    \right\}
    \;=\;
    \bigl\{\rho(A) \le 2\varepsilon\bigr\}.
    \label{eq:event_identity}
\end{equation}
Everything so far is exact for the unlocalized quadratic sublevel set.
To connect this residual criterion to the localized target
\[
    S
    =
    \widetilde S_\varepsilon
    \cap
    \overline B(\theta_0,R_{\mathrm{loc}})
\]
used in Theorem~\ref{thm:hessian_effective_rank}, one additionally requires
a least-squares solution $z^\star$ attaining \eqref{eq:ls_reduction} to
satisfy
\[
    \|Az^\star\|_2 \le R_{\mathrm{loc}} .
\]
Thus, the effective-rank derivation should be viewed as a predictor for the
localized conic threshold when the least-squares solution lies inside the
chosen localization ball. In experiments, this condition can be checked by
monitoring the trained displacement norm $\|\Delta\theta\|_2$.

Note that $b$ carries the full orientation
dependence: $b_i = \sqrt{\lambda_i}\,\Delta_i$, so displacement along
flat directions ($\lambda_i = 0$) contributes nothing, while
displacement along stiff directions must be canceled by the random
subspace $W$.

\subsection{Deterministic-equivalent approximation}
\label{app_sec:master_formula}

The columns of $M$ are i.i.d.\ $\mathcal{N}(0, H/P)$ vectors, so $W$ is
the span of $d$ i.i.d.\ Gaussian vectors with covariance proportional to
$H$. This subspace is \emph{not} Haar distributed: it is biased toward
large-$\lambda$ directions, which is precisely why few latent dimensions
suffice.

\begin{approximation}[Ridge-leverage surrogate for the Gaussian-span
projector]
\label{approx:det_equiv}
Let $A\in\mathbb{R}^{P\times d}$ have i.i.d.\
$\mathcal{N}(0,1/P)$ entries, and let
\[
    W=\operatorname{col}(H^{1/2}A).
\]
For $d<\operatorname{rank}(H)$, we use the surrogate
\[
    \mathbb{E}[P_W]
    \;\approx\;
    H(H+\kappa I)^{-1},
\]
where $\kappa=\kappa(d)>0$ is chosen to preserve the exact projector
trace:
\begin{equation}
    d
    =
    \operatorname{tr}\!\left[H(H+\kappa I)^{-1}\right]
    =
    \sum_{i=1}^{P}\frac{\lambda_i}{\lambda_i+\kappa}.
    \label{eq:self_consistent_d}
\end{equation}
The approximation is supported by random-projection results under
high-dimensional concentration conditions, but is used here as a
finite-dimensional surrogate outside those regimes.
\end{approximation}

\paragraph{Leave-one-out derivation.}
We next explain why the ridge-leverage form in
Approximation~\ref{approx:det_equiv} arises. The projector identities below
are exact; the approximation enters only when the leave-one-out quadratic
forms are replaced by a common deterministic equivalent. Define
\[
    M:=H^{1/2}A\in\mathbb{R}^{P\times d},
    \qquad
    W=\operatorname{col}(M).
\]
Let $r=\operatorname{rank}(H)$ and write
$H=U_r\Lambda_rU_r^\top$, where $\Lambda_r\succ0$. Then
\[
    M
    =
    U_r\Lambda_r^{1/2}(U_r^\top A).
\]
Since $A$ is Gaussian and $U_r$ has orthonormal columns,
$U_r^\top A\in\mathbb{R}^{r\times d}$ is also Gaussian and has full
column rank $d$ almost surely whenever $d\le r$. Hence, for
$d<\operatorname{rank}(H)$, the columns of $M$ are linearly independent
almost surely, and the orthogonal projector onto $W$ is
\[
    P_W
    =
    M(M^\top M)^{-1}M^\top.
\]
Since
\[
    M^\top M
    =
    A^\top H A,
\]
we define
\[
    Q:=A^\top H A\in\mathbb{R}^{d\times d},
\]
and obtain
\[
    P_W
    =
    H^{1/2}A Q^{-1}A^\top H^{1/2}.
\]

Because $H$ is symmetric positive semidefinite, it admits an
eigendecomposition
\[
    H
    =
    U\Lambda U^\top,
    \qquad
    \Lambda
    =
    \operatorname{diag}(\lambda_1,\ldots,\lambda_P),
\]
where $U$ is orthogonal. We now express all quantities in this orthonormal
eigenbasis. Under the coordinate transformation
$\widetilde A=U^\top A$, the matrix $\widetilde A$ has the same i.i.d.\
Gaussian distribution as $A$ by rotational invariance. Hence, without
loss of generality, we may work in coordinates in which
\[
    H
    =
    \operatorname{diag}(\lambda_1,\ldots,\lambda_P),
\]
and, for notational simplicity, continue to denote the transformed Gaussian
matrix by $A$.

Let $a_i^\top\in\mathbb{R}^{1\times d}$ denote the $i$th row of $A$.
In these coordinates,
\[
    Q
    =
    A^\top H A
    =
    \sum_{j=1}^{P}\lambda_j a_j a_j^\top.
\]
Moreover, the $i$th row of $H^{1/2}A$ is
$\sqrt{\lambda_i}\,a_i^\top$. Therefore,
\[
\begin{aligned}
    (P_W)_{ii}
    &=
    e_i^\top
    H^{1/2}A Q^{-1}A^\top H^{1/2}
    e_i \\
    &=
    \lambda_i a_i^\top Q^{-1}a_i.
\end{aligned}
\]
Thus, understanding $\mathbb{E}[P_W]$ reduces to approximating the
quantities
\[
    \mathbb{E}\!\left[
        \lambda_i a_i^\top Q^{-1}a_i
    \right],
    \qquad i=1,\ldots,P.
\]

To separate the dependence of $Q^{-1}$ on the row $a_i$, write
\[
    Q
    =
    \sum_{j=1}^{P}\lambda_j a_j a_j^\top
    =
    Q_{-i}+\lambda_i a_i a_i^\top,
    \qquad
    Q_{-i}
    :=
    \sum_{j\ne i}\lambda_j a_j a_j^\top.
\]

For $d<r:=\operatorname{rank}(H)$, the matrix $Q_{-i}$ is invertible
almost surely: after removing the $i$th term, at least $r-1\ge d$
independent Gaussian rows with positive weights remain when
$\lambda_i>0$, while all $r>d$ positively weighted rows remain when
$\lambda_i=0$. We may therefore define
\[
    t_i
    :=
    a_i^\top Q_{-i}^{-1}a_i.
\]
For $\lambda_i>0$, applying the Sherman--Morrison identity to
$Q=Q_{-i}+\lambda_i a_i a_i^\top$ gives
\[
    Q^{-1}
    =
    Q_{-i}^{-1}
    -
    \frac{
        \lambda_i Q_{-i}^{-1}a_i a_i^\top Q_{-i}^{-1}
    }{
        1+\lambda_i a_i^\top Q_{-i}^{-1}a_i
    }.
\]
Sandwiching this identity between $a_i^\top$ and $a_i$ yields
\[
\begin{aligned}
    a_i^\top Q^{-1}a_i
    &=
    t_i
    -
    \frac{\lambda_i t_i^2}{1+\lambda_i t_i}
    \\
    &=
    \frac{t_i}{1+\lambda_i t_i}.
\end{aligned}
\]
Therefore,
\[
    (P_W)_{ii}
    =
    \lambda_i a_i^\top Q^{-1}a_i
    =
    \frac{\lambda_i t_i}{1+\lambda_i t_i}.
\]
The same identity holds trivially when $\lambda_i=0$, since both sides
then vanish. The advantage of this representation is that $Q_{-i}$ depends only on
the rows $\{a_j:j\ne i\}$ and is therefore independent of $a_i$.

Conditional on $Q_{-i}$, the vector
$a_i\sim\mathcal{N}(0,I_d/P)$ is independent of $Q_{-i}$. Therefore,
the standard Gaussian quadratic-form identities give
\[
    \mathbb{E}[t_i\mid Q_{-i}]
    =
    \frac{1}{P}\operatorname{tr}(Q_{-i}^{-1}),
    \qquad
    \operatorname{Var}(t_i\mid Q_{-i})
    =
    \frac{2}{P^2}\operatorname{tr}(Q_{-i}^{-2}).
\]
Consequently, $t_i$ concentrates around its conditional mean whenever
$Q_{-i}^{-1}$ is not spectrally dominated by a small number of
directions. Under standard high-dimensional regularity conditions---for
example, sufficiently large stable rank
\[
    r_H:=\frac{\operatorname{tr}(H)}{\|H\|}
\]
with $r_H/d>1$ bounded away from one---leave-one-out resolvent arguments
further approximate the normalized traces
$P^{-1}\operatorname{tr}(Q_{-i}^{-1})$ by a common deterministic scalar
$\tau$; see \cite[Theorem~2 and its proof discussion]
{derezinski2020surrogate}. We therefore make the approximation
\[
    t_i\approx\tau,
    \qquad i=1,\ldots,P.
\]

 Substituting $t_i\approx\tau$ into the exact leave-one-out identity gives
\[
    \mathbb{E}[(P_W)_{ii}]
    \approx
    \frac{\lambda_i\tau}{1+\lambda_i\tau}
    =
    \frac{\lambda_i}{\lambda_i+\kappa},
    \qquad
    \kappa:=\tau^{-1}.
\]
Gaussian sign symmetry makes the off-diagonal entries of
$\mathbb{E}[P_W]$ vanish in the eigenbasis of $H$, producing
\[
    \mathbb{E}[P_W]
    \approx
    H(H+\kappa I)^{-1}.
\]
Finally, because $\operatorname{tr}(P_W)=d$ exactly, matching the trace
gives \eqref{eq:self_consistent_d}. This leave-one-out argument is the
finite-dimensional heuristic underlying
Approximation~\ref{approx:det_equiv}; rigorous approximation bounds under
stable-rank conditions are developed in
\cite{derezinski2020surrogate}.
The map $\kappa \mapsto \sum_i \lambda_i/(\lambda_i+\kappa)$ is strictly
decreasing from $\operatorname{rank}(H)$ (as $\kappa \to 0$) to $0$ (as
$\kappa\to\infty$), so \eqref{eq:self_consistent_d} defines a bijection
between $d \in (0, \operatorname{rank} H)$ and $\kappa \in (0,\infty)$.

Related deterministic-equivalent results
for sample-covariance resolvents are developed in
\cite{rubio2011spectral,ledoit2011eigenvectors}.
Approximation~\ref{approx:det_equiv} is \emph{exact} when $H \propto I$ (a Haar subspace, where
$\mathbb{E}[P_W] = (d/P) I$) and exact in the block-constant model of
Section~\ref{subsec:block_verification}. 
We use Approximation~\ref{approx:det_equiv} as a finite-dimensional
heuristic. The conic theorem provides a high-probability transition window centered at $\delta(C^\circ)$; it does not, by
itself, control the approximation error of
Approximation~\ref{approx:det_equiv} or the fluctuations of $\rho(A)$.

Combining \eqref{eq:ls_reduction} with
Approximation~\ref{approx:det_equiv} yields the \textbf{master formula}
\begin{equation}
    \mathbb{E}\bigl[\rho(A)\bigr]
    \;\approx\;
    b^\top \bigl(I-H(H+\kappa I)^{-1}\bigr)b
    \;=\;
    \sum_{i=1}^{P}
    \frac{\kappa}{\lambda_i+\kappa}\,b_i^2
    \;=\;
    \sum_{i=1}^{P}
    \frac{\kappa\lambda_i}{\lambda_i+\kappa}\,
    \Delta_i^2,
    \label{eq:master}
\end{equation}
where $Hv_i=\lambda_i v_i$,
\[
    \Delta_i
    :=
    \langle\theta^\dagger-\theta_0,v_i\rangle
\]
is the component of the reference-to-solution displacement along the
$i$th Hessian eigenvector, and
$b_i=\sqrt{\lambda_i}\,\Delta_i$. The scalar
$\kappa=\kappa(d)>0$ is determined by
\eqref{eq:self_consistent_d}.

The final expression in \eqref{eq:master} defines the
orientation-resolved mean-level residual predictor
\[
    \widehat{\rho}_{\mathrm{MF}}(d;\Delta_0)
    :=
    \sum_{i=1}^{P}
    \frac{
        \kappa(d)\lambda_i
    }{
        \lambda_i+\kappa(d)
    }
    \Delta_i^2.
\]
The corresponding predicted critical dimension is
\begin{equation}
    \widehat d_{\mathrm{MF}}
    :=
    \min
    \left\{
        d\in\{1,\ldots,r-1\}:
        \widehat{\rho}_{\mathrm{MF}}(d;\Delta_0)
        \le
        \varepsilon_{\mathrm q}
    \right\},
    \label{eq:d_master_appendix}
\end{equation}
with the same endpoint conventions as in
Definition~\ref{def:predictors}: we set
$\widehat d_{\mathrm{MF}}=0$ when
$\Delta_0^\top H\Delta_0\le\varepsilon_{\mathrm q}$, and
$\widehat d_{\mathrm{MF}}=r$ when the set in
\eqref{eq:d_master_appendix} is empty. Since
$\varepsilon_{\mathrm q}=2\varepsilon$, this is equivalent to using
$\mathbb{E}[\rho(A)]\le2\varepsilon$ as a mean-level predictor of the
transition. Converting this mean-level condition into a high-probability
success guarantee would require separate concentration control for
$\rho(A)$.

\subsection{Isotropic-orientation specialization}
\label{subsec:isotropic_prediction}

Under the typical-orientation model in which $\Delta_0$ is spread evenly
across the eigenbasis, $\Delta_i^2 = R^2/P$ for all $i$, the master
formula \eqref{eq:master} becomes
\[
    \mathbb{E}\bigl[\rho(A)\bigr]
    \;\approx\;
    \frac{R^2}{P}
    \sum_{i=1}^{P}
    \frac{\kappa\,\lambda_i}{\lambda_i + \kappa}
    \;=\;
    \frac{R^2}{P}\,\kappa\, d(\kappa),
\]
using $\sum_i \kappa\lambda_i/(\lambda_i+\kappa) = \kappa\, d(\kappa)$
from \eqref{eq:self_consistent_d}. Setting this equal to $2\varepsilon$
gives the \textbf{self-consistent isotropic predictor}
$d^\star_{\mathrm{iso}}$, defined by the pair of equations
\begin{equation}
    d^\star_{\mathrm{iso}}
    =
    \sum_{i=1}^{P}
    \frac{\lambda_i}{\lambda_i + \kappa^\star},
    \qquad\qquad
    \kappa^\star \, d^\star_{\mathrm{iso}}
    =
    \frac{2\varepsilon P}{R^2}.
    \label{eq:d_iso}
\end{equation}
The system is monotone (larger $d$ means smaller $\kappa$, hence smaller
residual) and is solved numerically by bisection on $\kappa$: given
$\kappa$, compute $d(\kappa)$ from \eqref{eq:self_consistent_d} and
check the sign of $\kappa\, d(\kappa) - 2\varepsilon P / R^2$.

Because $\kappa^\star = 2\varepsilon P / (R^2 d^\star_{\mathrm{iso}})$
exceeds $2\varepsilon/R^2$ by the factor $P/d^\star_{\mathrm{iso}} \gg
1$ in the low-dimensional regime $d^\star_{\mathrm{iso}} \ll P$, 
the isotropic cutoff is much larger than the orientation-uniform
cutoff, and therefore
\[
    d^\star_{\mathrm{iso}}
    \;\le\;
    d_{\mathrm{unif}}(\varepsilon, R),
\]
often by a substantial margin. The interpretation is a smoothed
tail-mass rule: under isotropy each direction carries excess loss
$\lambda_i R^2 / (2P)$, and the random subspace must (approximately)
cancel the stiffest directions until the \emph{total remaining} excess
drops below $\varepsilon$ --- a tail-sum criterion --- rather than
canceling every direction whose \emph{individual worst-case} excess
$\lambda_i R^2$ exceeds $\varepsilon$, which is what
$d_{\mathrm{unif}}$ does.

\emph{Interpretation.} $d^\star_{\mathrm{iso}}$ is the sharper
falsifiable prediction for phase-transition experiments under random
initialization, where the isotropic model is plausible (and directly
testable by recording the eigenbasis profile of
$H^{1/2}(\theta_0 - \theta_{\mathrm{final}})$).

\subsection{Orientation-uniform specialization}
\label{subsec:orientation_uniform}

Since $\dfrac{\kappa\lambda_i}{\lambda_i+\kappa} \le \kappa$ for every
$i$, the master formula \eqref{eq:master} gives, \emph{uniformly over
all orientations} of the displacement with $\|\Delta_0\|_2 \le R$,
\[
    \mathbb{E}\bigl[\rho(A)\bigr]
    \;\le\;
    \kappa \sum_i \Delta_i^2
    \;\le\;
    \kappa R^2.
\]

Under the deterministic-equivalent approximation, imposing
\[
    \kappa R^2
    \le
    \varepsilon_{\mathrm q}
\]
ensures that the orientation-uniform residual surrogate does not exceed
$\varepsilon_{\mathrm q}$ for any displacement satisfying
$\|\Delta_0\|_2\le R$. Setting
$\kappa=\varepsilon_{\mathrm q}/R^2$ gives
\begin{equation}
    d_{\mathrm{unif}}(\varepsilon,R)
    =
    \sum_{i=1}^{P}
    \frac{\lambda_i}
         {\lambda_i+\varepsilon_{\mathrm q}/R^2}
    =
    \sum_{i=1}^{P}
    \frac{\lambda_iR^2}
         {\lambda_iR^2+\varepsilon_{\mathrm q}}.
    \label{eq:d_unif}
\end{equation}

This orientation-uniform bound can be nearly tight when the displacement
is concentrated along a sufficiently stiff direction. In particular, if
$\Delta_0=Rv_j$ and
$\lambda_j\gg\varepsilon_{\mathrm q}/R^2$, then
\[
    \frac{\kappa\lambda_j}{\lambda_j+\kappa}R^2
    \approx
    \kappa R^2,
\]
so the required dimension approaches
$d_{\mathrm{unif}}$. Intuitively, a random isotropic subspace cannot
selectively target one prescribed stiff direction; accurately capturing
that direction requires a sufficiently large ridge leverage, while also
allocating leverage to directions of comparable or greater curvature.

\emph{Interpretation.}
$r_{\mathrm{eff}}(\varepsilon,R)$ is the orientation-uniform mean-level
predictor. It is a conservative radius-only choice and can be approached
when the displacement concentrates along sufficiently stiff curvature
directions.

\subsection{Verification in an exactly solvable block model}
\label{subsec:block_verification}

Let $H = \lambda \bigl(I_k \oplus 0_{P-k}\bigr)$: $k$ stiff directions
of equal curvature $\lambda$ and $P-k$ exactly flat directions, with
isotropic displacement $\Delta_i^2 = R^2/P$.

\emph{Exact computation.} Here $b$ is supported on the stiff block with
$\|b\|_2^2 = \lambda k R^2 / P$, and the columns of $M$ are i.i.d.\
Gaussian vectors supported on the stiff block, so $W$ is a
\emph{uniformly random} $d$-dimensional subspace of the $k$-dimensional
stiff space (for $d \le k$). For a fixed vector and a Haar $d$-subspace
of $\mathbb{R}^k$,
$\mathbb{E}\|(I - P_W) b\|_2^2 = (1 - d/k)\,\|b\|_2^2$. Hence
\[
    \mathbb{E}[\rho]
    =
    \Bigl(1 - \frac{d}{k}\Bigr) \frac{\lambda k R^2}{P}
    =
    \frac{\lambda (k - d) R^2}{P}
    \;\le\; 2\varepsilon
    \quad\Longleftrightarrow\quad
    d \;\ge\; k - \frac{2\varepsilon P}{\lambda R^2}.
\]

\emph{Deterministic-equivalent route.} Equation
\eqref{eq:self_consistent_d} gives $d = k\lambda/(\lambda+\kappa)$,
i.e., $\kappa = \lambda (k-d)/d$, and the master formula
\eqref{eq:master} evaluates to
\[
    \mathbb{E}[\rho]
    \approx
    \frac{\kappa\lambda}{\lambda+\kappa}
    \cdot \frac{k R^2}{P}
    =
    \lambda\,\frac{k-d}{k}\cdot\frac{k R^2}{P}
    =
    \frac{\lambda (k-d) R^2}{P},
\]
in exact agreement. Solving \eqref{eq:d_iso} likewise returns
\[
d^\star_{\mathrm{iso}}
=
\left[
k-\frac{2\varepsilon P}{\lambda R^2}
\right]_+,
\]
with the resulting value rounded upward when an integer latent dimension is
required. In this isotropic block model, the orientation-resolved
master-formula prediction coincides with
$d^\star_{\mathrm{iso}}$. For comparison, the orientation-uniform
dimension is
\[
    d_{\mathrm{unif}}
    =
    \frac{k\lambda R^2}{\lambda R^2+2\varepsilon}
    \approx k
\]
when $\lambda R^2\gg\varepsilon$. The two specialized predictors
$d^\star_{\mathrm{iso}}$ and $d_{\mathrm{unif}}$ coincide in the
strict-tolerance limit
$\varepsilon\ll\varepsilon_0:=\lambda kR^2/(2P)$ and separate as
$\varepsilon$ approaches $\varepsilon_0$, where
$d^\star_{\mathrm{iso}}\to0$ while
$d_{\mathrm{unif}}$ remains approximately $k$.

\subsection{Remarks}

\begin{remark}[Ball localization is typically inactive]
The least-squares minimizer $z^\star$ of \eqref{eq:ls_reduction} is expected
to produce a parameter displacement $Az^\star$ with norm comparable to the
reference-to-solution displacement scale $R$ in the regimes above; e.g., in
the block model of Section~\ref{subsec:block_verification}, one computes
$\|Az^\star\|_2 = O(R\sqrt{d/k})$ for the flat-direction component and
$O(R)$ overall. Thus, when the localization radius is chosen as
$R_{\mathrm{loc}}=cR$ for a moderate constant $c$, the ball constraint is
expected to be inactive. Empirically, $\|\Delta\theta\|_2$ should be
monitored during training as a check.
\end{remark}

\begin{remark}[Rank saturation]
Since $b = H^{1/2}\Delta_0 \in \operatorname{range}(H)$ always, taking
$d \ge \operatorname{rank}(H)$ drives $\rho \to 0$ in the model; 
hence the zero-residual threshold is at most
$\operatorname{rank}(H)$,  and
\eqref{eq:self_consistent_d} is used on the domain
$d < \operatorname{rank}(H)$.
\end{remark}

\begin{remark}[Relation to the conic theorem]
The event analyzed in this section is the quadratic-model analogue of the
affine-slice intersection event in
Theorem~\ref{thm:random_affine_slice_conic}. More precisely,
\eqref{eq:event_identity} gives the exact residual characterization for the
unlocalized quadratic sublevel set $\widetilde S_\varepsilon$. When the
corresponding least-squares solution also lies inside the localization ball,
this event coincides with the localized intersection event used in the
conic theorem.

The conic theorem gives a rigorous transition window centered at the
statistical-dimension quantity $\delta(C^\circ)$ associated with the
localized cone
\[
    C
    =
    \operatorname{cone}
    \left(
        \left(
            \widetilde S_\varepsilon
            \cap
            \overline B(\theta_0,R_{\mathrm{loc}})
        \right)
        -
        \theta_0
    \right).
\]
The three quantities derived in this section,
\[
    \widehat d_{\mathrm{MF}},
    \qquad
    d^\star_{\mathrm{iso}},
    \qquad
    d_{\mathrm{unif}}
    =
    r_{\mathrm{eff}},
\]
should therefore be viewed as mean-level Hessian-based surrogates for this
conic phase-transition center under different levels of displacement
information. The orientation-resolved predictor
$\widehat d_{\mathrm{MF}}$ retains the complete displacement profile
$\{\Delta_i\}$ and is the primary prediction when that profile is
available. The isotropic predictor $d^\star_{\mathrm{iso}}$ replaces the
profile with the equal-energy model $\Delta_i^2=R^2/P$, whereas
$d_{\mathrm{unif}}=r_{\mathrm{eff}}$ removes the orientation dependence
through a uniform upper bound and requires only the displacement radius
$R$. None of these mean-level predictors, by itself, inherits the
high-probability guarantee of the conic theorem; their accuracy is tested
empirically.
\end{remark}

\begin{remark}[Experimental guidance]
Phase-transition experiments should report all predictors supported by the
available displacement information. When the complete displacement profile
$\{\Delta_i\}$ is available or can be estimated reliably,
$\widehat d_{\mathrm{MF}}$ is the primary prediction and should be compared
directly with the measured critical dimension. The same experiments should
also report $d^\star_{\mathrm{iso}}$ as a sharper diagnostic under the
isotropic-orientation model and
\[
    d_{\mathrm{unif}}
    =
    r_{\mathrm{eff}}
\]
as the conservative orientation-uniform, radius-only prediction.

Under controlled isotropic displacements, the measured transition is
expected to be close to $d^\star_{\mathrm{iso}}$. As the displacement
energy is increasingly concentrated along stiff curvature directions,
$\widehat d_{\mathrm{MF}}$ should track the resulting change in the
critical dimension, while $d_{\mathrm{unif}}$ provides the
orientation-uniform mean-level upper prediction. When only a reliable
radius estimate $R$ is available, $d_{\mathrm{unif}}$ is the operational
prediction used by Mode~2 of
Algorithm~\ref{alg:hessian_dim_selection}.
\end{remark}



\end{document}